\definecolor{darkgreen}{rgb}{0.0,0.5,0.0}
\newcommand{\purple}{\color{purple}}
\newcommand{\teal}{\color{teal}}
\renewcommand*{\backrefalt}[4]{%
    \ifcase #1 \footnotesize{(Not cited.)}%
    \or        \footnotesize{(Cited on page~#2)}%
    \else      \footnotesize{(Cited on pages~#2)}%
    \fi}
\newtheorem{theorem}{Theorem}
\newtheorem{assumption}{Assumption}
\theoremstyle{definition}
\newtheorem{definition}{Definition}
\newtheorem{proposition}{Proposition}
\newtheorem{remark}{Remark}
\newtheorem{example}{Example}
\newcommand{\E}[1]{\mathbb{E}\left[#1\right]}
\newcommand{\EE}{\mathbb{E}}
\newcommand{\cB}{\mathcal B}
\newcommand{\cG}{\mathcal G}
\newcommand{\cX}{\mathcal X}
\newcommand{\mA}{\mathbf{A}}
\newcommand{\mB}{\mathbf{B}}
\newcommand{\mH}{\mathbf{H}}
\newcommand{\mI}{\mathbf{I}}
\newcommand{\mU}{\mathbf{U}}
\newcommand{\mV}{\mathbf{V}}
\newcommand{\mSigma}{\mathbf{\Sigma}}
\newcommand{\R}{\mathbb R}
\newcommand{\hx}{\widehat{x}}
\newcommand{\hg}{\widehat{g}}
\newcommand{\og}{\overline{g}}
\newcommand{\eqdef}{\stackrel{\text{def}}{=}}
\newcommand{\lrin}{\gamma_{w}}
\newcommand{\lrout}{\gamma_{\theta}}
\DeclareMathOperator{\argmin}{argmin}
\DeclareMathOperator{\Prev}{prev}
\def\<#1,#2>{\langle #1,#2\rangle}
\title{
    Partially Personalized Federated Learning:\\
    Breaking the Curse of Data Heterogeneity
}
\date{}
\author{%
  Konstantin Mishchenko\\
  Samsung AI Center\\
  Cambridge, UK\\
  \And
  Rustem Islamov\\
  Institut Polytechnique de Paris\\
  Palaiseau, France
    \AND
  Eduard Gorbunov \\
  MBZUAI\\
  Abu Dhabi, UAE
  \And
  Samuel Horváth\\
  MBZUAI\\
  Abu Dhabi, UAE
}
\begin{document}
\maketitle
\begin{abstract}
    We present a partially personalized formulation of Federated Learning (FL) that strikes a balance between the flexibility of personalization and cooperativeness of global training. In our framework, we split the variables into global parameters, which are shared across all clients, and individual local parameters, which are kept private. We prove that under the right split of parameters, it is possible to find global parameters that allow each client to fit their data perfectly, and refer to the obtained problem as overpersonalized. For instance, the shared global parameters can be used to learn good data representations, whereas the personalized layers are fine-tuned for a specific client. Moreover, we present a simple algorithm for the partially personalized formulation that offers significant benefits to all clients. In particular, it breaks the curse of data heterogeneity in several settings, such as training with local steps, asynchronous training, and Byzantine-robust training.
\end{abstract}

\section{Introduction}
Federated Learning has emerged as a promising approach to address data privacy concerns and enable distributed learning in various applications, such as healthcare, finance, and mobile devices~\cite{konevcny2016federated}. In this approach, multiple parties collaboratively train a machine learning model without sharing their raw data with each other, instead, only sharing the model updates with a central server. Despite its potential, Federated Learning faces significant challenges in optimizing the models due to the heterogeneity of the clients' data, the heterogeneity of the devices involved in training, and the communication constraints~\cite{mcmahan2017communication}.

Collaborative Federated Learning aims at uniting a number of private-data holders in order to find together a joint set of parameters $x^*$ that minimizes their private loss functions:
\begin{equation*}
    \textrm{find } {\teal x^*}\ \textrm{s.t.\ } f_m({\teal x^*})\approx \arg\min_x f_m(x)\ \textrm{for all } m. \qquad (\textrm{Non{-}personalized FL})
\end{equation*}
This formulation, however, is often too restrictive as there might not exist a common model $x^*$ that fits all participating clients. FL, therefore, often relies on private \emph{fine-tuning} or \emph{personalization} to find individual $x_m^*$ for every client $m$. This leads to a fully-personalized formulation of FL
\begin{equation*}
    \textrm{find } {\purple x_{1}^*}, {\purple x_{ 2}^*},\dotsc\ \textrm{s.t.\ } f_m({\purple x_{ m}^*})\approx \arg\min_x f_m(x)\ \textrm{for all } m.\qquad (\textrm{Personalized FL})
\end{equation*}
This formulation offers a lot of flexibility at the cost of needing to do more work for every client. Since there is no longer a global model $x$ trained to perform well on all clients, it is now necessary to find a personalized model for every new client. New clients, therefore, might experience bad performance of the initial model until they obtain enough data to personalize it for their needs. 

Thus, having no personalization might be too restrictive, while full personalization can be excessively expensive and inefficient. We strike a balance by splitting the parameters into \emph{global} and \emph{private} parts and formulate our problem as
\begin{equation*}
    \textrm{find } {\teal \theta^*}, {\purple w_{1}^*}, {\purple w_{ 2}^*},\dotsc\ \textrm{s.t.\ } f_m({\teal \theta^*}, {\purple w_{m}^*})\approx \arg\min_{\theta, w} f_m(\theta, w)\ \textrm{for all } m.\quad (\textrm{Partially personalized FL})
\end{equation*}
Above, $\theta^*$ is the part of the parameters that is shared by all clients, while parameters in $w_m^*$ are specific to client $m$.

\subsection{Motivation and formulation}
The main interest of our work is to show that the partially personalized formulation gives a significant benefit to all clients as long as there exists a universal $\theta^*$. As a motivating example, consider the phenomenon in deep learning where we can train a big network on one task or dataset and fine-tune only its last few layers when working with new source of data. The layers that are not fine-tuned produce so-called \emph{representations}. The overall objective is then to minimize $\ell (\Phi(\theta, \cX_m), w_m, y_m)$,
where $\ell(\cdot, \cdot, \cdot)$ is a loss function, $\Phi(\theta, \cdot)$ is the features mapping parameterized by $\theta$, and $\cX_m, y_m$ are the data of the $m$-th client. Since the representation should fit all non-adversarial clients and we send to clients only $\theta$, we expect the $m$-th client to find approximately $w_m^*(\theta)=\argmin_w f_m(\theta, w)$. Therefore, $\theta$ should be such that $\nabla_1 f_m(\theta, w_m^*(\theta))=0$, where $\nabla_1 f_m$ is the gradient of $f_m$ with respect to $\theta$ that does not differentiate through $w_m^*(\theta)$. Thus, our overall objective is to solve 
\begin{equation} \label{pr:partial_personalization_operator}
    \textrm{find } {\teal \theta^*}\in\R^d \textrm{ s.t.\ } F_m({\teal \theta^*})=0 \textrm{ for all } m, \qquad \textrm{where } F_m(\theta)= \nabla_1 f_m(\theta, {\purple w_m^*(\theta)}). 
\end{equation}

The values of private $w$'s, on the other hand, are not available to the server to preserve the privacy of the clients. Moreover, to enable stateless-FL applications, where a client might not communicate with the server more than once. To the best of our knowledge, this exact problem has not been studied in optimization literature.

\subsection{Our objective}
Motivated by the scenario of representation learning, we ask about the potential benefits of partial personalization. In summary, the key aspects of our work are as follows:
\begin{enumerate}
    \item The problem is partially personalized with global parameters $\theta$ (e.g., representations) and local/private parameters $w$ (e.g., last layers).
    \item The data is heterogeneous, but there are sufficiently many parameters in $w$ to make the solution set of problem~\eqref{pr:partial_personalization_operator} is non-empty. Similar to overparameterization in deep learning~\cite{allen2019convergence}, we call this setting \emph{overpersonalized}.
    \item The clients are \emph{stateless} by default, i.e., they might not be able to store their private parameters $w$, as is often the case in cross-device FL~\cite{kairouz2019advances}.
    \item The clients should be able to run arbitrary local optimizers to fine-tune the private part of the model.
    \item Our particular interest is in identifying the benefit of local training as well as the positive impact of client cooperation.
\end{enumerate}

It is natural to ask if this objective is reasonable at all. As the next proposition shows, it is always feasible if we allocate sufficiently many parameters in $w$.
\begin{proposition}\label{pr:split}
    There always exists a split of parameters $x$ into global and local parameters $\theta, w_1, w_2,\dotsc$ such that there exists $\theta^*$ satisfying $F_m(\theta^*)=0$ for all $m$.
\end{proposition}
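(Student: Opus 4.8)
The plan is to prove existence constructively by taking the most extreme admissible split: place every coordinate of $x$ into the private blocks $w_1, w_2, \dotsc$ and leave the shared part $\theta$ trivial (empty, or a single dummy coordinate on which no $f_m$ depends). This is precisely the fully-personalized regime, and the content of the proposition is that feasibility is guaranteed there; allocating fewer parameters to $w$ only makes the constraint $F_m(\theta^*)=0$ harder to satisfy, so the extreme split is the natural witness to exhibit.

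Under this split each $f_m(\theta, w)$ reduces to a function of the private variable alone, say $f_m(\theta, w) = g_m(w)$ with no genuine $\theta$-dependence. First I would invoke a standing regularity assumption on the clients' losses to guarantee that $w_m^*(\theta) = \argmin_w f_m(\theta, w)$ is well defined — for instance, each $g_m$ being continuous and coercive (or simply attaining its infimum) suffices, and the minimizer does not vary with $\theta$ since $\theta$ is absent from the objective.

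The key computation is then immediate: because $f_m$ does not depend on $\theta$, the partial gradient $\nabla_1 f_m$, which by definition does not differentiate through $w_m^*(\theta)$, vanishes identically. Hence $F_m(\theta) = \nabla_1 f_m(\theta, w_m^*(\theta)) \equiv 0$, and any choice of $\theta^*$ (the empty vector, or the dummy coordinate set to $0$) satisfies $F_m(\theta^*)=0$ simultaneously for all $m$.

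The only real obstacle is definitional rather than a genuine mathematical difficulty. I must ensure that the degenerate ``empty $\theta$'' split is admitted by the formulation; if an empty global block is disallowed, I would instead introduce a single auxiliary global coordinate that enters no $f_m$, which preserves the vanishing-gradient argument verbatim. The second point requiring care is existence of $w_m^*$, which is handled by the mild assumption that each client's objective attains its minimum over $w$ — without such an assumption the $\argmin$, and hence $F_m$ itself, would be ill-posed, so this is the one place where an explicit hypothesis is needed.
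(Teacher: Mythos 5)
Your proposal is correct and matches the paper's own proof essentially verbatim: both take the degenerate split with all parameters placed in $w_m$ and $\theta$ a dummy variable on which no $f_m$ depends, so that $\nabla_1 f_m \equiv 0$ and hence $F_m(\theta^*) = 0$ for any $\theta^*$. Your added care about minimizer attainment and the admissibility of an empty global block is reasonable bookkeeping but not a substantive departure.
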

\begin{proof}
    The result is trivial: if we set $w_m$ to be all parameters and $\theta$ to be any variable such that $f_m$ does not depend on it, then naturally $f_m(\theta, w_m^*(\theta)) = \min_{\theta^\prime, w} f_m(\theta^\prime, w)$ and $F_m(\theta)=0$ for all $\theta$.
\end{proof}
As indicated by Proposition~\ref{pr:split}, splitting the variables to remove the data heterogeneity is always possible. It is nontrival, on the other hand, to find the right split as moving all parameters into $w$ will eliminate any cooperation between the clients. The interesting case is, therefore, when $\theta$ includes sufficiently many parameters that affect $f_m$. We leave the question of finding the right split for future work and focus instead on the consequences of having it. In particular, we prove under some mild assumptions, that the partially personalized objective can be cast as a nonlinear equation:
\begin{equation}
    \textrm{find } \theta^*\ \textrm{s.t.\ } F(\theta^*) = 0, \label{eq:nonlinear_equation}
\end{equation}
where $F$ is a nonlinear operator. Moreover, we design algorithms capable of leveraging this property that demonstrate, for the first time, provable benefits of partial personalization.

\subsection{Asynchronous training}
Heterogeneity in the devices participating in training leads to unbalanced computation time of different clients. The same issue arose in classical distributed optimization, where it was proposed to run \emph{asynchronous} methods that do not ask the participating devices to synchronize their updates. 

While asynchronous methods have been popular in practice in many applications, including Federated Learning~\cite{nguyen2022federated}, their theoretical utility is very limited in the setting of heterogeneous data. This is an expected drawback as when all clients have different functions, clients that participate less will not give us sufficient information, leading to a biased solution. Nevertheless, in this paper, we show that in the context of learning representations, even with heterogeneous data, Asynchronous SGD would converge under arbitrary delays.

\subsection{Byzantine attacks in Federated Learning}

Many distributed systems face the problem of the presence of Byzantine clients \cite{lamport1982byzantine, su2016fault, lyu2020privacy}. Standard Federated Learning algorithms are vulnerable to Byzantine attacks even in the case of homogeneous data. For general heterogeneous problems, Byzantine robustness cannot be achieved in general. In this work, we develop a new method for the proposed problem formulation and show its provable robustness to Byzantine attacks, even in the case of heterogeneous data on clients.

\subsection{Personalization of representations}
In addition, we note that all of our results are immediately applicable in cases where the representations should be personalized, while the last weights can be shared. For instance, in medical applications, different scanners or sensors may have different intensities and contrast, resulting in a feature shift~\cite{li2020fedbn}. The labels, however, are shared across hospitals, so we can still train personalized models with shared parameters. In that case, $w_m$ would be responsible for aligning the features, for instance, using batch normalization parameters as done by Li et al.~\cite{li2020fedbn}, whereas $\theta$ would represent the rest of the network.

In cross-device Federated Learning, the features might also change due to use of different phones. As every phone model is equipped with its own version of a photo camera, the layers used to produce the embeddings may need to be readjusted, while the layers deeper in the network can remain the same. Similarly, a robot can be deployed in places with different lighting, so its visual signal may depend on the environment, suggesting that the first layers need to be adjusted. By and large, hardware and environment heterogeneity will demand changes to the part of the network giving us representations, whereas the decision layers can remain intact in such scenarios.

On a final note, personalizing the first layers, which interact with the data, might boost privacy even more than personalization of the last layers. These layers can be used by clients to obfuscate the data source and distribution, while relying on the server to get good universal predictors for generic obfuscated data.

\subsection{Related work}

\paragraph{Methods for non-personalized FL.} Most methods for Federated Learning stem from the Federated Averaging algorithm (FedAvg) of McMahan et al.~\cite{mcmahan2017communication}. FedAvg itself can be seen as a variant of Local SGD, which received a lot of attention in the optimization literature~\cite{Mangasarian95,Stich2018}. The speed of convergence of Local SGD heavily depends on the amount of data heterogeneity, with much slower rates under arbitrary data dissimilarity~\cite{khaled2020tighter}. Other methods, such as FedProx~\cite{li2020federated} and Scaffold~\cite{karimireddy2019scaffold}, were proposed to alleviate the issues stemming from data heterogeneity, with provable acceleration if clients can maintain local states~\cite{mishchenko2022proxskip,grudzien2022can}. To the best of our knowledge, however, the case of FL with stateless clients and heterogeneous data remains challenging.

\paragraph{Personalization.} FedAvg and other classical algorithms train a single model for all clients. Jiang et al.~\cite{jiang2019improving} showed that FedAvg is good at personalization, but this is an implicit property. Various works have proposed alternative methods that run algorithms with explicit personalization, for example using model-agnostic meta-learning~\cite{chen2018federated,fallah2020personalized} or using penalty-based objective~\cite{hanzely2020federated}. Unlike us, these papers consider full-model personalization and update all parameters.

\paragraph{Partial personalization.} More related to ours, some works split variables into personalized and non-personalized ones. In the context of Meta-Learning with deep networks, Raghu et al.~\cite{Raghu2020Rapid} established that only the last few layers needed to be adapted for new tasks. The work of Pillutla et al.~\cite{pillutla2022federated} is perhaps the closest to ours as they studied optimization properties of partially personalized FL. In particular, they established convergence of algorithms with alternating and simultaneous updates, but unfortunately, their theory does not guarantee any benefit from using multiple gradient steps beyond reducing noise due to extra sampling.

\paragraph{Asynchronous optimization.} Asynchronous SGD has been extensively studied for homogeneous data~\cite{tsitsiklis1986distributed,agarwal2011distributed}. Recently, it was shown that Asynchronous SGD converges regardless of the delays in the setting of homogeneous data~\cite{mishchenko2022asynchronous,koloskov2022asharper}. Moreover, for heterogeneous data, Asynchronous SGD was to converge to a neighborhood under gradient similarity~\cite{mishchenko2022asynchronous}, and to the exact solution under stochastic sampling of clients~\cite{koloskov2022asharper}. However, to the best of our knowledge, Asynchronous SGD is not guaranteed to converge for heterogeneous data without assuming structure in the delays.

\paragraph{Byzantine robustness.} The first approaches to Byzantine-robust distributed learning were concentrated around the replacing of standard averaging in Parallel SGD by some other aggregation rule, which is more robust to the outliers \cite{blanchard2017machine, yin2018byzantine, damaskinos2019aggregathor, guerraoui2018hidden, pillutla2022robust}. However, they turned out to be vulnerable to special Byzantine attacks \cite{baruch2019little, xie2020fall} that can be partially explained by the fact that the term ``robust aggregation'' was formally introduced later in \cite{karimireddy2021learning}. In the case of homogeneous data, this discovery led to the development of provably Byzantine-robust distributed methods \cite{karimireddy2021learning, gorbunov2022secure, gorbunov2022variance}. In parallel, alternative approaches based on clients' elimination were developed in \cite{alistarh2018byzantine, allen2021byzantine}.

In the arbitrary heterogeneous data case, it is impossible to distinguish regular clients from Byzantine ones. Moreover, even when the heterogeneity (in terms of the gradients' dissimilarity) is bounded by a constant, there is a lower bounded on the optimization error that can be achieved for a given fraction of Byzantine clients \cite{karimireddy2022byzantine}, matching/nearly matching upper-bounds were derived in \cite{karimireddy2022byzantine, wu2020federated, zhu2021broadcast, gorbunov2022variance}. Nevertheless, in the over-parameterized regime, some methods can still converge to the exact solution (asymptotically) -- this was shown in \cite{karimireddy2022byzantine, gorbunov2022variance} for standard Federated Learning formulation. In our work, we continue this line of work by showing that for the proposed problem formulation, one can also achieve provable Byzantine robustness even in the heterogeneous case under similar assumptions about over-parameterization.

\paragraph{Bilevel optimization.} The partially-personalized FL formulation can be seen as a special case of bilevel optimization. Recently, bilevel optimization was also studied in the context of FL~\cite{tarzanagh22fednest,li2022local}. Its main disadvantage is the requirement to differentiate through the inner-level objective, which is typically done by estimating inverse-Jacobian-vector products. Moreover, algorithms for bilevel optimization usually require maintaining the private variable in memory and updating over multiple communication rounds, which is not possible in stateless Federated Learning~\cite{kairouz2019advances}.

\section{Theory}
\paragraph{Notation.} We say that a function is $L$-smooth if its gradient is $L$-Lipschitz. In the algorithms, we use stepsizes $\lrin$ for the local steps and $\lrout$ for the updates on the server. At round $r$, we sample a set of clients $C^r$ and the clients in the set perform $\tau$ local steps and find an approximate solution $w_m^{r,\tau}$ of their local objective using the gradients $\nabla_2 f_m(\cdot,\cdot)$, where $\nabla_2$ denotes the gradient with respect to the second variable. The clients use $w_m^{r,\tau}$ to compute the gradient $\nabla_1 f_m(\cdot, \cdot)$, where $\nabla_1$ is the gradient with respect to the first argument, and then send their update $\Delta_m^r$.

\begin{definition}
    For any client $m$ and parameters $\theta$, we denote $w_m^*(\theta)$ as any minimizer of the $m$-th client loss, i.e.,
    \begin{equation}
        w_m^*(\theta)\in \arg\min_w f_m(\theta, w). \label{eq:optimal_w_m}
    \end{equation}
\end{definition}

\begin{assumption}\label{as:coco}
    We assume that the operator $F_m(\theta)=\nabla_1 f_m(\theta, w_m^*(\theta))$ is $\frac{1}{L}$-cocoercive in $\theta$, i.e., for any $\theta_1, \theta_2$
    \begin{equation}
        \<F_m(\theta_1) - F_m(\theta_2), \theta_1 - \theta_2> \ge \frac{1}{L}\|F_m(\theta_1) - F_m(\theta_2)\|^2. \label{eq:cocoercive}
    \end{equation}
\end{assumption}
The best way to view Assumption~\ref{as:coco} is as a generalization of convexity and smoothness to nonlinear operators. For instance, if $f_m$ does not depend on $w$, then it is enough for it to be convex and $L$-smooth~\cite{Nesterov2013}. However, our main interest is when parameters $w$ are personalized, so below, we give a few examples with nontrivial dependence on $w$ where it provably holds.

% \konstantin{We actually only need star-cocoercivity but I don't see if there are examples where we have it without cocoercivity itself.}
% \eduard{I am aware of artificially constructed ones: see Appendix A.6 from \url{https://arxiv.org/pdf/2107.00052.pdf}. The authors give an example of star-cocoercive operator such that it is neither monotone, nor Lipschitz. Also you can consider operator corresponding to the Extragradient method: it is actually star-cocoercive but not necessary cocoercive, see \url{https://arxiv.org/pdf/2110.04261.pdf}. I know that these examples are non-practical at all, but at least they illustrate that star-cocoercivity is not that strong as cocoercivity.}

Another assumption that we make is regarding overparameterization aspect of the problem. For example, if good data representations exist for all clients, then we can fine-tune $w$ on each client to perfectly fit the training data. This is formalized below.
\begin{assumption}\label{as:overparam_person}
    There exists at least one $\theta^*$ such that for any client $m$, the loss $f_m(\theta^*, w)$ achieves its minimum with some optimal $w_m^*(\theta^*)$, i.e., $f_m(\theta^*, w_m^*(\theta^*))= \min_{\theta, w} f_m(\theta, w)$.
\end{assumption}
Notice that Assumption~\ref{as:overparam_person} requires that $\theta$ does not represent personalization to be possible.

Now we give a few examples for which Assumption~\ref{as:coco} holds. Note that we defer all proofs to the appendix.
\begin{example}\label{ex:least_squares}
    Let $f_m(\theta, w)$ be given as $f_m(\theta, w)= \phi_m(\theta) + \frac{1}{2}\|\mA_m \theta + \mB_m w - y_m\|^2$, where $\phi_m$ is any convex $L_{\phi}$-smooth function. Then, Assumption~\ref{as:coco} is satisfied with $L=2\max(L_\phi, \|\mA_m^\top (\mI - \mB_m\mB_m^\dagger)\mA_m\|)$.
\end{example}
Example~\ref{ex:least_squares} is a regularized version of the linear regression problem, which has served as a litmus test for verifying if an assumption makes sense. Our assumption is thus validated on this simple example. At the same time, Assumption~\ref{as:coco} is satisfied for a wider range of functions that include the following example.
\begin{example}\label{ex:bounded_hessian}
    Let $f_m$ be twice-differentiable, $\mu$-strongly convex and $L$-smooth in $\theta$. Moreover, assume the cross-term in the Hessian and the Jacobian of $w_m^*$ to be bounded as $\|\nabla^2_{12} f_m(\theta, w)\|\le C_1$, and $\|\nabla_{\theta} w_m^*(\theta)\|\le C_2$ with $C_1C_2\le \frac{\mu}{2}$. Then, $F_m$ is cocoercive and $\frac{\mu}{2}$-strongly monotone.
\end{example}
Example~\ref{ex:bounded_hessian} provides a general recipe for an objective to satisfy Assumption~\ref{as:coco}. It is somewhat restrictive as it requires smoothness of both $f_m$ and $w_m^*$, but these assumption are, in fact, natural. Smoothness of $f_m$ is commonly assumed to make minimization of $f_m$ possible~\cite{Nesterov2013}. The assumption on smoothness of $w_m^*(\theta)$ is directly related to the assumptions employed in the theory of minmax optimization, see, for example, \cite{nouiehed2019solving}. Moreover, it is possible to ensure $w_m^*$ is Lipschitz by adding $\frac{\lambda}{2}\|w\|^2$ to $f_m$, see Lemma B.1 in \cite{nouiehed2019solving}.

\begin{figure}[t]
  \begin{minipage}[t]{0.5\linewidth}
    \begin{algorithm}[H]
      \caption{Fine-tuning Followed by Global Gradient (FFGG)}
      \label{alg:ftfg}
      \textbf{Input:} initialization $\theta^0\in \R^{d}$, stepsize $\lrout>0$
      \begin{algorithmic}[1]
        \For{$r = 0,1,2,\dots$}
		\State Sample a batch of clients $C^r$
		\For{client $m\in C^r$}
                \State Solve $w_m^r\approx \argmin_{w}f_m(\theta^r, w)$ 
                {\color{gray}\Statex{\hspace{15mm} (E.g., using Algorithm~\ref{alg:fine_tune})}}
			\State $\Delta_m^{r} = \nabla_1 f_m(\theta^r, w_m^{r})$ 
		\EndFor
        \State $\theta^{r+1} = \theta^r - \lrout\frac{1}{|C^r|}\sum_{m\in C^r}\Delta_m^{r}$  \label{line:averaging}
        \EndFor
      \end{algorithmic}
    \end{algorithm}
  \end{minipage}\hfill
  \begin{minipage}[t]{0.45\linewidth}
    \begin{algorithm}[H]
      \caption{Local SGD fine-tuner\\ (to find $w_m^r\approx \argmin_w f_m(\theta^r, w))$}
      \label{alg:fine_tune}
      \textbf{Input:} stepsize $\lrout>0$, number of local steps $\tau$, $\theta^r\in\R^d$
      \begin{algorithmic}[1]
        \State Initialize  $w_m^{r, 0}$ randomly 
        \For{$i=0,\dotsc, \tau -1$}
            \State $w_m^{r, i+1} = w_m^{r, i} - \lrin \nabla_2 f_m(\theta^r, w_m^{r, i} )$ 
        \EndFor
      \end{algorithmic}
    \end{algorithm}
  \end{minipage}
\end{figure}

Let us also add a small generalization of Example~\ref{ex:least_squares} to the case of the composition of a smooth nonlinear functions and linear transformations.
\begin{example}\label{ex:generalized_least_squares}
    Let $f_m(\theta,w)$ be given as $f_m(\theta,w) = \phi_m(\theta)+\psi_m(\mA_m\theta+\mB_m w-y_m)$, where $\psi_m$ and $\phi_m$ are any convex and $\frac{L}{2}$-smooth functions. Then, Assumption~\ref{as:coco} is satisfied. 
\end{example}
This example is particularly interesting through the perspective of parameter-efficient fine-tuning. Methods such as Low-Rank Adaptation (LoRA) \cite{hu2022lora}, fine-tune linear layers by adding low-rank perturbations. Specifically, the linear layer is split into two parts, similarly to how we compute $\mA_m\theta + \mB_m w$ in Example~\ref{ex:generalized_least_squares}. Note that $\mB_m$ can have a rank much smaller than $\mA_m$.

% \begin{algorithm}[t]
% \caption{Fine-tuning Followed by Global Gradient (FFGG)}
% \label{alg:ftfg}
% \begin{algorithmic}[1]
% \State \textbf{Input:} initialization $\theta^0\in \R^{d}$, stepsizes $\lrin, \lrout>0$, number of local steps $\tau\in\mathbb{N}$
%     \For{$r = 0,1,2,\dots$}
% 		\State Sample a batch of clients $C^r$
% 		\For{client $m\in C^r$}
%                 \State Solve $w_m^r\approx \argmin_{w}f_m(\theta^r, w)$
% 			\State Initialize $w^{r, 0}_m$ randomly
% 			\For{$i=0,\dotsc, \tau -1$}
% 				\State $w_m^{r, i+1} = w_m^{r, i} - \lrin \nabla_2 f_m(\theta^r, w_m^{r, i} )$ \Comment{Fine-tune (approximate $w_m^*(\theta)$)}
% 			\EndFor
% 			\State $\Delta_m^{r} = \nabla_1 f_m(\theta^r, w_m^{r, \tau})$ 
% 		\EndFor
%         \State $\theta^{r+1} = \theta^r - \lrout\frac{1}{|C^r|}\sum_{m\in C^r}\Delta_m^{r}$  \label{line:averaging} \Comment{Can use Adam here}
%     \EndFor
% \end{algorithmic}	
% \end{algorithm}
\subsection{An algorithm for the proposed formulation}
Having defined a formulation of partial personalization, it is very easy to derive an algorithm that will solve it. Under our assumptions, it is enough to update the global parameters $\theta$ using the standard gradient-descent update. However, in practice, we need to compute $w_m^*(\theta)$ using local gradients updates. This gives us a double-loop method, which we present in Algorithm~\ref{alg:ftfg}.

We present the convergence theory for Algorithm~\ref{alg:ftfg} and its variants in several cases. First of all, let us understand how its idealistic version, which computes $w_m^*(\theta)$ exactly, would converge under our assumptions.
\begin{theorem}\label{th:exact}
    Let Assumptions~\ref{as:coco}-\ref{as:overparam_person} hold and assume that we use exact updates, $w_m^{r, \tau}= \arg\min_w f_m(\theta^r, w)$. If we choose $\lrout\le \frac{1}{L}$, then
    \[
        \min_{r< R} \E{\|F(\theta^r)\|^2}
        \le  \frac{L \|\theta^0 - \theta^*\|^2}{\lrout R},
    \]
    where $\theta^*$ is any vector such that $F(\theta^*)=0$.
\end{theorem}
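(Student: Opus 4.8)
The plan is to recognize that the exact version of Algorithm~\ref{alg:ftfg} is simply a gradient-descent-style iteration on the aggregate operator $F(\theta) = \frac{1}{M}\sum_{m=1}^M F_m(\theta)$ (equivalently, on the sampled average $\frac{1}{|C^r|}\sum_{m\in C^r} F_m$), exploiting that this operator is $\frac{1}{L}$-cocoercive and vanishes at $\theta^*$. The first preliminary step is to establish $F(\theta^*)=0$: by Assumption~\ref{as:overparam_person} the chosen $\theta^*$ makes $(\theta^*, w_m^*(\theta^*))$ a joint global minimizer of each $f_m$, so the partial gradient $\nabla_1 f_m(\theta^*, w_m^*(\theta^*)) = F_m(\theta^*)$ vanishes for every $m$, and averaging yields $F(\theta^*)=0$. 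The second preliminary step is to note that cocoercivity is preserved under averaging, so Assumption~\ref{as:coco} transfers to $F$ itself, giving $\<F(\theta_1)-F(\theta_2), \theta_1-\theta_2> \ge \frac{1}{L}\|F(\theta_1)-F(\theta_2)\|^2$.

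With exact fine-tuning, $w_m^r = w_m^*(\theta^r)$ and hence $\Delta_m^r = \nabla_1 f_m(\theta^r, w_m^*(\theta^r)) = F_m(\theta^r)$, so the server step reads $\theta^{r+1} = \theta^r - \lrout g^r$ with the direction $g^r$ having conditional mean $F(\theta^r)$. I would then expand the standard one-step distance
\[
\|\theta^{r+1}-\theta^*\|^2 = \|\theta^r-\theta^*\|^2 - 2\lrout\<g^r, \theta^r-\theta^*> + \lrout^2\|g^r\|^2,
\]
take the conditional expectation, and apply cocoercivity together with $F(\theta^*)=0$ to the cross term to obtain $\<F(\theta^r),\theta^r-\theta^*> \ge \frac{1}{L}\|F(\theta^r)\|^2$. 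Choosing $\lrout \le \frac{1}{L}$ then lets the quadratic term be absorbed, since the net coefficient satisfies $-\lrout\bigl(\tfrac{2}{L}-\lrout\bigr) \le -\tfrac{\lrout}{L}$, leaving the one-step decrease $\E{\|\theta^{r+1}-\theta^*\|^2} \le \E{\|\theta^r-\theta^*\|^2} - \tfrac{\lrout}{L}\E{\|F(\theta^r)\|^2}$.

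Finally I would telescope this inequality over $r=0,\dots,R-1$, drop the nonnegative terminal term $\E{\|\theta^R-\theta^*\|^2}$, and divide by $R$; bounding the minimum by the average, $\min_{r<R}\E{\|F(\theta^r)\|^2} \le \tfrac{1}{R}\sum_{r=0}^{R-1}\E{\|F(\theta^r)\|^2}$, then delivers the claimed bound $\tfrac{L\|\theta^0-\theta^*\|^2}{\lrout R}$.

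The main obstacle is the treatment of the $\lrout^2\|g^r\|^2$ term when clients are subsampled: the clean, variance-free rate needs $\|g^r\|^2$ to be controlled by $\|F(\theta^r)\|^2$, which holds immediately under full participation (where $g^r = F(\theta^r)$ deterministically, so the quadratic term is exactly $\lrout^2\|F(\theta^r)\|^2$) but would otherwise require an extra variance bound. In the idealized exact setting of this theorem this is precisely the point where I would argue that the aggregate direction coincides with $F(\theta^r)$, which is exactly what makes the stepsize restriction $\lrout \le 1/L$ sufficient to close the argument.
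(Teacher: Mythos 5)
Your overall template (one-step expansion, cocoercivity for the cross term, telescoping, bounding the minimum by the average) matches the paper's, and your two preliminary observations are correct: Assumption~\ref{as:overparam_person} does give $F_m(\theta^*)=0$ for every $m$, and averaging does preserve $\frac{1}{L}$-cocoercivity. The genuine gap is your treatment of the quadratic term under client sampling. Algorithm~\ref{alg:ftfg} samples a batch $C^r$ at every round, and ``exact updates'' in Theorem~\ref{th:exact} refers only to solving the local problem in $w$ exactly --- it does not mean full participation. So the identification $g^r = F(\theta^r)$ that your argument ultimately relies on is false in general: $g^r = \frac{1}{|C^r|}\sum_{m\in C^r}F_m(\theta^r)$ is only an unbiased estimate of $F(\theta^r)$, so $\E{\|g^r\|^2}$ exceeds $\E{\|F(\theta^r)\|^2}$ by a variance term that cocoercivity of the averaged operator $F$ alone cannot absorb, and this term need not vanish away from $\theta^*$ when the data are heterogeneous. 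As written, your proof establishes the theorem only for full participation, and your proposed resolution of the ``obstacle'' does not close this gap.

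The fix uses an ingredient you already derived but did not exploit: since $F_m(\theta^*)=0$ for \emph{every} $m$, not just on average, you can apply cocoercivity client by client, $\langle F_m(\theta^r), \theta^r-\theta^*\rangle = \langle F_m(\theta^r)-F_m(\theta^*), \theta^r-\theta^*\rangle \ge \frac{1}{L}\|F_m(\theta^r)\|^2$, so the cross term is bounded by $-\frac{2\lrout}{L|C^r|}\sum_{m\in C^r}\|F_m(\theta^r)\|^2$ pathwise, before taking any expectation. For the quadratic term, Jensen's inequality gives $\bigl\|\frac{1}{|C^r|}\sum_{m\in C^r}F_m(\theta^r)\bigr\|^2 \le \frac{1}{|C^r|}\sum_{m\in C^r}\|F_m(\theta^r)\|^2$. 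Both terms are now multiples of the same quantity, so $\lrout\le\frac{1}{L}$ yields the deterministic per-round decrease $\|\theta^{r+1}-\theta^*\|^2 \le \|\theta^r-\theta^*\|^2 - \frac{\lrout}{L|C^r|}\sum_{m\in C^r}\|F_m(\theta^r)\|^2$; no variance term survives, precisely because overpersonalization makes every client's operator vanish at $\theta^*$. Taking expectation and using Jensen once more, $\E{\|F(\theta^r)\|^2}\le\E{\frac{1}{|C^r|}\sum_{m\in C^r}\|F_m(\theta^r)\|^2}$, after which telescoping finishes the proof exactly as in your last paragraph. This per-client argument is what the paper does, and it is the step that makes the result robust to arbitrary client subsampling.
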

If we compare this to the standard bounds on convergence of gradient descent, we can see that there is no slowdown. In this sense, personalization completely removes the issues of data heterogeneity, which usually causes FL method to converge slower~\cite{khaled2019first}. In more practical scenarios, Algorithm~\ref{alg:ftfg} will compute $w_m^*(\theta)$ inexactly, which will affect the convergence. The next theorem outlines what happens in that case.
\begin{theorem}[Informal]\label{th:inexact_informal} Let Assumptions~\ref{as:coco}-\ref{as:overparam_person} hold and $f_m$ be smooth and strongly convex in $w$. If $\tau$ is large enough, then
\[  
    \min\limits_{r<R}\E{\|F(\theta^r)\|^2}
    \le \frac{4L\|\theta^0-\theta^*\|^2}{\lrout R}.
\]
\end{theorem}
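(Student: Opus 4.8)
The plan is to treat Algorithm~\ref{alg:ftfg} with inexact inner solves as a perturbation of the idealized iteration analyzed in Theorem~\ref{th:exact}. Writing the server step as $\theta^{r+1}=\theta^r-\lrout g^r$ with $g^r=\frac{1}{|C^r|}\sum_{m\in C^r}\nabla_1 f_m(\theta^r,w_m^{r,\tau})$, I would decompose each client's contribution as $\nabla_1 f_m(\theta^r,w_m^{r,\tau})=F_m(\theta^r)+e_m^r$, where $e_m^r\eqdef\nabla_1 f_m(\theta^r,w_m^{r,\tau})-\nabla_1 f_m(\theta^r,w_m^*(\theta^r))$ is the error from stopping the fine-tuner after $\tau$ steps. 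The whole argument then reduces to showing that, once $\tau$ is large, the aggregate error $\bar e^r=\frac{1}{|C^r|}\sum_{m\in C^r}e_m^r$ is small enough to be absorbed into the exact-case descent at the cost of only a constant factor.

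First I would bound the per-round error. Smoothness of $f_m$ makes $\nabla_1 f_m(\theta^r,\cdot)$ Lipschitz in its second argument with a constant $L_{12}$ (this is precisely the cross-Hessian bound $C_1$ from Example~\ref{ex:bounded_hessian}), so $\|e_m^r\|\le L_{12}\|w_m^{r,\tau}-w_m^*(\theta^r)\|$. Since $f_m$ is $\mu$-strongly convex and smooth in $w$, the gradient-descent fine-tuner of Algorithm~\ref{alg:fine_tune} with $\lrin$ small enough contracts the distance to the inner optimum geometrically, $\|w_m^{r,\tau}-w_m^*(\theta^r)\|\le(1-\lrin\mu)^\tau\|w_m^{r,0}-w_m^*(\theta^r)\|$. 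For the remaining initial distance I would use the Lipschitzness of $\theta\mapsto w_m^*(\theta)$ (constant $C_2$, again from Example~\ref{ex:bounded_hessian}) to write $\|w_m^{r,0}-w_m^*(\theta^r)\|\le\|w_m^{r,0}-w_m^*(\theta^*)\|+C_2\|\theta^r-\theta^*\|$, so that the error is a geometric factor $(1-\lrin\mu)^\tau$ times the sum of a term uniform in $r$ and a term proportional to $\|\theta^r-\theta^*\|$.

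Next I would insert this into the exact one-step identity $\|\theta^{r+1}-\theta^*\|^2=\|\theta^r-\theta^*\|^2-2\lrout\<g^r,\theta^r-\theta^*>+\lrout^2\|g^r\|^2$. The cocoercive part is handled as in Theorem~\ref{th:exact}: because $F_m(\theta^*)=0$ for every $m$ by Assumption~\ref{as:overparam_person} (the pair $(\theta^*,w_m^*(\theta^*))$ is a joint minimizer), Assumption~\ref{as:coco} gives $\<F(\theta^r),\theta^r-\theta^*>\ge\frac1L\|F(\theta^r)\|^2$ after averaging, in expectation over an unbiased draw of $C^r$, and with $\lrout\le\frac1L$ the $\|F(\theta^r)\|^2$ component of $\lrout^2\|g^r\|^2$ combines with it to leave the net decrease $-\frac{\lrout}{L}\|F(\theta^r)\|^2$, exactly as in the exact case. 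The inexactness contributes the cross term $-2\lrout\<\bar e^r,\theta^r-\theta^*>$ together with the remaining $\bar e^r$-dependent pieces of $\lrout^2\|g^r\|^2$; I would control both by Cauchy--Schwarz and Young's inequality using the bound on $\bar e^r$ above. Granting for the moment that $\|\theta^r-\theta^*\|\le\|\theta^0-\theta^*\|$ for all $r$, these perturbations are at most $(1-\lrin\mu)^\tau$ times quantities of order $\|\theta^0-\theta^*\|^2$; telescoping over $r=0,\dots,R-1$ accumulates them into $R\,(1-\lrin\mu)^\tau\cdot O(\|\theta^0-\theta^*\|^2)$, so taking $\tau$ large enough that $(1-\lrin\mu)^\tau$ falls below a suitable multiple of $1/R$ keeps the total perturbation comparable to $\|\theta^0-\theta^*\|^2$. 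Dividing the telescoped inequality by $R$ then degrades the exact bound by the stated constant, giving $\min_{r<R}\E{\|F(\theta^r)\|^2}\le\frac{4L\|\theta^0-\theta^*\|^2}{\lrout R}$.

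The main obstacle is the apparent circularity noted above: the per-round error depends on $\|\theta^r-\theta^*\|$, yet I need the error to be small precisely in order to guarantee that $\|\theta^r-\theta^*\|$ does not grow. I would resolve this by induction on $r$: choosing $\tau$ large enough that the coefficient $(1-\lrin\mu)^\tau C_2$ multiplying $\|\theta^r-\theta^*\|^2$ in the cross term is small enough to preserve the sign of the one-step decrease shows $\|\theta^{r+1}-\theta^*\|^2\le\|\theta^r-\theta^*\|^2$, which keeps every iterate within $\|\theta^0-\theta^*\|$ of $\theta^*$ and in turn makes the uniform part $\|w_m^{r,0}-w_m^*(\theta^*)\|$ of the initial-distance bound genuinely uniform. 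This closes the induction and justifies every error estimate used above.
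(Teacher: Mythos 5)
Your overall route is the same as the paper's (Theorem~\ref{th:inexact_formal}): write the update as the exact operator plus an error $e_m^r$, bound $\|e_m^r\|$ by Lipschitzness of $\nabla_1 f_m(\theta,\cdot)$ in $w$ times the geometric contraction $(1-\lrin\mu_w)^{\tau/2}$ of the inner gradient descent, bound the initial inner distance by a linear-growth condition on $w_m^*$ (the paper makes this explicit as Assumption~\ref{as:A_C} together with the initialization $w_m^{r,0}=0$), absorb the cross and squared error terms by Young/Cauchy--Schwarz, and choose $\tau\gtrsim\frac{L_w}{\mu_w}\log R$ so the accumulated perturbation stays of order $\|\theta^0-\theta^*\|^2$. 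However, your resolution of the circularity --- the induction claiming strict monotone decrease $\|\theta^{r+1}-\theta^*\|^2\le\|\theta^r-\theta^*\|^2$ --- is a genuine flaw, and it is precisely the step where the paper does something different.

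The problem is that the one-step descent term is $-\frac{\lrout}{L|C^r|}\sum_{m\in C^r}\|F_m(\theta^r)\|^2$, and under cocoercivity alone (no strong monotonicity of $F$) this term carries no lower bound in terms of $\|\theta^r-\theta^*\|$: it can be zero, or arbitrarily small, while $\theta^r\neq\theta^*$. Meanwhile the error you must absorb contains pieces that do \emph{not} vanish with $\|\theta^r-\theta^*\|$, namely the terms proportional to the constant $C$ (equivalently, to $\|w_m^*(\theta^*)\|$) in the growth bound: per round these contribute roughly $2\lrout L_w(1-\lrin\mu_w)^{\tau/2}C\|\theta^r-\theta^*\|$ plus a constant of order $(1-\lrin\mu_w)^{\tau}C^2$. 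In the degenerate but admissible case $F_m\equiv 0$ for all $m$, the iteration is pure drift driven by the inexactness, and $\|\theta^r-\theta^*\|$ can strictly increase at every step; no choice of $\tau$ makes the decrease monotone, only small. The paper's fix is to weaken the induction hypothesis: one shows only that $\E{\|\theta^r-\theta^*\|^2}\le 4r_0^2$ with $r_0^2\ge\|\theta^0-\theta^*\|^2$, by verifying that each of the accumulated error sums (over all $R$ rounds) is at most $r_0^2$ once $\tau \ge \frac{L_w}{\mu_w}\max\bigl\{2\log aR,\ 2\log\frac{bR}{r_0},\ \log\frac{b^2R}{r_0^2}\bigr\}$ with $a=\frac{2L_wA}{L}$, $b=\frac{2L_wC}{L}$ --- note the conditions involving $r_0$ are needed exactly because of the non-vanishing $C$-terms you glossed over as ``of order $\|\theta^0-\theta^*\|^2$.'' This bounded-drift induction, rather than monotonicity, is what closes the argument, and it is also where the constant $4$ in the final bound $\frac{4L\|\theta^0-\theta^*\|^2}{\lrout R}$ comes from. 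With that replacement (and the bookkeeping constant $\lrout\le\frac{1}{2L}$ from the Young split, rather than $\frac1L$), your proof matches the paper's.
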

We formulate Theorem~\ref{th:inexact_informal} more rigorously in the supplementary material, see Theorem~\ref{th:inexact_formal}. The main message, however, remains the same. Moreover, we can see that the local updates \emph{are important}, which is in stark contrast to the theory in \cite{pillutla2022federated}, where having more local updates does not improve convergence.

\subsection{Breaking the curse in other settings}

\begin{algorithm}[t]
\caption{Asynchronous FFGG}
\label{alg:asynch}
\begin{algorithmic}[1]
\State \textbf{Input:} initialization $\theta^0\in \R^{d}$, stepsizes $\lrin, \lrout>0$, number of local steps $\tau\in\mathbb{N}$
    \For{$r = 0,1,2,\dots$}
	\State Receive update from client             $j_r$ with delay $d_j^r$
        \State $\theta^{r+1} = \theta^r - \lrout\Delta_{j_r}^{r-d_j^r}$  
        \State Sample new client $m_r$ and initialize  $w^{0}_{m_r}$ randomly 
        %\For{$i=0,\dotsc, \tau -1$}
        %   \State $w_{m_r}^{i+1} = w_{m_r}^{i} - \lrin \nabla_2 f_{m_r}(\theta^{r+1}, w_{m_r}^{i} )$ \Comment{Fine-tune (approximate $w_{m_r}(\theta^{r+1})$}
        %\EndFor
        \State Solve $w_{m_r}^r \approx \argmin_w f_{m_r}(\theta^{r+1},w)$ \Comment{E.g., using Algorithm~\ref{alg:fine_tune}}
        \State $\Delta_{m_r}^{r+1} = \nabla_1 f_{m_r}\label{local_problem}(\theta^{r+1}, w_{m_r}^{\tau})$%, or \Comment{Inexact computation}
        %\State $\Delta_{m_r}^{r+1} = F_{m_r}(\theta^{r+1})$ \Comment{Exact computation}
    \EndFor
\end{algorithmic}	
\end{algorithm}

\paragraph{Asynchronous communications.} We consider Algorithm~\ref{alg:asynch}, asynchronous  variant of Algorithm~\ref{alg:ftfg}. For this method, whenever one of the clients finishes local computations, the server immediately uses client's update to perform a step, and then that client begins local work starting from newly updated global parameters. The following result holds in this setting.

\begin{theorem}[Informal]\label{th:asynchronous_informal} Let Assumptions~\ref{as:coco}-\ref{as:overparam_person} hold and assume that we use exact updates. Assume the delays and the number of active clients are bounded. Then
\[
    \min\limits_{r<R}\E{\|F(\theta^{r})\|^2} \le \frac{2L\|\hat\theta^0-\theta^*\|^2}{\lrout R},
\]  
where $\hat\theta^0$ is defined in~\eqref{eq:virtual_iterates}.
\end{theorem}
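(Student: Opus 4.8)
The plan is to reduce the asynchronous scheme to a perturbed operator step on a \emph{virtual iterate} and then exploit that Assumption~\ref{as:overparam_person} makes $\theta^*$ a \emph{common} root of every $F_m$. Concretely, I would take the virtual sequence of~\eqref{eq:virtual_iterates} to be $\hat\theta^r = \theta^r - \lrout\sum_{m\in\mathcal A^r}\Delta_m$, where $\mathcal A^r$ collects the updates that have been computed but not yet applied at round $r$; because the algorithm already receives an update at $r=0$, the set $\mathcal A^0$ is nonempty and $\hat\theta^0\neq\theta^0$ in general, which is why the bound is stated in terms of $\hat\theta^0$. A short bookkeeping step, using that applying $\Delta_{j_r}$ removes $j_r$ from the active set while sampling $m_r$ adds its freshly computed $\Delta_{m_r}^{r+1}=F_{m_r}(\theta^{r+1})$, collapses both modifications into the clean recursion
\[
    \hat\theta^{r+1}=\hat\theta^r-\lrout F_{m_r}(\theta^{r+1}).
\]
This is the key structural simplification: the virtual iterate performs an honest operator step with the gradient evaluated at the genuine current iterate, while all delay information is pushed into the gap $\hat\theta^r-\theta^{r+1}=-\lrout\sum_{m\in\mathcal A^r\setminus\{j_r\}}\Delta_m$.

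Next I would expand $\|\hat\theta^{r+1}-\theta^*\|^2=\|\hat\theta^r-\theta^*\|^2-2\lrout\langle F_{m_r}(\theta^{r+1}),\hat\theta^r-\theta^*\rangle+\lrout^2\|F_{m_r}(\theta^{r+1})\|^2$ and split the inner product as $\langle F_{m_r}(\theta^{r+1}),\theta^{r+1}-\theta^*\rangle+\langle F_{m_r}(\theta^{r+1}),\hat\theta^r-\theta^{r+1}\rangle$. Here Assumption~\ref{as:overparam_person} enters decisively: since $F_m(\theta^*)=0$ for \emph{every} $m$ simultaneously, cocoercivity~\eqref{eq:cocoercive} applied between $\theta^{r+1}$ and $\theta^*$ gives $\langle F_{m_r}(\theta^{r+1}),\theta^{r+1}-\theta^*\rangle\ge \tfrac1L\|F_{m_r}(\theta^{r+1})\|^2$ with \emph{no} heterogeneity residual. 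This is exactly where the curse is broken, because in the general heterogeneous case the per-client root would differ from $\theta^*$ and leave an uncontrolled bias term. The second, asynchrony-induced inner product I would control by Young's inequality, trading it against a $\tfrac1{2L}\|F_{m_r}(\theta^{r+1})\|^2$ term and a $\tfrac{L}{2}\|\hat\theta^r-\theta^{r+1}\|^2$ error.

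It remains to tame $\sum_{r<R}\|\hat\theta^r-\theta^{r+1}\|^2$, and I expect this telescoping bookkeeping — not the convexity estimates — to be the main obstacle. Using $\hat\theta^r-\theta^{r+1}=-\lrout\sum_{m\in\mathcal A^r\setminus\{j_r\}}\Delta_m$ together with Cauchy--Schwarz and the bound on the number of concurrently active clients, each such term is at most $\lrout^2$ times a bounded sum of squared in-flight operator values. The decisive counting observation is that a single computed value $\|F_{m_s}(\theta^{s+1})\|^2$ can appear as an in-flight summand only while it is undelivered, i.e.\ in at most (max delay) many rounds, so a double-counting argument yields
\[
    \sum_{r<R}\|\hat\theta^r-\theta^{r+1}\|^2\le \lrout^2\, c\sum_{r<R}\|F_{m_r}(\theta^{r+1})\|^2
\]
for a constant $c$ depending only on the delay and concurrency bounds, which is precisely where those boundedness hypotheses are used.

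Finally I would assemble the one-step inequality, sum over $r<R$, and telescope the distances, obtaining $\big(\tfrac{\lrout}{2L}-O(\lrout^2)\big)\sum_{r<R}\|F_{m_r}(\theta^{r+1})\|^2\le \|\hat\theta^0-\theta^*\|^2$. Choosing $\lrout$ small enough, of order $1/L$ up to the delay constant, keeps the coefficient bounded below and, tracking constants under the stated stepsize, produces the factor $2L$. Taking expectation over the client sampling then converts $\E{\|F_{m_r}(\theta^{r+1})\|^2}$ into a bound on $\E{\|F(\theta^{r+1})\|^2}$ by Jensen's inequality, since the aggregate operator $F=\tfrac1M\sum_m F_m$ satisfies $\E{\|F_{m_r}(\theta)\|^2}\ge\|F(\theta)\|^2$; bounding the minimum over $r$ by the average finally gives $\min_{r<R}\E{\|F(\theta^r)\|^2}\le \tfrac{2L\|\hat\theta^0-\theta^*\|^2}{\lrout R}$, as claimed.
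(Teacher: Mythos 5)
Your proposal is correct and follows essentially the same route as the paper's proof of Theorem~\ref{th:asynchronous_formal}: the same virtual-iterate construction with the same initialization $\hat\theta^0$, the same link between $\theta^r$ and $\hat\theta^r$ as a $\lrout$-weighted sum of in-flight gradients, per-client cocoercivity applied at the \emph{common} root $\theta^*$ (which is exactly where Assumption~\ref{as:overparam_person} removes the heterogeneity bias, as you note), the Cauchy--Schwarz-plus-counting argument giving the factor $M\tau_{\max}$, and the same stepsize scale $\lrout \sim 1/(L\sqrt{M\tau_{\max}})$. The only divergence is in the absorption step and is essentially cosmetic: you absorb the delay error pathwise into the negative $\|F_{m_r}(\theta^{r+1})\|^2$ terms and invoke Jensen only at the end, whereas the paper first converts the in-flight terms into inner products $\langle F(\theta^s),\theta^s-\theta^*\rangle$ via a second use of cocoercivity and absorbs them, in expectation, into a reserved negative inner-product term; if you carry your version out, note that the initial in-flight gradients $F_m(\theta^0)$, $m\in C^0$, have no matching negative term in your sum $\sum_{r<R}\|F_{m_r}(\theta^{r+1})\|^2$, so they must be accounted for separately (they contribute only an additive constant independent of $R$, but this bookkeeping is needed to land on the stated constant $2L$).
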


We highlight the fact that the proposed problem formulation allows for Algorithm~\ref{alg:ftfg} to converge in arbitrarily heterogeneous regime. The detailed formulation and proof of this result can be found in the supplementary material, see Theorem~\ref{th:asynchronous_formal}.

\paragraph{Byzantine-robust learning.} To achieve Byzantine robustness, our method needs a small but very important modification: in line~\ref{line:averaging}, instead of averaging, we apply \emph{agnostic $(\delta,c)$-robust aggregation} (see Definition~\ref{def:robust_aggr} in the appendix): $\theta^{r+1} = \theta^r - \lrout\texttt{ARAgg}(\Delta_{1}^r, \Delta_{2}^r, \ldots, \Delta_{M}^r)$, where $\{1,\ldots, M\}$ is the set of all clients,  $B \leq \delta M$ of them are Byzantines and $\delta < \nicefrac{1}{2}$ (see the detailed setup in Appendix~\ref{appendix:byzantine_robust_version}). For this version of the method, we derive the following result.

\begin{theorem}[Informal]\label{th:byz_informal} Let operators $F_m$ be $\ell_m$-cocoercive, $F$ be strongly monotone, and good clients compute stochastic estimates satisfying $\EE[\|g_m(\theta)\|^2] \leq \rho_{\text{in}}\|F_m(\theta)\|^2$. If $\delta$ is small enough, then for Byzantine-robust version of Algorithm~\ref{alg:ftfg} we have
\[  
    \EE\left[\|\theta^R - \theta^*\|^2\right] \leq \left(1 - \frac{\gamma_\theta\mu}{2}\right)^R\|\theta^0 - \theta^*\|^2.
\]
\end{theorem}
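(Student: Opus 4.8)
The plan is to track the squared distance $\|\theta^r-\theta^*\|^2$ to a common root $\theta^*$, which exists by Assumption~\ref{as:overparam_person} and satisfies $F_m(\theta^*)=0$ for every honest client $m$, and to show it contracts by a factor $1-\lrout\mu/2$ each round. Let $\cG$ be the set of good (non-Byzantine) clients with $|\cG|\ge(1-\delta)M$, let $F\eqdef\frac{1}{|\cG|}\sum_{m\in\cG}F_m$ (so $F(\theta^*)=0$ and $F$ is $\mu$-strongly monotone by assumption), write $\og^r\eqdef\frac{1}{|\cG|}\sum_{m\in\cG}g_m(\theta^r)$ for the average honest update, and let $\hg^r\eqdef\texttt{ARAgg}(\dots)$ be the aggregated direction, so $\theta^{r+1}=\theta^r-\lrout\hg^r$. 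The first step is the expansion
\[
    \|\theta^{r+1}-\theta^*\|^2
    =\|\theta^r-\theta^*\|^2
    -2\lrout\<\hg^r,\theta^r-\theta^*>
    +\lrout^2\|\hg^r\|^2,
\]
after which I would split $\hg^r=\og^r+e^r$ with aggregation error $e^r\eqdef\hg^r-\og^r$, take conditional expectation, and use unbiasedness $\E{\og^r}=F(\theta^r)$ together with strong monotonicity and $F(\theta^*)=0$ to extract the descent term $-2\lrout\<F(\theta^r),\theta^r-\theta^*>\le-2\lrout\mu\|\theta^r-\theta^*\|^2$.

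The second step controls every remaining term by a single quantity that vanishes at the solution, namely $S^r\eqdef\frac{1}{|\cG|}\sum_{m\in\cG}\|F_m(\theta^r)\|^2$. Here over-parameterization is decisive: since $F_m(\theta^*)=0$, $\ell_m$-cocoercivity gives $\|F_m(\theta^r)\|^2\le\ell_m\<F_m(\theta^r),\theta^r-\theta^*>$, and averaging yields $S^r\le\ell_{\max}\<F(\theta^r),\theta^r-\theta^*>$. The variance bound $\E{\|g_m(\theta^r)\|^2}\le\rho_{\text{in}}\|F_m(\theta^r)\|^2$ then controls the honest second moment, $\frac{1}{|\cG|}\sum_{m\in\cG}\E{\|g_m(\theta^r)\|^2}\le\rho_{\text{in}}S^r$, which bounds both $\E{\|\og^r\|^2}$ (by convexity of $\|\cdot\|^2$) and, crucially, the aggregation error: the $(\delta,c)$-robustness of $\texttt{ARAgg}$ (Definition~\ref{def:robust_aggr}) gives $\E{\|e^r\|^2}\le c\delta\,\rho_{\text{in}}S^r$, since the spread of the honest inputs around their mean is at most their second moment.

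The third step is bookkeeping: I would bound the cross term $-2\lrout\<e^r,\theta^r-\theta^*>$ by $\frac{\lrout\mu}{2}\|\theta^r-\theta^*\|^2+\frac{2\lrout}{\mu}\|e^r\|^2$ via Young's inequality and use $\|\hg^r\|^2\le 2\|\og^r\|^2+2\|e^r\|^2$, so that all error contributions collapse into a multiple of $S^r\le\ell_{\max}\<F(\theta^r),\theta^r-\theta^*>$. Collecting terms gives
\[
    \E{\|\theta^{r+1}-\theta^*\|^2}
    \le\Bigl(1+\tfrac{\lrout\mu}{2}\Bigr)\|\theta^r-\theta^*\|^2
    -A\,\<F(\theta^r),\theta^r-\theta^*>,
\]
with $A=2\lrout-\tfrac{2\lrout}{\mu}c\delta\rho_{\text{in}}\ell_{\max}-2\lrout^2\rho_{\text{in}}\ell_{\max}(1+c\delta)$. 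Imposing $\delta\le\frac{\mu}{4c\rho_{\text{in}}\ell_{\max}}$ and $\lrout\le\frac{1}{4\rho_{\text{in}}\ell_{\max}(1+c\delta)}$ forces $A\ge\lrout$, so strong monotonicity turns the last term into at most $-\lrout\mu\|\theta^r-\theta^*\|^2$ and yields the factor $1-\lrout\mu/2$; unrolling over $R$ rounds finishes the proof.

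The step I expect to be the main obstacle is the second one: linking the robust-aggregation error and the honest-client variance to a quantity proportional to $\<F(\theta^r),\theta^r-\theta^*>$. This is precisely where arbitrary heterogeneity would normally defeat Byzantine robustness, because $\frac{1}{|\cG|}\sum_{m\in\cG}\|F_m(\theta^r)\|^2$ generally does not vanish at $\theta^*$, leaving an irreducible floor the adversary can exploit. The over-parameterized regime (Assumption~\ref{as:overparam_person}, giving $F_m(\theta^*)=0$) combined with cocoercivity is what makes $S^r$ shrink in proportion to the suboptimality, and aligning the constants in the $\delta$- and $\lrout$-thresholds so that $A\ge\lrout$ holds cleanly is the delicate part.
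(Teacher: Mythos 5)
Your proposal is correct and follows essentially the same route as the paper's proof of Theorem~\ref{thm:Byz_convergence}: split the aggregated update into the honest average $\og^r$ plus the aggregation error $\hg^r-\og^r$, bound both the honest second moment and the pairwise spread fed to the $(\delta,c)$-robust aggregator by a multiple of $\langle F(\theta^r),\theta^r-\theta^*\rangle$ (via cocoercivity combined with $F_m(\theta^*)=0$), and close with Young's inequality, Definition~\ref{def:robust_aggr}, and (quasi-)strong monotonicity under smallness conditions on $\delta$ and $\lrout$. Two constant-level caveats: your claim that the pairwise spread $\frac{1}{G(G-1)}\sum_{m,n\in\cG}\EE\left[\|g_m(\theta^r)-g_n(\theta^r)\|^2\right]$ is at most $\rho_{\text{in}}\cdot\frac{1}{G}\sum_{m\in\cG}\|F_m(\theta^r)\|^2$ is off by a factor --- expanding $\EE\left[\|g_m-g_n\|^2\right]$ gives roughly $2\bigl(\rho_{\text{in}}+\tfrac{1}{G-1}\bigr)$ times that quantity, which merely shifts your thresholds on $\delta$ and $\lrout$ --- and where you use $\ell_{\max}$ the paper keeps a separate similarity constant $\ell_{\text{sim}}\le L$ (its Assumption~\ref{as:strong_growth}), which buys milder conditions on $\delta$ when the clients' operators are similar.
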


We emphasize that for arbitrary heterogeneous problems, it is impossible to tolerate even one Byzantine client. In contrast, Theorem~\ref{th:byz_informal} implies that for the considered problem formulation, the proposed algorithm converges linearly to the exact solution (asymptotically) even when data on clients is heterogeneous and a small amount of Byzantines takes part in the training. The complete formulation of the above result is deferred to the supplementary material, see Theorem~\ref{thm:Byz_convergence}.

\subsection{Benefit of cooperation}
As discussed in the introduction, we could have chosen to fit the local data on each client individually. However, the drawback of such approach is that the clients will not benefit from cooperative training. Here, we show that having partial personalization allows to find parameters $\theta$ that are useful for all clients. This implies that in expectation, a new client $m$ that receives $\theta^*$ as initialization will have a better value of post-training risk $f_m(\theta^*, w_m^*(\theta))$.

For simplicity, we showcase the benefit on the function from Example~\ref{ex:least_squares} with regularizers $\phi_m(\theta)\equiv 0$. We have the following result.
\begin{theorem}\label{th:generalization}
    Let $f_m(\theta, w) = \frac{1}{2}\|\mA_m \theta + \mB_m w - y_m\|^2$ and assume we found $\theta^*$ such that $\mathbb{E}_m[F_m(\theta^*)]=0$. Then, $\theta^*$ minimizes the expected risk $\mathbb{E}_m[f_m(\theta, w_m^*(\theta))]$.
\end{theorem}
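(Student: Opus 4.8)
The plan is to recognize that, on this quadratic, the operator $F_m$ coincides with the gradient of the value function $g_m(\theta)\eqdef f_m(\theta, w_m^*(\theta))$, so that the hypothesis $\Em{F_m(\theta^*)}=0$ is exactly the first-order stationarity condition for the convex objective $\Em{g_m(\theta)}$. First I would compute $w_m^*(\theta)$ in closed form: from $\nabla_2 f_m(\theta,w)=\mB_m^\top(\mA_m\theta+\mB_m w-y_m)=0$ one obtains the affine map $w_m^*(\theta)=\mB_m^\dagger(y_m-\mA_m\theta)$, whence the optimal residual simplifies to $\mA_m\theta+\mB_m w_m^*(\theta)-y_m=-\mathbf{P}_m(y_m-\mA_m\theta)$, where $\mathbf{P}_m\eqdef\mI-\mB_m\mB_m^\dagger$ is the orthogonal projector onto the orthogonal complement of $\range(\mB_m)$, hence symmetric and idempotent. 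Substituting gives $g_m(\theta)=\tfrac12\|\mathbf{P}_m(\mA_m\theta-y_m)\|^2$, a squared norm of an affine function of $\theta$ and therefore convex, with positive semidefinite Hessian $\mA_m^\top\mathbf{P}_m\mA_m$.

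The central step, which I expect to be the crux, is the identity $F_m(\theta)=\nabla g_m(\theta)$. Conceptually this is the envelope theorem: applying the chain rule to $g_m$ yields $\nabla_1 f_m(\theta,w_m^*(\theta))+(\nabla_\theta w_m^*(\theta))^\top\nabla_2 f_m(\theta,w_m^*(\theta))$, and the second term vanishes because $w_m^*(\theta)$ minimizes $f_m(\theta,\cdot)$, so $\nabla_2 f_m(\theta,w_m^*(\theta))=0$. This is fully rigorous here because $w_m^*(\theta)$ is affine, hence smooth, so no differentiability caveats arise. To keep the argument self-contained I would instead verify the identity directly from the explicit forms: $\nabla g_m(\theta)=\mA_m^\top\mathbf{P}_m(\mA_m\theta-y_m)$, while $F_m(\theta)=\mA_m^\top(\mA_m\theta+\mB_m w_m^*(\theta)-y_m)=\mA_m^\top(-\mathbf{P}_m(y_m-\mA_m\theta))=\mA_m^\top\mathbf{P}_m(\mA_m\theta-y_m)$, and the two coincide.

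Finally I would assemble the pieces. Writing $G(\theta)\eqdef\Em{g_m(\theta)}$, linearity of expectation together with the identity above gives $\nabla G(\theta)=\Em{\nabla g_m(\theta)}=\Em{F_m(\theta)}$. The hypothesis $\Em{F_m(\theta^*)}=0$ therefore reads $\nabla G(\theta^*)=0$; since $G$ is an expectation of convex functions and hence convex, any stationary point is a global minimizer, so $\theta^*$ minimizes $\Em{f_m(\theta,w_m^*(\theta))}$, as claimed.

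The only point demanding extra care is the interchange of expectation and gradient in $\nabla G=\Em{\nabla g_m}$; this is immediate for a finite client population, and otherwise follows from dominated convergence using the quadratic growth of the $g_m$. Everything else is routine once the envelope identity $F_m=\nabla g_m$ is in place, which is the single substantive observation driving the result.
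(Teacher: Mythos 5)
Your proposal is correct and follows essentially the same route as the paper's proof: both compute $w_m^*(\theta)=\mB_m^\dagger(y_m-\mA_m\theta)$ explicitly, use the symmetric-idempotent property of the projector $\mI-\mB_m\mB_m^\dagger$ to identify $\mathbb{E}_m[F_m(\theta)]$ with the gradient of the convex quadratic $\frac{1}{2}\mathbb{E}_m\left[\|(\mI-\mB_m\mB_m^\dagger)(\mA_m\theta-y_m)\|^2\right]$, and verify that this quadratic is exactly the expected risk $\mathbb{E}_m[f_m(\theta,w_m^*(\theta))]$. Your envelope-theorem framing is a nice conceptual gloss, but the substantive computation you fall back on is identical to the paper's.
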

The key observation of Theorem~\ref{th:generalization} is that finding the solution to $\E{F_m(\theta)}=0$ is equivalent to minimizing $\E{f_m(\theta, w_m^*(\theta))}$. This is surprising because the gradient of the latter \emph{is not equal} to $\E{F_m(\theta)}$. Nevertheless, the obtained $\theta^*$ turns out to be optimal for solving a more challenging bilevel optimization problem. 

When we combine the result of Theorem~\ref{th:generalization} with Assumption~\ref{as:overparam_person}, we get
\[
    (\theta^*, w_m^*(\theta)) = \argmin_{\theta, w} f_m(\theta, w)\qquad \text{for all } m.
\]
This means that the combined partially personalized model $\theta^*, w_m^*(\theta^*)$ fits the data of client $m$ perfectly. When the model is deployed using the found $\theta^*$, all new client benefit from having a part of the model pretrained optimally for them. This reduces the computation load, especially if $\theta$ is a big part of the model.
\subsection{How to split the parameters}
The derived theory gives us the following insights:
\begin{enumerate}
    \item To break the curse of data heterogeneity, the personalized part of the model $w$ must be sufficiently large. Then, Assumption~\ref{as:overparam_person} is satisfied and the training will be easier.
    \item If the previous condition is satisfied, Theorem~\ref{th:generalization} suggests that every client will be able to fit the data perfectly by fine-tuning only $w$. Thus, the smaller personalized part $w$ is, the easier will be its fine-tuning.
\end{enumerate}
Therefore, the balance between $\theta$ and $w$ when splitting the parameters is crucial. In some cases, it is known that fine-tuning just the batch normalization layers can be sufficient~\cite{li2020fedbn}. Our theory suggests that the key quantity to measure the quality of the split is $\|F_m(\theta)\|$, which we can compute in practice. If it is too close to 0, $\theta$ will not be updated, so one should consider decreasing the number of personalized parameters. It seems less obvious how to detect that we should personalize \emph{more} parameters. This question deserves a solid study, so we leave a further exploration of optimal parameter splits for future work.

\section{Experiments}
\begin{figure}[!t]
    \begin{center}
        \begin{tabular}{cccc}
            \includegraphics[width=0.24\linewidth]{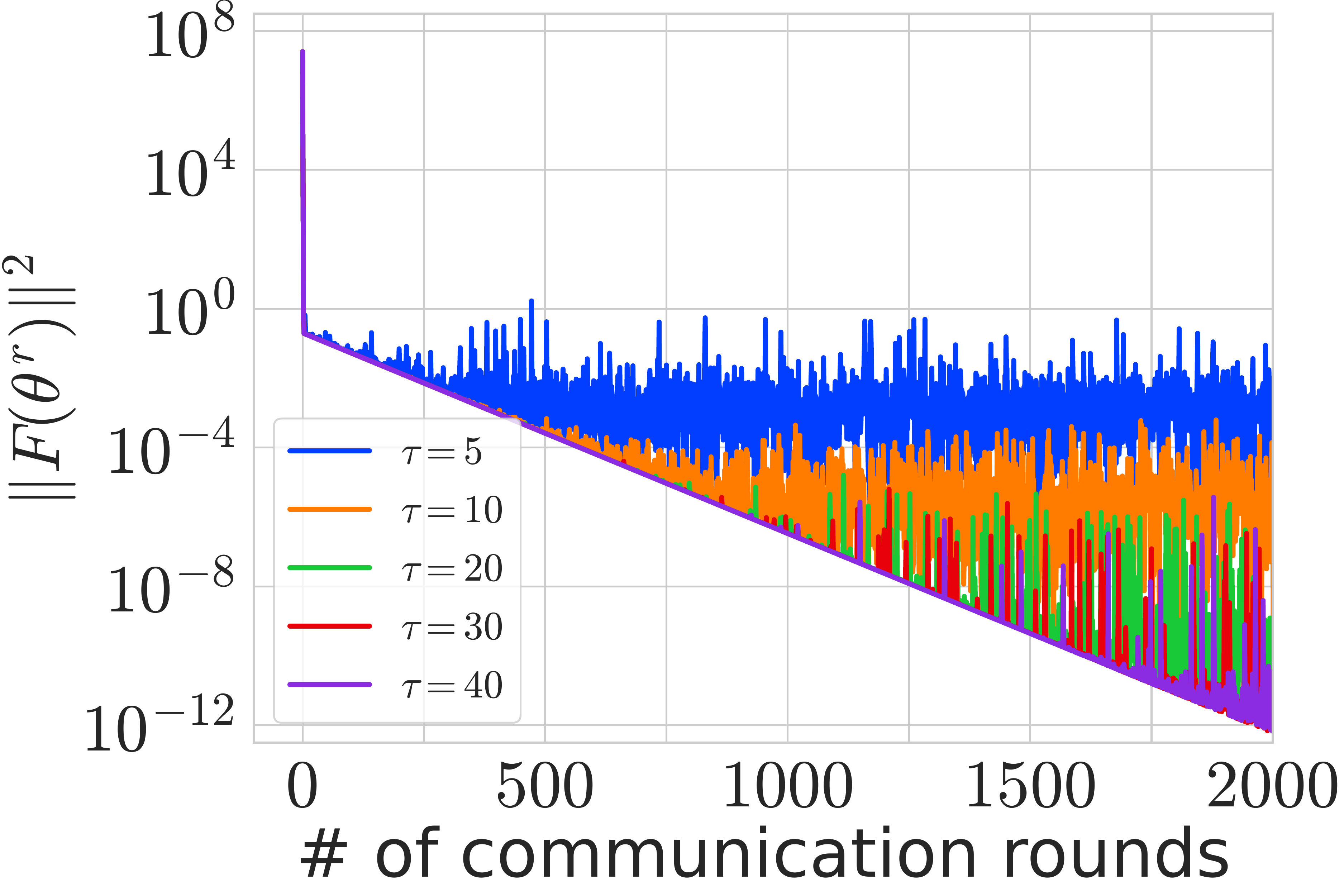} &\hspace{-5mm}
            \includegraphics[width=0.24\linewidth]{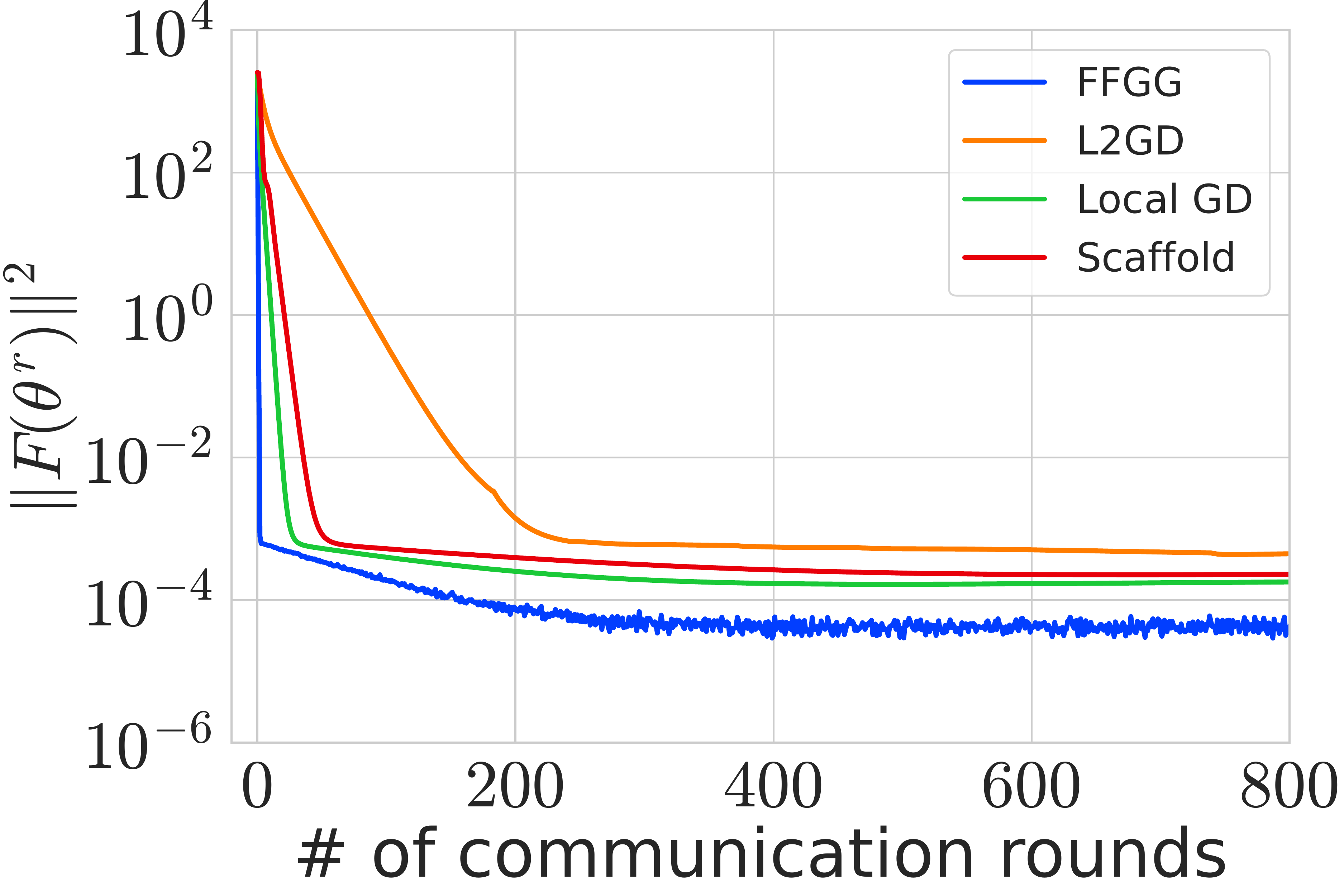} & \hspace{-5mm}
            \includegraphics[width=0.24\linewidth]{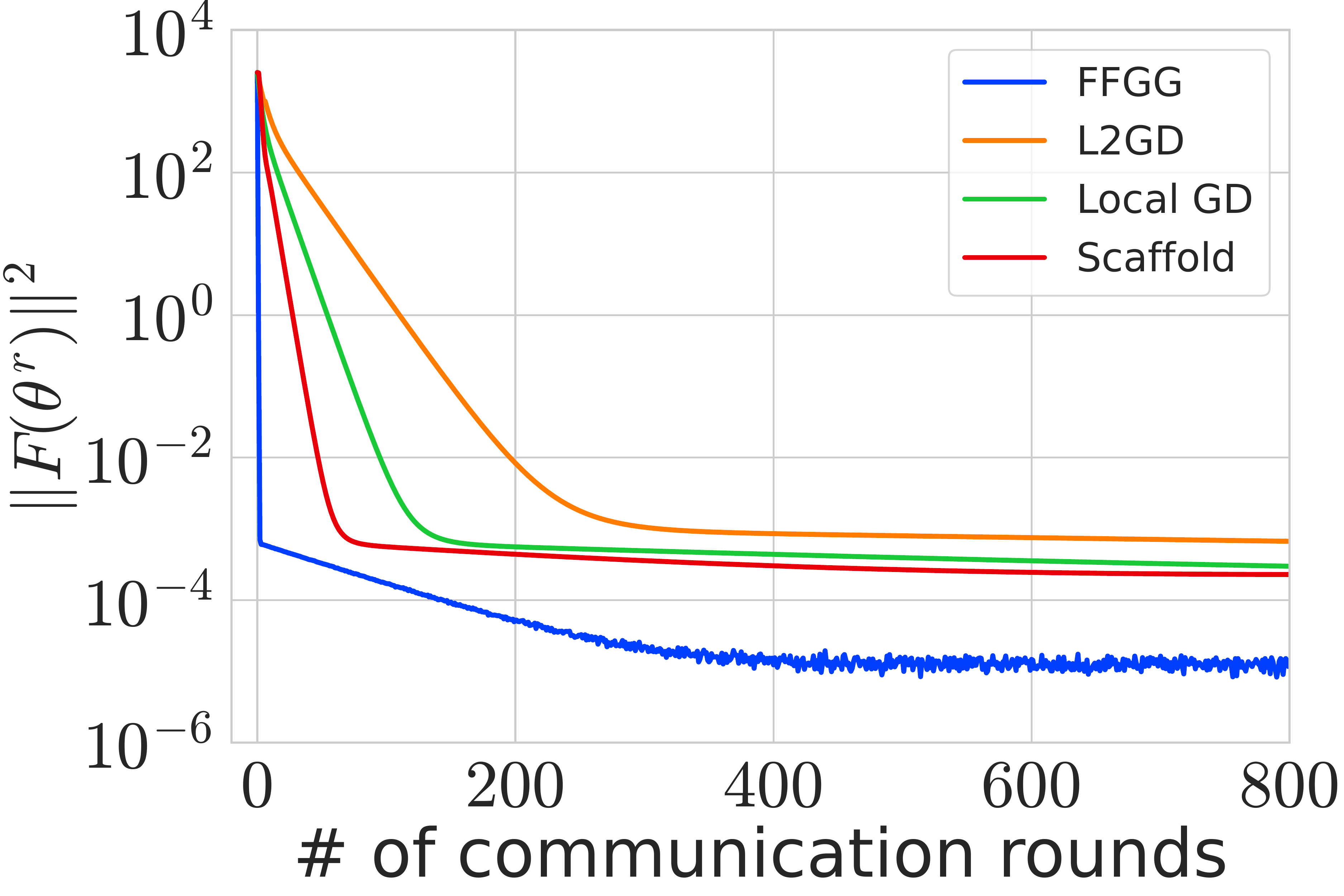} & \hspace{-5mm}
            \includegraphics[width=0.24\linewidth]{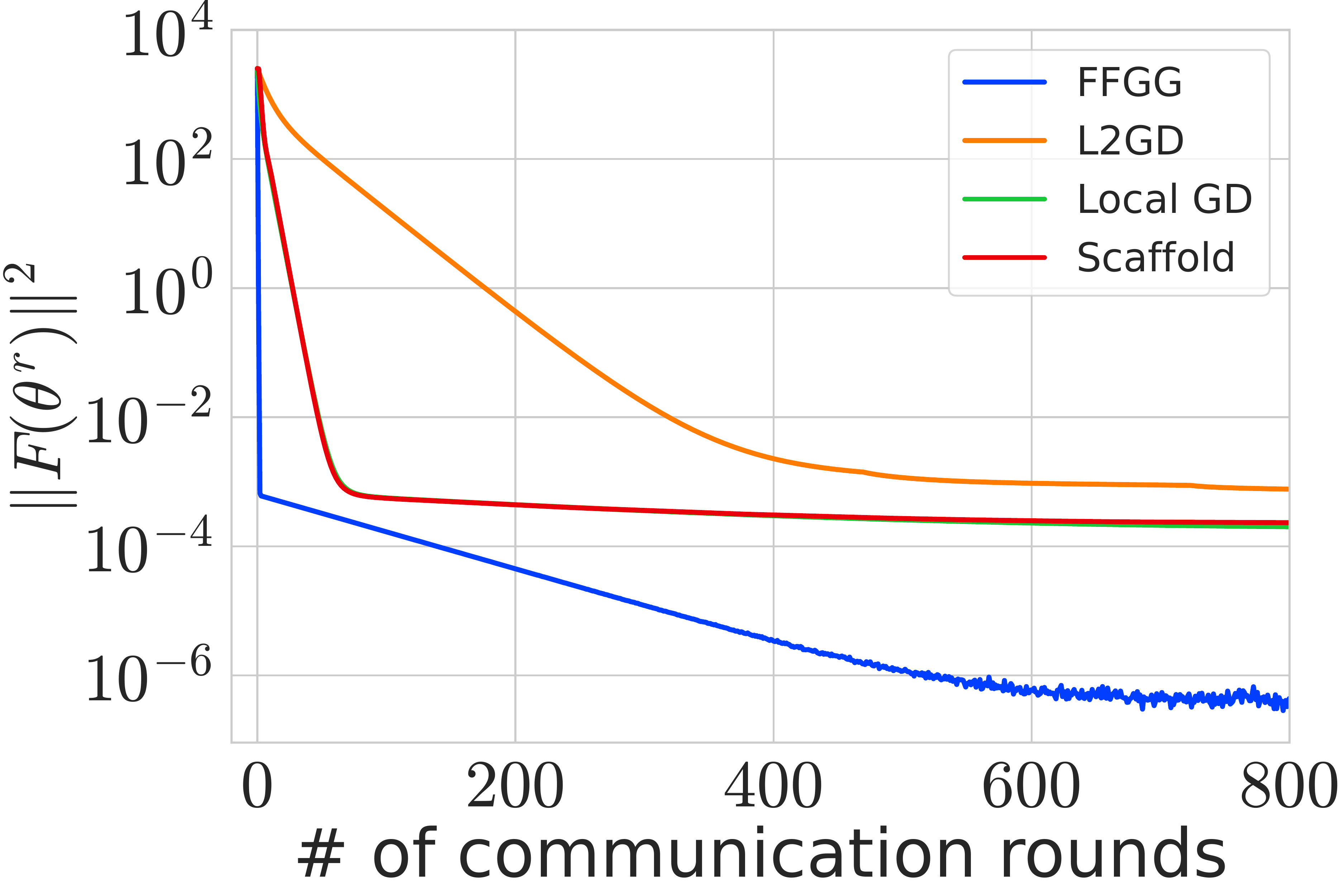}\\
            {\small (a)} &
            {\small (b): $\tau=100$}  &
            {\small (c): $\tau=200$}&
            {\small (d): $\tau=500$}
        \end{tabular}             
    \end{center}
    \caption{(a): convergence of FFGG varying the number of local steps $\tau$; (b-d): comparison of FFGG against Scaffold, Local GD, and L2GD varying the number of local steps.}
    \label{fig:experiments}
\end{figure}

The detailed description of all experimental setups is deferred to the Appendix~\ref{appendix:experiments}.

\subsection{The more local work, the better the convergence}\label{exp:change_tau}

In our first experiment, we study the convergence of Algorithm~\ref{alg:ftfg} with inexact gradient computation. We test it on the problem from Example~\ref{ex:least_squares}, namely:
\[
\psi_m(\theta) = \frac{1}{2}\|\mH_m\theta-b_m\|^2, \quad f_m(\theta, w) = \psi_m(\theta) + \frac{1}{2}\|\mA_m\theta+\mB_m w - y_m\|^2,
\]
where $\mH_m, \mA_m \in\R^{n\times d_{\theta}}, \mB_m\in \R^{n\times d_w}, b_m, y_m \in \R^{n}$ with $n=10000$, $d_{\theta} = 100, d_w=50.$ The number of clients is $32.$ All matrices are generated from uniform distribution on $[0,1]$, and then divided by the second dimension (i.e., $d_{\theta}$ for $\mH_m, \mA_m$ and $d_w$ for $\mB_m$). 

We use SciPy's \cite{SciPy} implementation of Conjugate Gradient (CG) method to solve local subproblem in $w$ and vary the number of inner steps $\tau$ of CG (see Figure~\ref{fig:experiments}, (a)). The convergence with a small number of local steps $\tau=10$ is already sufficient to achieve an error as small as $10^{-4}$. Moreover, Algorithm~\ref{alg:ftfg} converges to the exact solution for $\tau \in \{30, 40\}$, i.e., without finding the precise solution of the local subproblem. This experiment shows that the convergence of FFGG indeed improves with an increasing amount of local work, and the method is overall practical.

\subsection{Comparison against other methods}\label{exp:comparison}

Next, we compare FFGG combined with Algorithm~\ref{alg:fine_tune} as a fine-tuner against non-personalized methods such as Scaffold \cite{karimireddy2019scaffold} and Local GD, and fully personalized method L2GD \cite{hanzely2020federated}. For Scaffold we set outer and inner stepsizes to be equal to  $0.5$ and $\frac{1}{L_f\tau}$ correspondingly, where $L_f$ is a smoothness constant of $f_m$. For Local GD the stepsize is equal to $\frac{1}{L_f\tau}$. Finally, for L2GD we choose $\lambda=0.1$ and stepsize to be equal to $(2M)^{-1}\max\left\{L(1-p)^{-1}, \lambda p^{-1}\right\}$, where $p=\tau^{-1}$ (we make such choice for $p$ to make the number of local steps to be close to $\tau$ in expectation). 

We test the convergence of the methods changing the number of local steps $\tau \in \{100,200,500\}$. Figure~\ref{fig:experiments} (b-d) shows that FFGG outperforms other baselines in all cases. We also highlight that FFGG's convergence improves when we increase the number of local steps as it is predicted by theory.
\begin{figure}[t]
\begin{algorithm}[H]
  \caption{Local FFGG}
  \label{alg:local_ffgg}
  \textbf{Input:} initialization $\theta^0\in \R^{d}$, stepsize $\lrout>0$
  \begin{algorithmic}[1]
    \For{$r = 0,1,2,\dots$}
    \State Sample a batch of clients $C^r$
    \For{client $m\in C^r$}
        \State Set $\theta^{r,0}=\theta^r$, $w_m^{r,0}\approx \argmin_{w}f_m(\theta^r, w)$ \qquad {\color{gray}(E.g., find $w_m^{r,0}$ using Algorithm~\ref{alg:fine_tune})}
        \For{$k=0,1,\dotsc, K-1$}
            \State $w_m^{r, k+1} = w_m^{r, k} - \lrin \nabla_2 f_m(\theta^{r, k}, w_m^{r,k})$
            \State $\theta^{r, k+1} = \theta^{r,k} - \lrout \nabla_1 f_m(\theta^{r, k}, w_m^{r, k+1})$
        \EndFor
        \State $\Delta_m^{r} = \theta^r - \theta^{r, K}$ 
    \EndFor
    \State $\theta^{r+1} = \theta^r - \frac{1}{|C^r|}\sum_{m\in C^r}\Delta_m^{r}$  % \label{line:averaging}
    \EndFor
  \end{algorithmic}
\end{algorithm}
\end{figure}

\begin{table}[t]
    \centering
    \caption{Test accuracy across different model variants and datasets.}
    \label{tab:nn_s}
        \begin{tabular}{llll}
            \toprule
            \textbf{Variant/Test acc. (\%)}                         & \textbf{FEMNIST} & \textbf{GLDv2} &  \textbf{StackOverflow}  \\ \midrule
            FedAvg    & 93.18   &    51.43  & 23.82  \\
            FFGG (Input Layer)    & 93.60{\tiny $\pm 0.02$} &    51.25{\tiny $\pm 0.03$}  & 24.11{\tiny $\pm 0.02$}       \\
            FFGG (Output Layer)    & 93.58{\tiny $\pm 0.04$}  &      55.20{\tiny $\pm 0.04$}   & \textbf{24.92{\tiny $\pm 0.01$}}   \\
            FFGG (Adapter)    & \textbf{94.26{\tiny $\pm 0.03$}} &  \textbf{64.93{\tiny $\pm 0.04$}}   & 24.80{\tiny $\pm 0.01$}        \\
            \bottomrule
        \end{tabular}
\end{table}
\subsection{Comparison on real-world datasets}\label{exp:neural_networks}

Finally, we evaluate our method on real-world federated datasets, namely FEMNIST (character recognition), GLDv2 (Visual Landmark Recognition), and StackOverflow (next word prediction), and show that partial participation leads to a non-trivial performance boost when compared to non-personalized FedAvg. We follow the same setup as \cite{pillutla2022federated}. All the experimental details and hyperparameter selections are provided in the appendix. As a partial personalization, we consider three partitioning schemes:
\begin{itemize}
    \item \textit{Input layer personalization}: This architectural design customizes the input layer to learn personalized representations, whereas the remaining part of the model is common to all clients. For predicting the next word, the initial transformer layer is personalized instead of the embedding layer.
    \item \textit{Output layer personalization}: This design learns a common representation but customizes the prediction layer. In a transformer model, we adapt the final transformer layer instead of the output layer for personalization.
    \item \textit{Adapter personalization}: Every client uses a personalized low-rank adapter to fine-tune the global model. 
\end{itemize}

We also introduce an algorithmic extension to \Cref{alg:ftfg} to incorporate local steps with respect to global parameters into the training loop. After receiving a global model from the server, clients randomly initialize personalized parameters and perform one local epoch with respect to these parameters to approximate $w_m^*(\theta)$. Following this step, we alternate between stochastic gradient steps with respect to global and local parameters. We only initialize $w_m$ at the beginning of local training. This approach allows us to approximate $w_m^*(\theta)$ with a single gradient step after initial fine-tuning. The pseudocode for this algorithm is provided in \Cref{alg:local_ffgg}. Our results are presented in \Cref{tab:nn_s}. The displayed values represent averages over three independent seeds. It is worth noting that for both datasets, FFGG leads to an improvement of at least one percent in final test accuracy. We also observe that Adapter is a particularly useful technique for partially personalizing local models. The largest improvement, exceeding 13\%, is observed for FFFG (Adapter) on the GLDv2 dataset. In this particular case, the final train accuracy for all clients is 100\%, which aligns well with our theory as it implies that $F_m({\teal \theta^*})=0$.

\section{Conclusion}
We proposed a formulation of partial personalization that yields provable benefits in Federated Learning. We proved that the problem can always be made \emph{overpersonalized} and the data heterogeneity slowdown can be completely eradicated. We also illustrated this by showing that, in contrast to standard FL, asynchronous training with partial personalization converges precisely, and partial personalization can be made Byzantine-robust. Our theory also suggests algorithmic changes to how the training should be performed and allows for generic local solvers. Compared to the work of Pillutla et al.~\cite{pillutla2022federated}, our methods are stateless, and our theory does not require making stepsizes smaller than $\mathcal{O}\left(\nicefrac{1}{\tau}\right)$, where $\tau$ is the number of local steps. Finally, our assumptions are satisfied for several natural classes of functions, highlighting that our formulation is quite general. 

There are several open questions that can be of interest to make personalization more practical. First of all, a direction that seems important to us is how we can find optimal splits between $\theta$ and $w$ to achieve both the speed up of removed data heterogeneity and make sure that clients benefit from cooperation. Secondly, parameter-efficient fine-tuning might bring even more speed ups. Lastly, while the statistical effect of cooperation was left out of consideration in our work, it can bring new insights and deserves some attention.

% \clearpage

\bibliographystyle{plain}
\bibliography{fl_repr}

\begin{thebibliography}{10}

\bibitem{agarwal2011distributed}
Alekh Agarwal and John~C. Duchi.
\newblock Distributed delayed stochastic optimization.
\newblock {\em Advances in Neural Information Processing Systems}, 24, 2011.

\bibitem{alistarh2018byzantine}
Dan Alistarh, Zeyuan Allen-Zhu, and Jerry Li.
\newblock Byzantine stochastic gradient descent.
\newblock {\em Advances in Neural Information Processing Systems}, 31, 2018.

\bibitem{allen2021byzantine}
Zeyuan Allen-Zhu, Faeze Ebrahimianghazani, Jerry Li, and Dan Alistarh.
\newblock Byzantine-resilient non-convex stochastic gradient descent.
\newblock In {\em International Conference on Learning Representations}, 2021.

\bibitem{allen2019convergence}
Zeyuan Allen-Zhu, Yuanzhi Li, and Zhao Song.
\newblock A convergence theory for deep learning via over-parameterization.
\newblock In {\em International Conference on Machine Learning}, pages
  242--252. PMLR, 2019.

\bibitem{baruch2019little}
Gilad Baruch, Moran Baruch, and Yoav Goldberg.
\newblock A little is enough: Circumventing defenses for distributed learning.
\newblock {\em Advances in Neural Information Processing Systems}, 32, 2019.

\bibitem{blanchard2017machine}
Peva Blanchard, El~Mahdi El~Mhamdi, Rachid Guerraoui, and Julien Stainer.
\newblock Machine learning with adversaries: Byzantine tolerant gradient
  descent.
\newblock {\em Advances in Neural Information Processing Systems}, 30, 2017.

\bibitem{chen2018federated}
Fei Chen, Mi~Luo, Zhenhua Dong, Zhenguo Li, and Xiuqiang He.
\newblock Federated meta-learning with fast convergence and efficient
  communication.
\newblock {\em arXiv preprint arXiv:1802.07876}, 2018.

\bibitem{chen2017distributed}
Yudong Chen, Lili Su, and Jiaming Xu.
\newblock Distributed statistical machine learning in adversarial settings:
  {Byzantine} gradient descent.
\newblock {\em Proceedings of the ACM on Measurement and Analysis of Computing
  Systems}, 1(2):1--25, 2017.

\bibitem{cohen2017emnist}
Gregory Cohen, Saeed Afshar, Jonathan Tapson, and Andre Van~Schaik.
\newblock Emnist: Extending mnist to handwritten letters.
\newblock In {\em 2017 international joint conference on neural networks
  (IJCNN)}, pages 2921--2926. IEEE, 2017.

\bibitem{damaskinos2019aggregathor}
Georgios Damaskinos, El-Mahdi El-Mhamdi, Rachid Guerraoui, Arsany Guirguis, and
  S{\'e}bastien Rouault.
\newblock Aggregathor: Byzantine machine learning via robust gradient
  aggregation.
\newblock {\em Proceedings of Machine Learning and Systems}, 1:81--106, 2019.

\bibitem{deng2009imagenet}
Jia Deng, Wei Dong, Richard Socher, Li-Jia Li, Kai Li, and Li~Fei-Fei.
\newblock Imagenet: A large-scale hierarchical image database.
\newblock In {\em 2009 IEEE conference on computer vision and pattern
  recognition}, pages 248--255. Ieee, 2009.

\bibitem{guerraoui2018hidden}
El-Mahdi El-Mhamdi, Rachid Guerraoui, and S{\'e}bastien Rouault.
\newblock The hidden vulnerability of distributed learning in byzantium.
\newblock In {\em International Conference on Machine Learning}, pages
  3521--3530. PMLR, 2018.

\bibitem{fallah2020personalized}
Alireza Fallah, Aryan Mokhtari, and Asuman Ozdaglar.
\newblock Personalized federated learning with theoretical guarantees: A
  model-agnostic meta-learning approach.
\newblock {\em Advances in Neural Information Processing Systems},
  33:3557--3568, 2020.

\bibitem{gorbunov2022secure}
Eduard Gorbunov, Alexander Borzunov, Michael Diskin, and Max Ryabinin.
\newblock Secure distributed training at scale.
\newblock In {\em International Conference on Machine Learning}, pages
  7679--7739. PMLR, 2022.

\bibitem{gorbunov2022variance}
Eduard Gorbunov, Samuel Horv{\'a}th, Peter Richt{\'a}rik, and Gauthier Gidel.
\newblock Variance reduction is an antidote to {Byzantines}: Better rates,
  weaker assumptions and communication compression as a cherry on the top.
\newblock {\em International Conference on Learning Representations}, 2023.

\bibitem{grudzien2022can}
Micha{\l} Grudzie{\'n}, Grigory Malinovsky, and Peter Richt{\'a}rik.
\newblock Can $5^{\textrm{th}}$ generation local training methods support
  client sampling? yes!
\newblock {\em arXiv preprint arXiv:2212.14370}, 2022.

\bibitem{hanzely2020federated}
Filip Hanzely and Peter Richt{\'a}rik.
\newblock Federated learning of a mixture of global and local models.
\newblock {\em arXiv preprint arXiv:2002.05516}, 2020.

\bibitem{he2016deep}
Kaiming He, Xiangyu Zhang, Shaoqing Ren, and Jian Sun.
\newblock Deep residual learning for image recognition.
\newblock In {\em Proceedings of the IEEE conference on computer vision and
  pattern recognition}, pages 770--778, 2016.

\bibitem{hsu2020federated}
Tzu-Ming~Harry Hsu, Hang Qi, and Matthew Brown.
\newblock Federated visual classification with real-world data distribution.
\newblock In {\em Computer Vision--ECCV 2020: 16th European Conference,
  Glasgow, UK, August 23--28, 2020, Proceedings, Part X 16}, pages 76--92.
  Springer, 2020.

\bibitem{hu2022lora}
Edward~J. Hu, Yelong Shen, Phillip Wallis, Zeyuan Allen-Zhu, Yuanzhi Li, Shean
  Wang, Lu~Wang, and Weizhu Chen.
\newblock Lo{RA}: Low-rank adaptation of large language models.
\newblock In {\em International Conference on Learning Representations}, 2022.

\bibitem{jiang2019improving}
Yihan Jiang, Jakub Kone{\v{c}}n{\`y}, Keith Rush, and Sreeram Kannan.
\newblock Improving federated learning personalization via model agnostic meta
  learning.
\newblock {\em arXiv preprint arXiv:1909.12488}, 2019.

\bibitem{kairouz2019advances}
Peter Kairouz, H.~Brendan McMahan, Brendan Avent, Aur{\'e}lien Bellet, Mehdi
  Bennis, Arjun~Nitin Bhagoji, Keith Bonawitz, Zachary Charles, Graham Cormode,
  Rachel Cummings, et~al.
\newblock Advances and open problems in federated learning.
\newblock {\em Foundations and Trends{\textregistered} in Machine Learning},
  14(1), 2021.

\bibitem{karimireddy2021learning}
Sai~Praneeth Karimireddy, Lie He, and Martin Jaggi.
\newblock Learning from history for {Byzantine} robust optimization.
\newblock In {\em International Conference on Machine Learning}, pages
  5311--5319. PMLR, 2021.

\bibitem{karimireddy2022byzantine}
Sai~Praneeth Karimireddy, Lie He, and Martin Jaggi.
\newblock Byzantine-robust learning on heterogeneous datasets via bucketing.
\newblock In {\em International Conference on Learning Representations}, 2022.

\bibitem{karimireddy2019scaffold}
Sai~Praneeth Karimireddy, Satyen Kale, Mehryar Mohri, Sashank Reddi,
  Sebastian~U. Stich, and Ananda~Theertha Suresh.
\newblock {SCAFFOLD}: stochastic controlled averaging for federated learning.
\newblock In {\em Proceedings of the 37th International Conference on Machine
  Learning}, pages 5132--5143. PMLR, 2020.

\bibitem{khaled2019first}
Ahmed Khaled, Konstantin Mishchenko, and Peter Richt{\'a}rik.
\newblock First analysis of local {GD} on heterogeneous data.
\newblock {\em arXiv preprint arXiv:1909.04715}, 2019.

\bibitem{khaled2020tighter}
Ahmed Khaled, Konstantin Mishchenko, and Peter Richt{\'a}rik.
\newblock Tighter theory for local {SGD} on identical and heterogeneous data.
\newblock In {\em Proceedings of the 23rd International Conference on
  Artificial Intelligence and Statistics}, pages 4519--4529. PMLR, 2020.

\bibitem{koloskov2022asharper}
Anastasia Koloskova, Sebastian~U. Stich, and Martin Jaggi.
\newblock Sharper convergence guarantees for asynchronous {SGD} for distributed
  and federated learning.
\newblock In {\em Advances in Neural Information Processing Systems}, 2022.

\bibitem{konevcny2016federated}
Jakub Kone\v{c}n\'{y}, H.~Brendan McMahan, Felix Yu, Peter Richt\'{a}rik,
  Ananda~Theertha Suresh, and Dave Bacon.
\newblock Federated learning: Strategies for improving communication
  efficiency.
\newblock In {\em NIPS Private Multi-Party Machine Learning Workshop}, 2016.

\bibitem{lamport1982byzantine}
Leslie Lamport, Robert Shostak, and Marshall Pease.
\newblock The byzantine generals problem.
\newblock {\em ACM Transactions on Programming Languages and Systems},
  4(3):382--401, 1982.

\bibitem{li2022local}
Junyi Li, Feihu Huang, and Heng Huang.
\newblock Local stochastic bilevel optimization with momentum-based variance
  reduction.
\newblock {\em arXiv preprint arXiv:2205.01608}, 2022.

\bibitem{li2020federated}
Tian Li, Anit~Kumar Sahu, Manzil Zaheer, Maziar Sanjabi, Ameet Talwalkar, and
  Virginia Smith.
\newblock Federated optimization in heterogeneous networks.
\newblock {\em Proceedings of Machine Learning and Systems}, 2:429--450, 2020.

\bibitem{li2020fedbn}
Xiaoxiao Li, Meirui Jiang, Xiaofei Zhang, Michael Kamp, and Qi~Dou.
\newblock {FedBN}: Federated learning on non-iid features via local batch
  normalization.
\newblock In {\em International Conference on Learning Representations}, 2020.

\bibitem{loizou2021stochastic}
Nicolas Loizou, Hugo Berard, Gauthier Gidel, Ioannis Mitliagkas, and Simon
  Lacoste-Julien.
\newblock Stochastic gradient descent-ascent and consensus optimization for
  smooth games: Convergence analysis under expected co-coercivity.
\newblock {\em Advances in Neural Information Processing Systems},
  34:19095--19108, 2021.

\bibitem{lyu2020privacy}
Lingjuan Lyu, Han Yu, Xingjun Ma, Lichao Sun, Jun Zhao, Qiang Yang, and
  Philip~S Yu.
\newblock Privacy and robustness in federated learning: Attacks and defenses.
\newblock {\em arXiv preprint arXiv:2012.06337}, 2020.

\bibitem{Mangasarian95}
Olvi~L. Mangasarian.
\newblock Parallel gradient distribution in unconstrained optimization.
\newblock {\em SIAM Journal on Control and Optimization}, 33(6):1916--1925,
  1995.

\bibitem{mania2017perturbed}
Horia Mania, Xinghao Pan, Dimitris Papailiopoulos, Benjamin Recht, Kannan
  Ramchandran, and Michael~I. Jordan.
\newblock Perturbed iterate analysis for asynchronous stochastic optimization.
\newblock {\em SIAM Journal on Optimization}, 27(4):2202--2229, 2017.

\bibitem{mcmahan2017communication}
H.~Brendan McMahan, Eider Moore, Daniel Ramage, Seth Hampson, and Blaise
  Ag\"{u}era~y Arcas.
\newblock Communication-efficient learning of deep networks from decentralized
  data.
\newblock In {\em Proceedings of the 20th International Conference on
  Artificial Intelligence and Statistics}, pages 1273--1282. PMLR, 2017.

\bibitem{mertikopoulos2019learning}
Panayotis Mertikopoulos and Zhengyuan Zhou.
\newblock Learning in games with continuous action sets and unknown payoff
  functions.
\newblock {\em Mathematical Programming}, 173:465--507, 2019.

\bibitem{mishchenko2022asynchronous}
Konstantin Mishchenko, Francis Bach, Mathieu Even, and Blake Woodworth.
\newblock Asynchronous {SGD} beats minibatch {SGD} under arbitrary delays.
\newblock In Alice~H. Oh, Alekh Agarwal, Danielle Belgrave, and Kyunghyun Cho,
  editors, {\em Advances in Neural Information Processing Systems}, 2022.

\bibitem{mishchenko2022proxskip}
Konstantin Mishchenko, Grigory Malinovsky, Sebastian Stich, and Peter
  Richt{\'a}rik.
\newblock {ProxSkip}: Yes! local gradient steps provably lead to communication
  acceleration! finally!
\newblock In {\em International Conference on Machine Learning}, volume 162 of
  {\em Proceedings of Machine Learning Research}, pages 15750--15769. PMLR,
  17--23 Jul 2022.

\bibitem{Nesterov2013}
Yurii Nesterov.
\newblock {\em Introductory lectures on convex optimization: A basic course},
  volume~87.
\newblock Springer Science \& Business Media, 2013.

\bibitem{nguyen2022federated}
John Nguyen, Kshitiz Malik, Hongyuan Zhan, Ashkan Yousefpour, Mike Rabbat, Mani
  Malek, and Dzmitry Huba.
\newblock Federated learning with buffered asynchronous aggregation.
\newblock In {\em International Conference on Artificial Intelligence and
  Statistics}, pages 3581--3607. PMLR, 2022.

\bibitem{nouiehed2019solving}
Maher Nouiehed, Maziar Sanjabi, Tianjian Huang, Jason~D. Lee, and Meisam
  Razaviyayn.
\newblock Solving a class of non-convex min-max games using iterative first
  order methods.
\newblock {\em Advances in Neural Information Processing Systems}, 32, 2019.

\bibitem{pillutla2022robust}
Krishna Pillutla, Sham~M. Kakade, and Zaid Harchaoui.
\newblock Robust aggregation for federated learning.
\newblock {\em IEEE Transactions on Signal Processing}, 70:1142--1154, 2022.

\bibitem{pillutla2022federated}
Krishna Pillutla, Kshitiz Malik, Abdel-Rahman Mohamed, Mike Rabbat, Maziar
  Sanjabi, and Lin Xiao.
\newblock Federated learning with partial model personalization.
\newblock In {\em International Conference on Machine Learning}, pages
  17716--17758. PMLR, 2022.

\bibitem{Raghu2020Rapid}
Aniruddh Raghu, Maithra Raghu, Samy Bengio, and Oriol Vinyals.
\newblock Rapid learning or feature reuse? {Towards} understanding the
  effectiveness of {MAML}.
\newblock In {\em International Conference on Learning Representations}, 2020.

\bibitem{reddi2020adaptive}
Sashank Reddi, Zachary Charles, Manzil Zaheer, Zachary Garrett, Keith Rush,
  Jakub Kone{\v{c}}n{\`y}, Sanjiv Kumar, and H~Brendan McMahan.
\newblock Adaptive federated optimization.
\newblock {\em arXiv preprint arXiv:2003.00295}, 2020.

\bibitem{song2020optimistic}
Chaobing Song, Zhengyuan Zhou, Yichao Zhou, Yong Jiang, and Yi~Ma.
\newblock Optimistic dual extrapolation for coherent non-monotone variational
  inequalities.
\newblock {\em Advances in Neural Information Processing Systems},
  33:14303--14314, 2020.

\bibitem{Stich2018}
Sebastian~U. Stich.
\newblock Local {SGD} converges fast and communicates little.
\newblock In {\em International Conference on Learning Representations}, 2019.

\bibitem{su2016fault}
Lili Su and Nitin~H. Vaidya.
\newblock Fault-tolerant multi-agent optimization: optimal iterative
  distributed algorithms.
\newblock In {\em Proceedings of the 2016 ACM symposium on principles of
  distributed computing}, pages 425--434, 2016.

\bibitem{tarzanagh22fednest}
Davoud~Ataee Tarzanagh, Mingchen Li, Christos Thrampoulidis, and Samet Oymak.
\newblock {F}ed{N}est: Federated bilevel, minimax, and compositional
  optimization.
\newblock In {\em Proceedings of the 39th International Conference on Machine
  Learning}, volume 162 of {\em Proceedings of Machine Learning Research},
  pages 21146--21179. PMLR, 17--23 Jul 2022.

\bibitem{tsitsiklis1986distributed}
John Tsitsiklis, Dimitri Bertsekas, and Michael Athans.
\newblock Distributed asynchronous deterministic and stochastic gradient
  optimization algorithms.
\newblock {\em IEEE transactions on automatic control}, 31(9):803--812, 1986.

\bibitem{turc2019well}
Iulia Turc, Ming-Wei Chang, Kenton Lee, and Kristina Toutanova.
\newblock Well-read students learn better: On the importance of pre-training
  compact models.
\newblock {\em arXiv preprint arXiv:1908.08962}, 2019.

\bibitem{vaswani2017attention}
Ashish Vaswani, Noam Shazeer, Niki Parmar, Jakob Uszkoreit, Llion Jones,
  Aidan~N Gomez, {\L}ukasz Kaiser, and Illia Polosukhin.
\newblock Attention is all you need.
\newblock {\em Advances in neural information processing systems}, 30, 2017.

\bibitem{SciPy}
Pauli Virtanen, Ralf Gommers, Travis~E. Oliphant, Matt Haberland, Tyler Reddy,
  David Cournapeau, Evgeni Burovski, Pearu Peterson, Warren Weckesser, Jonathan
  Bright, St{\'e}fan~J. {van der Walt}, Matthew Brett, Joshua Wilson, K.~Jarrod
  Millman, Nikolay Mayorov, Andrew R.~J. Nelson, Eric Jones, Robert Kern, Eric
  Larson, C~J Carey, {\.I}lhan Polat, Yu~Feng, Eric~W. Moore, Jake
  {VanderPlas}, Denis Laxalde, Josef Perktold, Robert Cimrman, Ian Henriksen,
  E.~A. Quintero, Charles~R. Harris, Anne~M. Archibald, Ant{\^o}nio~H. Ribeiro,
  Fabian Pedregosa, Paul {van Mulbregt}, and {SciPy 1.0 Contributors}.
\newblock {{SciPy} 1.0: Fundamental Algorithms for Scientific Computing in
  Python}, 2020.

\bibitem{weyand2020google}
Tobias Weyand, Andre Araujo, Bingyi Cao, and Jack Sim.
\newblock Google landmarks dataset v2-a large-scale benchmark for
  instance-level recognition and retrieval.
\newblock In {\em Proceedings of the IEEE/CVF conference on computer vision and
  pattern recognition}, pages 2575--2584, 2020.

\bibitem{wu2020federated}
Zhaoxian Wu, Qing Ling, Tianyi Chen, and Georgios~B. Giannakis.
\newblock Federated variance-reduced stochastic gradient descent with
  robustness to {Byzantine} attacks.
\newblock {\em IEEE Transactions on Signal Processing}, 68:4583--4596, 2020.

\bibitem{xie2020fall}
Cong Xie, Oluwasanmi Koyejo, and Indranil Gupta.
\newblock Fall of empires: Breaking byzantine-tolerant {SGD} by inner product
  manipulation.
\newblock In {\em Uncertainty in Artificial Intelligence}, pages 261--270.
  PMLR, 2020.

\bibitem{yin2018byzantine}
Dong Yin, Yudong Chen, Ramchandran Kannan, and Peter Bartlett.
\newblock Byzantine-robust distributed learning: Towards optimal statistical
  rates.
\newblock In {\em International Conference on Machine Learning}, pages
  5650--5659. PMLR, 2018.

\bibitem{zhu2021broadcast}
Heng Zhu and Qing Ling.
\newblock {BROADCAST}: Reducing both stochastic and compression noise to
  robustify communication-efficient federated learning.
\newblock {\em arXiv preprint arXiv:2104.06685}, 2021.

\end{thebibliography}

\newpage 
\appendix
\section{Deferred proofs}
\subsection{Proof of cocoercivity for Example~\ref{ex:least_squares}}
Below we show that the function $f_m(\theta, w)= \phi_m(\theta) + \frac{1}{2}\|\mA_m \theta + \mB_m w - y_m\|^2$ satisfies Assumption~\ref{as:coco}.
\begin{proof}
    Notice that $\phi_m(\theta)$ does not depend on $w$, so
    \[
        \nabla_1 f_m(\theta, w_m^*(\theta))
        = \nabla \phi_m(\theta) + \mA_m^\top (\mA_m\theta + \mB_m w_m^*(\theta) - y_m).
    \]
    Then, by convexity and $L_\phi$-smoothness of $\phi_m$, we have
    \begin{align*}
        &\<\nabla_1 f_m(\theta_1, w_m^*(\theta_1)) - \nabla_1 f_m(\theta_2, w_m^*(\theta_2)), \theta_1 - \theta_2> \\
        &\qquad = \<\nabla \phi(\theta_1) - \nabla \phi(\theta_2), \theta_1 - \theta_2> + \<\mA_m^\top (\mA_m\theta_1 + \mB_m w_m^*(\theta_1) - \mA_m\theta_2 - \mB_m w_m^*(\theta_2)), \theta_1 - \theta_2> \\
        &\qquad \ge \frac{1}{L_\phi}\|\nabla \phi(\theta_1) - \nabla \phi(\theta_2)\|^2 + \<\mA_m^\top (\mA_m\theta_1 + \mB_m w_m^*(\theta_1) - \mA_m\theta_2 - \mB_m w_m^*(\theta_2)), \theta_1 - \theta_2>.
    \end{align*}
    Let us find the value of $w_m^*(\theta)$. Differentiating $f_m(\theta, w)$ with respect to $w$ and setting the gradient to 0, we get
    \[
        \mB_m^\top (\mA_m\theta + \mB_m w_m^*(\theta) - y_m)=0,
    \]
    whence 
    \[
        \mB_m^\top \mB_m w_m^*(\theta) = \mB_m^\top (y_m - \mA_m\theta) \qquad \textrm{and}\qquad w_m^*(\theta)= \mB_m^\dagger (y_m - \mA_m \theta).
    \]
    Substituting this into the previous lower bound, we get
    \begin{align*}
        &\<\nabla_1 f_m(\theta_1, w_m^*(\theta_1)) - \nabla_1 f_m(\theta_2, w_m^*(\theta_2)), \theta_1 - \theta_2> \\
        &\qquad  \ge \<\mA_m^\top (\mA_m\theta_1 - \mB_m \mB_m^\dagger\mA_m \theta_1 - \mA_m\theta_2 + \mB_m \mB_m^\dagger\mA_m \theta_2), \theta_1 - \theta_2>+\frac{1}{L_\phi}\|\nabla \phi(\theta_1) - \nabla \phi(\theta_2)\|^2 \\
        &\qquad = \<\mA_m^\top (\mI - \mB_m\mB_m^\dagger)\mA_m(\theta_1 - \theta_2), \theta_1 - \theta_2>+\frac{1}{L_\phi}\|\nabla \phi(\theta_1) - \nabla \phi(\theta_2)\|^2.
    \end{align*}
    Let $\mB_m = \mU\mSigma\mV^\top $ be the SVD decomposition of $\mB_m$, then $\mI - \mB_m\mB_m^\dagger = \mI - \mU \mSigma\mSigma^\dagger \mU^\top = \mU(\mI - \mSigma\mSigma^\dagger)\mU^\top$ is a symmetric positive semi-definite matrix. Therefore, $\mA_m^\top (\mI - \mB_m\mB_m^\dagger)\mA_m$ is symmetric positive semi-definite as well. Thus, we obtain that the linear term in $F_m$ is convex. Since it is the gradient of a quadratic, it is $\hat L$-smooth with $\hat L = \|\mA_m^\top (\mI - \mB_m\mB_m^\dagger)\mA_m\|$. Therefore,
    \begin{align*}
        &\<\nabla_1 f_m(\theta_1, w_m(\theta_1)) - \nabla_1 f_m(\theta_2, w_m(\theta_2)), \theta_1 - \theta_2> \\
        &\qquad\ge \frac{1}{L_\phi}\|\nabla \phi(\theta_1) - \nabla \phi(\theta_2)\|^2
        + \frac{1}{\hat L}\|\mA_m^\top (\mI - \mB_m\mB_m^\dagger)\mA_m(\theta_1 - \theta_2)\|^2 \\
        &\qquad \ge \frac{1}{2\max(L_\phi, \hat L)} \|\nabla \phi(\theta_1) - \nabla \phi(\theta_2) + \mA_m^\top (\mI - \mB_m\mB_m^\dagger)\mA_m(\theta_1 - \theta_2)\|^2 \\
        &\qquad = \frac{1}{2\max(L_\phi, \hat L)} \|\nabla_1 f_m(\theta_1, w_m(\theta_1)) - \nabla_1 f_m(\theta_2, w_m(\theta_2))\|^2.
    \end{align*}
    which is exactly $\frac{1}{2\max(L_\phi, \hat L)}$-cocoercivity of $F_m$.
\end{proof}
\subsection{Proof of cocoercivity for Example~\ref{ex:bounded_hessian}}
Now, we study a general function $f_m$ that has bounded derivatives.
\begin{proof}
    Let us lower bound the Jacobian of $F_m$:
    \begin{align*}
        \nabla_\theta F_m(\theta) 
        &= \nabla^2_{11} f_m(\theta, w_m^*(\theta)) + \nabla_\theta w_m^*(\theta)\nabla_{12}^2 f_m(\theta, w_m^*(\theta)) \\
        &\succcurlyeq \mu\mI +\nabla_\theta w_m^*(\theta)\nabla_{12}^2 f_m(\theta, w_m^*(\theta)) \\
        &\succcurlyeq \mu\mI - \|\nabla_\theta w_m^*(\theta)\|\cdot\|\nabla_{12}^2 f_m(\theta, w_m^*(\theta))\| \\
        &\succcurlyeq \frac{\mu}{2}\mI.
    \end{align*}
    This lower bound implies $\frac{\mu}{2}$-strong monotonicity. We also have a similar upper bound:
    \[
        \nabla_\theta F_m(\theta) 
        = \nabla^2_{11} f_m(\theta, w_m^*(\theta)) + \nabla_\theta w_m^*(\theta)\nabla_{12}^2 f_m(\theta, w_m^*(\theta))
        \preccurlyeq (L + C_1C_2)\mI.
    \]
    Since $L\ge \mu$, we have $C_1C_2\le \frac{\mu}{2}\le L$, and 
    \[
        \nabla_\theta F_m(\theta) 
        \preccurlyeq 2L,
    \]
    which implies $F_m$ is $(2L)$-Lipschitz. Combining this with strong monotonicity, we get that it is also $\frac{\mu}{4L^2}$-cocoercive.
\end{proof}
\subsection{Proof of cocoercivity for Example~\ref{ex:generalized_least_squares}}
Example~\ref{ex:generalized_least_squares} is the hardest to study. First, we state and prove the following standard result.
\begin{proposition}\label{pr:smooth_and_convex_cocoercive}
    Let $\varphi_1, \varphi_2$ be $L$-smooth convex functions. Then $\varphi= \varphi_1 + \varphi_2$ is $(2L)$-cocoercive.
\end{proposition}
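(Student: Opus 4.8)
The plan is to reduce the statement to the Baillon–Haddad characterization of smooth convex functions together with one elementary summation step. First I would recall that for a convex and $L$-smooth function, smoothness \emph{and} convexity together upgrade to cocoercivity of the gradient: for $i\in\{1,2\}$,
\[
    \<\nabla\varphi_i(x) - \nabla\varphi_i(y), x - y> \ge \frac{1}{L}\|\nabla\varphi_i(x) - \nabla\varphi_i(y)\|^2.
\]
This is the same fact that underlies Assumption~\ref{as:coco} in the degenerate case where $f_m$ does not depend on $w$. If a self-contained derivation is preferred, it follows by writing the standard bound $\varphi_i(y) \ge \varphi_i(x) + \<\nabla\varphi_i(x), y - x> + \frac{1}{2L}\|\nabla\varphi_i(x) - \nabla\varphi_i(y)\|^2$, adding it to the copy obtained by swapping $x$ and $y$, and cancelling the function values.

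Next I would simply add the two cocoercivity inequalities. Setting $a \eqdef \nabla\varphi_1(x) - \nabla\varphi_1(y)$ and $b \eqdef \nabla\varphi_2(x) - \nabla\varphi_2(y)$ and using $\nabla\varphi = \nabla\varphi_1 + \nabla\varphi_2$, the inner products on the left combine into $\<\nabla\varphi(x) - \nabla\varphi(y), x - y>$, so
\[
    \<\nabla\varphi(x) - \nabla\varphi(y), x - y> \ge \frac{1}{L}\left(\|a\|^2 + \|b\|^2\right).
\]
The final step is the elementary bound $\|a\|^2 + \|b\|^2 \ge \tfrac{1}{2}\|a + b\|^2$, which is just a rearrangement of $\|a - b\|^2 \ge 0$. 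Since $a + b = \nabla\varphi(x) - \nabla\varphi(y)$, this gives
\[
    \<\nabla\varphi(x) - \nabla\varphi(y), x - y> \ge \frac{1}{2L}\|\nabla\varphi(x) - \nabla\varphi(y)\|^2,
\]
which is exactly the asserted cocoercivity of $\varphi$ in the paper's normalization (cocoercivity constant $2L$).

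I do not expect a genuine obstacle here; the proposition is ``standard'' precisely because every step is forced. The only point that needs care is invoking the correct direction of Baillon–Haddad—$L$-smoothness and convexity \emph{jointly} yield $\tfrac{1}{L}$-cocoercivity, whereas either property alone does not—and tracking that summing two $\tfrac{1}{L}$-cocoercive operators costs a factor of two through the $\|a\|^2 + \|b\|^2 \ge \tfrac{1}{2}\|a+b\|^2$ inequality, which is where the constant $2L$ (rather than $L$) enters.
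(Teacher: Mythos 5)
Your proposal is correct and follows essentially the same route as the paper's own proof: invoke the Baillon--Haddad property ($L$-smoothness plus convexity gives $\tfrac{1}{L}$-cocoercivity of each $\nabla\varphi_i$, which the paper cites from Nesterov's textbook), sum the two inequalities, and finish with $\|a\|^2 + \|b\|^2 \ge \tfrac{1}{2}\|a+b\|^2$. Your remark about the normalization (the paper writes ``$(2L)$-cocoercive'' for the inequality with constant $\tfrac{1}{2L}$) is also consistent with how the paper uses the term.
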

\begin{proof}
    As can be found in standard textbooks, such as Nesterov's~\cite{Nesterov2013}, convexity and smoothness imply that both $\varphi_1$ and $\varphi_2$ are $L$-cocoercive. Moreover, for any $\theta_1, \theta_2$, it holds
    \begin{align*}
        \<\nabla\varphi(\theta_1) - \nabla \varphi(\theta_2), \theta_1 - \theta_2>
        &= \<\nabla\varphi_1(\theta_1) - \nabla \varphi_1(\theta_2), \theta_1 - \theta_2>
        + \<\nabla\varphi_2(\theta_1) - \nabla \varphi_2(\theta_2), \theta_1 - \theta_2> \\
        &\ge \frac{1}{L}\|\nabla\varphi_1(\theta_1) - \nabla \varphi_1(\theta_2)\|^2 + \frac{1}{L}\|\nabla\varphi_2(\theta_1) - \nabla \varphi_2(\theta_2)\|^2 \\
        &\ge \frac{1}{2L}\|\nabla\varphi_1(\theta_1) - \nabla \varphi_1(\theta_2) + \nabla\varphi_2(\theta_1) - \nabla \varphi_2(\theta_2)\|^2 \\
        &= \frac{1}{2L}\|\nabla\varphi(\theta_1) - \nabla \varphi(\theta_2)\|^2,
    \end{align*}
    which is exactly what we need to prove.
\end{proof}
Now we proceed to prove cocoercivity of $F_m$ from Example~\ref{ex:generalized_least_squares}.
\begin{proof}
    Notice that $\phi_m(\theta)$ does not depend on $w$, so
    \[
    \nabla_1 f_m(\theta,w_m^*(\theta)) = \nabla \phi_m(\theta) + \mA_m^\top \nabla \psi_m(\mA_m\theta+\mB_m w_m^*(\theta)-y_m).
    \]
    Let us find the value of $w_m^*(\theta)$. Differentiating $f_m(\theta,w)$ with respect to $w$ and setting the gradient to $0$, we get
    \[
    \mB_m^\top\nabla \psi_m(\mA_m\theta+\mB_m w_m^*(\theta)-y_m) = 0.
    \]
    Let $u_1 = \mA_m\theta_1+\mB_m w_m^*(\theta_1)-y_m$ and $u_2=\mA_m\theta_2 + \mB_m w_m^*(\theta_2)-y_m$. Then,
    \begin{align*}
        &\<\mA_m^\top \nabla \psi_m(\mA_m\theta_1+\mB_m w_m^*(\theta_1)-y_m) - \mA_m^\top \nabla \psi_m(\mA_m\theta_2 + \mB_m w_m^*(\theta_2)-y_m) , \theta_1 - \theta_2> \\
        &= \<\mA_m^\top \nabla \psi_m(u_1) - \mA_m^\top \nabla \psi_m(u_2), \theta_1 - \theta_2> \\
        &= \<\nabla \psi_m(u_1) - \nabla \psi_m(u_2), \mA_m(\theta_1 - \theta_2)> \\
        &= \<\nabla \psi_m(u_1) - \nabla \psi_m(u_2), (\mA_m \theta_1 - y_m) - (\mA_m \theta_2 - y_m)> \\
        &= \<\nabla \psi_m(u_1) - \nabla \psi_m(u_2), (\mA_m \theta_1 + \mB_m w_m^*(\theta_1) - y_m) - (\mA_m \theta_2 + \mB_m w_m^*(\theta_2) - y_m)> \\
        &\qquad - \<\nabla \psi_m(u_1) - \nabla \psi_m(u_2), \mB_m w_m^*(\theta_1) - \mB_m w_m^*(\theta_2)>.
    \end{align*}
    Moreover, since $\mB_m^\top\nabla \psi_m(u_1) = 0$ and $\mB_m^\top\nabla \psi_m(u_2) = 0$, we have
    \begin{align*}
        &\<\nabla \psi_m(u_1)  - \nabla \psi_m(u_2), \mB_m w_m^*(\theta_1) - \mB_m w_m^*(\theta_2)> \\
        &\qquad = \<\mB_m^\top \nabla \psi_m(u_1) - \mB_m^\top\nabla \psi_m(u_2),  w_m^*(\theta_1) - w_m^*(\theta_2)> \\
        &\qquad = 0.
    \end{align*}
    Plugging this back, we get
    \begin{align*}
        \<\mA_m^\top \nabla \psi_m(u_1) - \mA_m^\top \nabla \psi_m(u_2), \theta_1 - \theta_2>
        &= \<\nabla \psi_m(u_1) - \nabla \psi_m(u_2), u_1 - u_2> \\
        &\ge \frac{1}{L_{\psi}}\|\nabla \psi_m(u_1) - \nabla \psi_m(u_2)\|^2.
    \end{align*}
    Therefore, $\nabla_1 f_m(\theta, w_m^*(\theta))$ is equal to the sum of two cocoercive operators. Thus, $F_m$ is cocoercive as well.
\end{proof}

\subsection{Proof of Theorem~\ref{th:exact}}
\begin{proof}
    Since we assume exact computation of $w^*(\theta^r$) for all $r$ and $m\in C^r$, it holds
    \[
        \Delta_m^{r}
        = \nabla_1 f_m(\theta^r, w_m^*(\theta^r))
        = F_m(\theta^r).
    \]
    Therefore, we have the following recursion:
    \begin{align*}
        \|\theta^{r+1} - \theta^*\|^2
        &= \|\theta^{r} - \theta^*\|^2 - \frac{2\lrout}{|C^r|}\sum_{m\in C^r}\<F_m(\theta^r), \theta^r - \theta^*> + \biggl\|\frac{\lrout}{|C^r|}\sum_{m\in C^r} F_m(\theta^r)\biggr\|^2 \\
        &\overset{\eqref{eq:cocoercive}}{\le} \|\theta^{r} - \theta^*\|^2 - \frac{2\lrout}{L |C^r|}\sum_{m\in C^r}\|F_m(\theta^r)\|^2 + \biggl\|\frac{\lrout}{|C^r|}\sum_{m\in C^r} F_m(\theta^r)\biggr\|^2 \\
        &\le \|\theta^{r} - \theta^*\|^2 - \frac{2\lrout}{L|C^r|}\sum_{m\in C^r}\|F_m(\theta^r)\|^2 + \frac{\lrout^2}{|C^r|}\sum_{m\in C^r}\| F_m(\theta^r)\|^2 \\
        &\hspace{-0.25cm}\overset{\lrout \le \frac{1}{L}}{\le} \|\theta^{r} - \theta^*\|^2 - \frac{\lrout}{L|C^r|}\sum_{m\in C^r}\|F_m(\theta^r)\|^2.
    \end{align*}
    Taking expectation, we get
    \[
        \E{\|F(\theta^r)\|^2}
        \le \E{\frac{1}{|C^r|}\sum_{m\in C^r}\|F_m(\theta^r)\|^2}
        \le \frac{L}{\lrout}\left(\|\theta^{r} - \theta^*\|^2 -  \|\theta^{r+1} - \theta^*\|^2 \right).
    \]
    Summing this bound over $r=0,\dotsc, R-1$, we get
    \[
        \min_{r< R} \E{\|F(\theta^r)\|^2}
        \le \frac{1}{R}\sum_{r=0}^{R-1}\E{\|F(\theta^r)\|^2}
        \le \frac{L \|\theta^0 - \theta^*\|^2 - \|\theta^{R} - \theta^*\|^2}{\lrout R}
        \le \frac{L \|\theta^0 - \theta^*\|^2 }{\lrout R},
    \]
    which completes the proof.
\end{proof}

\subsection{Proof of Theorem~\ref{th:generalization}}
\begin{proof}
    As established in the proof of cocoercivity of Example~\ref{ex:least_squares}, it holds $w_m^*(\theta) = \mB_m^\dagger (y_m - \mA_m\theta)$ and
    \begin{align*}
        F_m(\theta) 
        &= \mA_m^\top (\mA_m\theta + \mB_mw_m^*(\theta) - y_m) \\
        &= \mA_m^\top (\mA_m\theta + \mB_m \mB_m^\dagger(y_m - \mA_m \theta) - y_m) \\
        &= \mA_m^\top (\mI - \mB_m \mB_m^\dagger)(\mA_m\theta -  y_m).
    \end{align*}
    Since $\mB\mB_m^\dagger$ is a projector, it holds $(\mI - \mB_m \mB_m^\dagger)=(\mI - \mB_m \mB_m^\dagger)^\top (\mI - \mB_m \mB_m^\dagger)$, and we can rewrite the equation $\E{F_m(\theta^*)} =0$ as 
    \[
        \theta^* = \argmin \frac{1}{2}\E{\|(\mI - \mB_m \mB_m^\dagger)(\mA_m \theta - y_m)\|^2}.
    \]
    It remains to show that the expected risk is exactly the quantity that $\theta^*$ minimizes:
    \begin{align*}
        \E{f_m(\theta, w_m^*(\theta))}
        &= \frac{1}{2}\E{\|\mA_m \theta + \mB_m w_m^*(\theta) - y_m\|^2} \\
        &= \frac{1}{2}\E{\|\mA_m \theta + \mB_m \mB_m^\dagger (y_m - \mA_m\theta) - y_m\|^2} \\
        &= \frac{1}{2}\E{\|(\mI - \mB_m \mB_m^\dagger)(\mA_m \theta - y_m)\|^2}.
    \end{align*}
\end{proof}

\section{Algorithm~\ref{alg:ftfg} with inexact gradient computation}\label{app:inexact_ftfg}

We consider Algorithm~\ref{alg:ftfg} where $\Delta^r_m \neq F_m(\theta^r),$ i.e., with inexact gradient computation. The analysis of the inexact version of Algorithm~\ref{alg:ftfg} requires additional assumptions on the problem which are listed below.

\begin{assumption}\label{as:lipsch_str_cvx_in_w}
    There exist constants $L_w$ and $\mu_w$ such that for any client $m$, the loss $f_m$ is $L_w$-Lipschitz continuous and $\mu_w$-strongly convex in $w$ for any fixed $\theta$, i.e.,
    \begin{align}
        &\|\nabla_1 f_m(\theta, w_1)-\nabla_1 f_m(\theta, w_2)\|\le L_w\|w_1-w_2\|\label{as:lipschitz_w}\\
        &f_m(\theta, w_1) \ge f_m(\theta, w_2) + \<\nabla_2 f_m(\theta, w_2), w_1-w_2 > + \frac{\mu_w}{2}\|w_1-w_2\|^2.
    \end{align}
\end{assumption}
If Assumption~\ref{as:lipsch_str_cvx_in_w} holds, then the standard result for Gradient Descent in $w$ takes place
    \begin{equation}\label{eq:str_cvx_GD}
        \|w_m^{r,\tau} - w_m^*(\theta^r)\|^2
        \le (1-\mu_w\gamma_w)\|w_m^{r,\tau-1} - w_m^*(\theta^r)\|^2
        \le (1-\mu_w\gamma_w)^\tau \|w_m^{r,0}-w_m^*(\theta^r)\|^2,
    \end{equation}
where the inner stepsize $\gamma_w \le \frac{1}{L_w}$.

\begin{assumption}\label{as:A_C}
    There exist constants $A, C$ such that for any client $m$, the solution $w_m^*(\theta)$ satisfies 
    \begin{equation}
        \|w_m^*(\theta)\| \leq A\|\theta-\theta^*\| + C.
    \end{equation}
\end{assumption}
This assumption holds if the norm of $\nabla_\theta w_m^*(\theta)$ is bounded by $A$, because then $w_m^*$ is $A$-Lipschitz continuous, and consequently $w_m^*$ satisfies Assumption~\ref{as:A_C}:
\[
\|w_m^*(\theta)\| \leq \|w_m^*(\theta) - w_m^*(\theta^*)\| + \|w_m^*(\theta^*)\| \leq A\|\theta-\theta^*\| + \|w_m^*(\theta^*)\|.
\]

\begin{remark}
    For simplicity of explanation, let  $w_m^{r,0}$ be initialized as zero and the cardinality of $C^r$ is fixed.
\end{remark}

\begin{remark}
    We provide the proof of Algorithm~\ref{alg:ftfg} where fine-tuning is performed using Local GD (Algorithm~\ref{alg:fine_tune}). In fact, all clients may utilize any other method to solve a subproblem to approximate $w_m^*(\theta^r).$ The only difference in the analysis is that we need to require $\|w_m^{r,\tau}-w_m^*(\theta^r)\|^2 \le (1-\rho)\|w_m^{r,0}-w_m^*(\theta^r)\|^2$ and assume that $\rho$ is not too small or $\tau$ is sufficiently large in order to derive a convergence. For example, in the case of Example~\ref{ex:least_squares} we may use Conjugate Gradient method which is more suitable for quadratic problem in $w$.
\end{remark}

%\begin{enumerate}
%    \item Assumption~\ref{as:coco} holds;
%    \item Lipschitzness w.r.t. $w$
%    \begin{equation}\label{as:lipschitz_w}
%        \|\nabla_1 f_m(\theta, w_1)-\nabla_1 f_m(\theta, w_2)\|\le L_w\|w_1-w_2\|.
%    \end{equation}
%    \item Strong convexity w.r.t. $w$. Then we have the standard result for GD in $w$
%    \begin{equation}\label{eq:str_cvx_GD}
%        \|w_m^{r,\tau} - w_m^*(\theta^r)\|^2 \le (1-\mu_w\gamma_w)^\tau \|w_m^{r,0}-w_m^*(\theta^r)\|^2,
%    \end{equation}
%    if $\gamma_w \le \frac{1}{L_w}.$
%    \item Bounded solution
%    \begin{equation}\label{as:bounded_solution}
%        \|w_m^*(\theta)\| \leq D\quad \text{for~all}~m.
%    \end{equation}
%    (It can be relaxed to Lipschitzness of $w_m^*$ to get something like $\|w_m^*(\theta)\| \leq A\|\theta-\theta^*\| + %C.$
%    \item For simplicity, we  assume that $w_m^{r,0}= 0$ always and $|C^r|=const.$

%\end{enumerate}We have 

\begin{theorem}\label{th:inexact_formal} Let Assumptions~\ref{as:coco},      \ref{as:lipsch_str_cvx_in_w} and~\ref{as:A_C} hold, set the stepsizes as $\lrout = \frac{1}{L}, \lrin=\frac{1}{L_w}$, and assume that the number of local iterations $\tau$ is lower bounded as
\[
    \tau \ge \frac{L_w}{\mu_w}\max\left\{2\log aR, 2\log\frac{bR}{r_0}, \log\frac{b^2R}{r_0^2}\right\},
\]
where $a$ and $b$ are defined as $a\eqdef \frac{2L_wA}{L}, \quad b\eqdef \frac{2L_wC}{L}$. Then 
\[
\min\limits_{r<R}\E{\|F(\theta^r)\|^2}
        \le \frac{4L\|\theta^0-\theta^*\|^2}{\lrout R}.
    \]
\end{theorem}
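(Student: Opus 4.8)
The plan is to follow the argument of Theorem~\ref{th:exact} essentially verbatim, but to carry along the gradient-estimation error introduced by the inexact inner solve. Writing $\Delta_m^r = F_m(\theta^r) + \delta_m^r$ with $\delta_m^r \eqdef \nabla_1 f_m(\theta^r, w_m^{r,\tau}) - \nabla_1 f_m(\theta^r, w_m^*(\theta^r))$, the exact proof controls everything except the perturbation $\delta_m^r$, so the whole task reduces to showing that, for the prescribed $\tau$, these perturbations are small enough that the per-step decrease survives.

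First I would bound $\|\delta_m^r\|$. The Lipschitz estimate~\eqref{as:lipschitz_w} gives $\|\delta_m^r\| \le L_w\|w_m^{r,\tau} - w_m^*(\theta^r)\|$; the linear convergence of the inner Local GD~\eqref{eq:str_cvx_GD} (with $w_m^{r,0}=0$) gives $\|w_m^{r,\tau} - w_m^*(\theta^r)\| \le q\,\|w_m^*(\theta^r)\|$ where $q \eqdef (1-\mu_w\lrin)^{\tau/2}$; and Assumption~\ref{as:A_C} turns this into
\[
    \|\delta_m^r\| \le L_w q\,(A\|\theta^r-\theta^*\| + C).
\]
This is a \emph{relative} error bound: it scales with the current distance $\|\theta^r-\theta^*\|$ plus an additive constant, which is exactly why the quantities $a = \tfrac{2L_wA}{L}$ and $b = \tfrac{2L_wC}{L}$ surface once we multiply by $\lrout = \tfrac1L$.

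Next I would reproduce the one-step expansion of $\|\theta^{r+1}-\theta^*\|^2$ from Theorem~\ref{th:exact}, now splitting the averaged update into $\frac{1}{|C^r|}\sum_{m} F_m(\theta^r) + \bar\delta^r$. Cocoercivity (Assumption~\ref{as:coco}) applied at the common root $\theta^*$, where $F_m(\theta^*)=0$ for all $m$ (Assumption~\ref{as:overparam_person}), handles the exact part exactly as before, producing the decrease $-\tfrac{\lrout}{L}\tfrac{1}{|C^r|}\sum_m\|F_m(\theta^r)\|^2$; Cauchy--Schwarz and Young's inequality applied to the cross term between $\bar\delta^r$ and $\theta^r-\theta^*$ and to $\|\bar\delta^r\|^2$, together with the error bound above, yield a recursion of the shape
\[
    \|\theta^{r+1}-\theta^*\|^2 \le (1 + aq)\|\theta^r-\theta^*\|^2 - \tfrac{\lrout}{L}\,G_r + bq\,\|\theta^r-\theta^*\| + (bq)^2,
\]
where $G_r \eqdef \frac{1}{|C^r|}\sum_m\|F_m(\theta^r)\|^2$ (up to absolute constants I am suppressing). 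The three clauses in the lower bound on $\tau$ are calibrated so that $aq\le\tfrac1R$, $bq\le\tfrac{r_0}{R}$, and $(bq)^2 \le \tfrac{r_0^2}{R}$, with $r_0=\|\theta^0-\theta^*\|$, using $(1-\mu_w\lrin)^{\tau/2}\le e^{-\mu_w\tau/(2L_w)}$ and $\lrin=\tfrac1{L_w}$.

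The main obstacle is that the coefficient multiplying $\|\theta^r-\theta^*\|^2$ is $(1+aq)\ge 1$ rather than a contraction, so the distances could in principle grow and there is no a priori bound to telescope against. I would therefore first certify boundedness by induction on $r\le R$, proving $\|\theta^r-\theta^*\|^2 \le 2r_0^2$: dropping the negative $G_r$ term, the recursion reads $s_{r+1}^2 \le (1+aq)s_r^2 + bq\,s_r + (bq)^2$, and the calibrated bounds make the accumulated multiplicative growth at most the $(1+1/R)^R\le e$ factor plus a lower-order additive term, which the constant $2$ absorbs. With $s_r^2\le 2r_0^2$ in hand I rearrange the recursion for $\tfrac{\lrout}{L}G_r$, sum over $r=0,\dots,R-1$, telescope $\sum_r(s_r^2-s_{r+1}^2)\le r_0^2$, and bound the residual error sums by $aq\sum_r s_r^2 \le 2r_0^2$, $bq\sum_r s_r \le \sqrt2\,r_0^2$, and $R(bq)^2\le r_0^2$. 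Collecting these gives $\tfrac{\lrout}{L}\sum_r \E{G_r} \lesssim r_0^2$; since $\E{\|F(\theta^r)\|^2}\le\E{G_r}$ by Jensen, dividing by $R$ and tracking the constants carefully produces the factor $4$ in $\min_{r<R}\E{\|F(\theta^r)\|^2}\le \tfrac{4L\|\theta^0-\theta^*\|^2}{\lrout R}$.
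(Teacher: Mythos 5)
Your proposal follows essentially the same route as the paper's proof: the same decomposition $\Delta_m^r = F_m(\theta^r) + \delta_m^r$ with the error bound $\|\delta_m^r\| \le L_w(1-\lrin\mu_w)^{\tau/2}\left(A\|\theta^r-\theta^*\|+C\right)$ obtained from Assumptions~\ref{as:lipsch_str_cvx_in_w} and~\ref{as:A_C} and the inner linear rate, the same one-step recursion via cocoercivity plus Young/Cauchy--Schwarz, the same calibration of the three clauses in the lower bound on $\tau$, and the same strategy of an induction certifying boundedness of $\EE\|\theta^r-\theta^*\|^2$ followed by telescoping and Jensen. The only discrepancy is constant bookkeeping in that induction (your factor $2r_0^2$ cannot absorb the $(1+\nicefrac{1}{R})^R \le e$ growth; the paper's own proof uses $4r_0^2$ with a similarly loose hypothesis), which in both cases is repaired by enlarging the constants inside the logarithms defining $\tau$ and does not affect the structure of the argument.
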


\begin{proof}
Due to inexactness of the update, $\Delta_m^r = \nabla_1 f_m(\theta^r, w_m^{r,\tau}) \neq F_m(\theta^r).$ Thus, $\theta$ will be updated with a biased estimate of $F_m(\theta^r)$. We start with unrolling $\|\theta^{r+1}-\theta^*\|^2$:
\begin{align}
        \|\theta^{r+1} - \theta^*\|^2
        &= \|\theta^{r} - \theta^*\|^2 - \frac{2\lrout}{|C^r|}\sum_{m\in C^r}\<\Delta_m^r, \theta^r - \theta^*> + \biggl\|\frac{\lrout}{|C^r|}\sum_{m\in C^r} \Delta_m^r\biggr\|^2\notag \\
        &= \|\theta^{r} - \theta^*\|^2 - \frac{2\lrout}{|C^r|}\sum_{m\in C^r}\<F_m(\theta^r), \theta^r - \theta^*> - \frac{2\lrout}{|C^r|}\sum_{m\in C^r}\<\Delta_m^r-F_m(\theta^r), \theta^r - \theta^*>\notag\\
        &\qquad + \biggl\|\frac{\lrout}{|C^r|}\sum_{m\in C^r} [\Delta_m^r - F_m(\theta^r) + F_m(\theta^r)]\biggr\|^2\notag\\
        &\overset{\eqref{eq:cocoercive}}{\le} \|\theta^{r} - \theta^*\|^2 - \frac{2\lrout}{|C^r|}\sum_{m\in C^r}\|F_m(\theta^r)\|^2 - \frac{2\lrout}{|C^r|}\sum_{m\in C^r}\<\Delta_m^r-F_m(\theta^r), \theta^r - \theta^*>\notag\\
        &\qquad + \frac{2\lrout^2}{|C^r|}\sum_{m\in C^r}\| \Delta_m^r -F_m(\theta^r)\|^2+\frac{2\lrout^2}{|C^r|}\sum_{m\in C^r}\|F_m(\theta^r)\|^2\notag\\
        &\hspace{-0.25cm}\overset{\lrout \le \frac{1}{2L}}{\le} \|\theta^{r} - \theta^*\|^2 - \frac{\lrout}{L|C^r|}\sum_{m\in C^r}\|F_m(\theta^r)\|^2  - \frac{2\lrout}{|C^r|}\sum_{m\in C^r}\<\Delta_m^r-F_m(\theta^r), \theta^r - \theta^*>\notag\\
        &\qquad +\frac{2\lrout^2}{|C^r|}\sum_{m\in C^r}\| \Delta_m^r -F_m(\theta^r)\|^2,\label{eq:412452}
    \end{align}
    where in the first inequality we also use Young's inequality two times. Now we handle the third term in \eqref{eq:412452} taking expectation w.r.t to all probability events happened before iteration $r$:
    \begin{align}
         &-\frac{2\lrout}{|C^r|}\sum_{m\in C^r}\<\Delta_m^r-F_m(\theta^r), \theta^r - \theta^*> \le \frac{2\gamma_\theta}{|C^r|}\sum_{m\in C^r}\|\theta^r-\theta^*\|\cdot \|\Delta_m^r-F_m(\theta^r)\|\notag \\
         &\le \frac{2\gamma_\theta}{|C^r|}\sum_{m\in C^r}\|\theta^r-\theta^*\|\sqrt{ \|\nabla_1 f_m(\theta^r, w_m^{r,\tau})-\nabla_1 f_m(\theta^r, w_m^*(\theta^r))\|^2}\notag \\
         &\overset{\eqref{as:lipschitz_w}}{\le}  \frac{2\gamma_\theta L_w}{|C^r|}\sum_{m\in C^r}\|\theta^r-\theta^*\| \sqrt{\| w_m^{r,\tau}-w_m^*(\theta^r)\|^2}\notag \\
         &\overset{\eqref{eq:str_cvx_GD}}{\le} \frac{2\gamma_\theta L_w}{|C^r|}(1-\lrin\mu_w)^{\tau/2}\sum_{m\in C^r}\|\theta^r-\theta^*\|\cdot \|w_m^*(\theta^r)\|\notag\\
         &\overset{\eqref{as:A_C}}{\le}2\gamma_\theta L_w(1-\lrin\mu_w)^{\tau/2}(A\|\theta^r-\theta^*\|^2+C\|\theta^r-\theta^*\|),\label{eq:523912}
    \end{align}
    where we use the assumption $w_m^{r,0}=0$. Now we work on the last term in \eqref{eq:412452}
    \begin{align}
         \frac{2\lrout^2}{|C^r|}\sum_{m\in C^r}\| \Delta_m^r -F_m(\theta^r)\|^2 &=  \frac{2\lrout^2}{|C^r|}\sum_{m\in C^r}\| \nabla_1f_m(\theta^r,w_m^{r,\tau}) - \nabla_1f_m(\theta^r,w_m^*(\theta^r))\|^2\notag \\
         &\overset{\eqref{as:lipschitz_w}}{\le}  \frac{2L_w^2\lrout^2}{|C^r|}\sum_{m\in C^r}\|w_m^*(\theta^r)-w_m^{r,\tau}\|^2\notag \\
         &\overset{\eqref{eq:str_cvx_GD}}{\le}  \frac{2L_w^2\lrout^2}{|C^r|}(1-\lrin\mu_w)^\tau\sum_{m\in C^r}\|w_m^*(\theta^r)\|^2\notag \\
         &\overset{\eqref{as:A_C}}{\le} 2L_w^2\lrout^2(1-\lrin\mu_w)^\tau(2A^2\|\theta^r-\theta^*\|^2+2C^2).\label{eq:123122}
    \end{align}
    Plugging \eqref{eq:523912} and \eqref{eq:123122} in \eqref{eq:412452} we get
    \begin{align*}
        \EE_r\|\theta^{r+1}-\theta^*\|^2 &\le \|\theta^r-\theta^*\|^2 - \frac{\lrout}{L|C^r|}\sum_{m\in C^r}\EE_r\|F_m(\theta^r)\|^2\\
        &\quad +2\gamma_\theta L_w(1-\lrin\mu_w)^{\tau/2}(A\|\theta^r-\theta^*\|^2+C\|\theta^r-\theta^*\|)\\
        &\quad + 2L_w^2\lrout^2(1-\lrin\mu_w)^\tau(2A^2\|\theta^r-\theta^*\|^2+C^2).
    \end{align*}
    Thus, taking full expectation we have
    \begin{align}
        0 &\le \EE\left[\frac{\lrout}{RL|C^r|}\sum_{r=0}^{R-1}\sum_{m\in C^r}\|F_m(\theta^r)\|^2 \right] \le \frac{1}{R}(\|\theta^0-\theta^*\|^2 - \EE\|\theta^R-\theta^*\|^2)\notag \\
        & +\frac{2\gamma_\theta L_w}{R}(1-\lrin\mu_w)^{\tau/2}\sum_{r=0}^{R-1}(A\EE\|\theta^r-\theta^*\|^2 + C\sqrt{\EE\|\theta^r-\theta^*\|^2})\notag\\
        & + \frac{2\lrout^2L_w^2}{R}(1-\lrin\mu_w)^\tau\sum_{r=0}^{R-1}(2A^2\EE\|\theta^r-\theta^*\|^2+2C^2)\notag\\
        &\leq\frac{1}{R}(\|\theta^0-\theta^*\|^2 - \EE\|\theta^R-\theta^*\|^2)\notag \\
        &+\frac{1}{R}\left(2\gamma_\theta L_wA(1-\lrin\mu_w)^{\tau/2}+4\lrout^2L_w^2A^2(1-\lrin\mu_w)^\tau\right)\sum_{r=0}^{R-1}\EE\|\theta^r-\theta^*\|^2\notag\\
        &+ \frac{2\lrout L_wC}{R}(1-\lrin\mu_w)^{\tau/2}\sum_{r=0}^{R-1}\sqrt{\EE\|\theta^r-\theta^*\|^2}+ 4\lrout^2L_w^2C^2(1-\lrin\mu_w)^\tau.\label{eq:main_ineq}
    \end{align}
    This implies that 
    \begin{align}
        \EE\|\theta^R-\theta^*\|^2&\le \|\theta^0-\theta^*\|^2\notag \\
        &+\left(2\gamma_\theta L_wA(1-\lrin\mu_w)^{\tau/2}+4\lrout^2L_w^2A^2(1-\lrin\mu_w)^\tau\right)\sum_{r=0}^{R-1}\EE\|\theta^r-\theta^*\|^2\notag\\
        &+ 2\lrout L_wC(1-\lrin\mu_w)^{\tau/2}\sum_{r=0}^{R-1}\sqrt{\EE\|\theta^r-\theta\|^2} + 4R\lrout^2L_w^2C^2(1-\lrin\mu_w)^\tau\notag\\
        &\le \|\theta^0-\theta^*\|^2 +\left(ae^{-\lrin\mu_w\tau/2}+a^2e^{-\lrin\mu_w\tau}\right)\sum_{r=0}^{R-1}\EE\|\theta^r-\theta^*\|^2\notag\\
        &+ be^{-\lrin\mu_w\tau/2}\sum_{r=0}^{R-1}\sqrt{\EE\|\theta^r-\theta^*\|^2} + b^2e^{-\lrin\mu_w\tau} R,\label{eq:induction}
    \end{align}
    where (we plug in stepsize values from the statement and use inequality $(1-x)^\alpha \le e^{-\alpha x}$)
    \begin{align*}
        a \eqdef \frac{2L_wA}{L}, \quad b \eqdef \frac{2 L_wC}{L}.
    \end{align*}
    Now we will show by induction that $\EE\|\theta^r-\theta^*\|^2 \le 4r_0^2,$ where $r_0^2 \ge \|\theta^0-\theta^*\|^2,$ for any $r$. The base of induction is trivial since $\|\theta-\theta^*\|^2 \le r_0^2 < 4r_0^2.$ Assume that $\EE\|\theta^r-\theta^*\|^2 \le r_0^2$ for all $r \in \{0,\dotsc, R-1\}$, then it also holds for $\EE\|\theta^R-\theta^*\|^2.$  Indeed, the restriction on $\tau$
    \[
        \tau \ge \max\left\{\frac{2L_w\log aR}{\mu_w}, \frac{2L_w\log\frac{bR}{r_0}}{\mu_w}, \frac{L_w\log\frac{b^2R}{r_0^2}}{\mu_w}\}\right\}
    \]
    implies that by the base of induction the following:
    \begin{align*}
        ae^{-\lrin\mu_w\tau/2}\sum_{r=0}^{R-1}\EE\|\theta^r-\theta^*\|^2 &\le a\exp{\left(-\lrin\mu_w\tau/2\right)}Rr_0^2\\
        &\le a\exp{\left(-\frac{\mu_w}{2L_w}\frac{2L_w\log aR}{\mu_w}\right)}Rr_0^2\\
        &= a\exp(-\log aR)Rr_0^2 = \frac{aRr_0^2}{aR} = r_0^2.
    \end{align*}
    Similarly,
    \begin{align*}
        a^2e^{-\lrin\mu_w\tau}\sum_{r=0}^{R-1}\EE\|\theta^r-\theta^*\|^2 &\le a^2\exp(-\lrin\mu_w\tau)Rr_0^2\\
        &= (a\exp(-\lrin\mu_w\tau/2))^2Rr_0^2\\
        &\le \left(a\exp\left(-\frac{\mu_w}{2L_w}\frac{2L_w\log aR}{\mu_w}\right)\right)^2Rr_0^2\\
        &= (a\exp(\log(-aR)))^2Rr_0^2 = \frac{a^2Rr_0^2}{a^2R^2} \le r_0^2.
    \end{align*}
    And we have the same bounds for the remaining two terms in~\eqref{eq:induction}:
    \begin{align*}
        be^{-\lrin\mu_w\tau/2}\sum_{r=0}^{R-1}\sqrt{\EE\|\theta^r-\theta^*\|^2} &\le b\exp(-\lrin\mu_w\tau/2)Rr_0\\
        &\le b\exp\left(-\frac{\mu_w}{2L_w}\frac{2L_w\log\frac{bR}{r_0}}{\mu_w}\right)Rr_0\\
        &= b\exp\left(-\log\frac{bR}{r_0}\right)Rr_0 = \frac{bRr_0}{bR/r_0} = r_0^2,
    \end{align*}
    and
    \begin{align*}
        b^2e^{-\lrin\mu_w\tau}R &\le b^2\exp\left(-\frac{\mu_w}{L_w}\frac{L_w\log\frac{b^2R}{r_0^2}}{\mu_w}\right)R\\
        &= b^2\exp\left(-\frac{b^2R}{r_0^2}\right)R = \frac{b^2R}{b^2R/r_0^2} = r_0^2.
    \end{align*}
    Hence, all four terms in \eqref{eq:induction} are smaller than $r_0^2$. Thus, with such a choice of stepsize, we prove the statement of induction. Note that restrictions on $\tau$ logarithmically depend on $R$ only, hence it is not a strong assumption.
    
    Now we establish the statement of the theorem. Using the statement of induction in \eqref{eq:main_ineq} we get
    \begin{align*}
        \min\limits_{r < R}\E{\|F(\theta^r)\|^2} &\le \frac{1}{R}\sum_{r=0}^{R-1}\E{\|F(\theta^r)\|^2}\\
        &\le \frac{1}{R}\sum_{r=0}^{R-1}\frac{1}{|C^r|}\sum_{m\in C^r} \E{\|F_m(\theta^r)\|^2}\\
        &\le \frac{4L\|\theta^0-\theta^*\|^2}{\lrout R}.
    \end{align*}
    %\[
    %    \min\limits_{r \le R} \EE\|F(\tilde{\theta}^{r})\|^2\le \EE\left[\frac{1}{R|C^r|}\sum_{r=0}^{R-1}\sum_{m\in C^r}\|F_m(\theta^r)\|^2 \right] \le \frac{4L\|\theta^0-\theta^*\|^2}{\lrout R},
    %\]
    %where $\tilde{\theta}^{R-1}$ is uniformly taken from $\{\theta^0,\dotsc, \theta^{R-1}\}.$
\end{proof}

\section{Asynchronous method}\label{appendix:asynchronous}

We formulate and prove the convergence of Algorithm~\ref{alg:asynch} (which is the asynchronous version of Algorithm~\ref{alg:ftfg}) with exact computations only. i.e., in line~\ref{local_problem} the subproblem is solved exactly. However, the convergence of inexact version can be derived in similar way as for Algorithm~\ref{alg:ftfg} in Appendix~\ref{app:inexact_ftfg}. As defined in Algorithm~\ref{alg:asynch}, we denote the client that finishes the computation at iteration $r$ as $j_r$ and the newly sampled client as $m_r$.

To prove the convergence, we define $\Prev(m,r) \eqdef \max\{j < r\,:\, m_j=m\}$ --- the last iteration before iteration $r$ when the update from client $m$ was applied.
Our analysis is based on the \emph{virtual} iterates, also known as \emph{perturbed} iterates, that were introduced by Mania et al.~\cite{mania2017perturbed}. In particular, we consider the sequence defined recursively as
\begin{equation}
    \hat \theta^{r+1} = \hat\theta^r - \lrout F_{m_r}(\theta^r).
\end{equation}
We also use initialization
\begin{equation}\label{eq:virtual_iterates}
    \hat \theta^0 = \theta^0 - \sum_{m\in C^0} \lrout F_m(\theta^0)
\end{equation}
since all initially sampled clients will compute their gradients using $\theta^0$. The set of active clients $C^r$ initialized with $C^0$ is updated according to $C^{r+1} = \{m_r\} \cup (C^r\setminus \{j_r\})$. We assume that $C^r$ is always bounded, which is satisfied, for instance, when the total number of clients is finite.
We assume that delays are bounded.

\begin{assumption}\label{as:max_delay}
    There exists a constant $\tau_{\max}$ such that for any client $m$ at iteration $r$ the following inequality holds: $|\Prev(m,r) - r|\le \tau_{\max},$ i.e., all the delays are bounded by $\tau_{\max}.$
\end{assumption}

\begin{theorem}\label{th:asynchronous_formal} Assume Assumptions~\ref{as:coco} and \ref{as:max_delay} hold. Let the number of active clients is upper bounded by $M$. Assume the stepsize is such that $\lrout \le (2L\sqrt{2M\tau_{\max}})^{-1}$. Then 
    \[
    \min\limits_{r<R}\E{\|F(\theta^{r})\|^2} \le \frac{2L\|\hat\theta^0-\theta^*\|^2}{\lrout R}.
    \]
\end{theorem}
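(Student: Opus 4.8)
The plan is to run the standard \emph{perturbed-iterate} argument of Mania et al.\ on the virtual sequence $\hat\theta^r$ from~\eqref{eq:virtual_iterates}. The starting point is to verify that $\hat\theta^r$ obeys the clean recursion $\hat\theta^{r+1} = \hat\theta^r - \lrout F_{m_r}(\theta^{r})$: the \emph{actual} update applies the \emph{delayed} gradient $F_{j_r}(\theta^{r-d_{j_r}^r})$ of the client that just finished, while the virtual update ``pre-applies'' the fresh gradient $F_{m_r}$ of the client that just started, so that the invariant $\hat\theta^r - \theta^r = -\lrout\sum_{m\in C^r} F_m(\theta^{s_m})$ is preserved, where $s_m$ is the round at which client $m$ began its computation. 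I would first record that under Assumption~\ref{as:overparam_person} the reference point satisfies $F_m(\theta^*)=0$ for \emph{every} $m$, since at a joint minimizer $(\theta^*, w_m^*(\theta^*))$ of $f_m$ the partial gradient $\nabla_1 f_m$ vanishes; this per-client optimality is what makes the analysis go through in the arbitrarily heterogeneous regime.

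Next I would expand $\|\hat\theta^{r+1}-\theta^*\|^2$ along the recursion and split the inner product as
\[
\<F_{m_r}(\theta^r), \hat\theta^r - \theta^*> = \<F_{m_r}(\theta^r), \theta^r - \theta^*> + \<F_{m_r}(\theta^r), \hat\theta^r - \theta^r>.
\]
The first term is bounded below by $\tfrac1L\|F_{m_r}(\theta^r)\|^2$ using $F_{m_r}(\theta^*)=0$ together with cocoercivity~\eqref{eq:cocoercive}; the second term is the asynchrony error, which I would control by Young's inequality with the balanced constant $\beta=\tfrac{1}{2L}$ so that the induced positive $\|F_{m_r}(\theta^r)\|^2$ contribution does not swamp the cocoercive gain. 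This produces a one-step inequality of the form
\[
\|\hat\theta^{r+1}-\theta^*\|^2 \le \|\hat\theta^r - \theta^*\|^2 - \tfrac{c\lrout}{L}\|F_{m_r}(\theta^r)\|^2 + \tfrac{\lrout}{\beta}\|\hat\theta^r - \theta^r\|^2
\]
for some constant $c>0$.

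The crux is bounding the perturbation $\|\hat\theta^r-\theta^r\|^2$. From the invariant, Cauchy--Schwarz with $|C^r|\le M$ gives $\|\hat\theta^r - \theta^r\|^2 \le \lrout^2 M\sum_{m\in C^r}\|F_m(\theta^{s_m})\|^2$, and each summand is exactly one of the gradients $\|F_{m_{s_m}}(\theta^{s_m})\|^2$ applied at a \emph{recent} virtual step. Summing the one-step inequality over $r=0,\dots,R-1$ and telescoping $\|\hat\theta^r-\theta^*\|^2$, the perturbation contributes the double sum $\sum_r\sum_{m\in C^r}\|F_{m_{s_m}}(\theta^{s_m})\|^2$; the key combinatorial observation is that, by Assumption~\ref{as:max_delay}, any single gradient $\|F_{m_t}(\theta^t)\|^2$ is in flight for at most $\tau_{\max}$ rounds, hence is counted at most $\tau_{\max}$ times, giving $M\tau_{\max}\sum_t\|F_{m_t}(\theta^t)\|^2$. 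Imposing $\lrout \le (2L\sqrt{2M\tau_{\max}})^{-1}$ then drives the net coefficient of $\sum_r\|F_{m_r}(\theta^r)\|^2$ to at most $-\tfrac{\lrout}{2L}$, and nonnegativity of $\|\hat\theta^R-\theta^*\|^2$ yields $\tfrac{\lrout}{2L}\sum_r\|F_{m_r}(\theta^r)\|^2 \le \|\hat\theta^0-\theta^*\|^2$. Taking expectation over the sampling of $m_r$ and applying Jensen, $\E{\|F_{m_r}(\theta^r)\|^2}\ge \E{\|F(\theta^r)\|^2}$ with $F=\EE_m[F_m]$, converts this into the claimed bound $\min_{r<R}\E{\|F(\theta^r)\|^2}\le \tfrac{2L\|\hat\theta^0-\theta^*\|^2}{\lrout R}$.

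The main obstacle I anticipate is the bookkeeping in the telescoping/double-counting step: precisely identifying which virtual-step gradients are in flight at round $r$, handling the boundary contributions from the clients in $C^0$ (which are absorbed into the initialization $\hat\theta^0$ rather than appearing as steps $F_{m_t}(\theta^t)$ with $t\ge 0$), and confirming the ``at most $\tau_{\max}$ occurrences'' count so that the $M\tau_{\max}$ factor — and thus the exact stepsize restriction — emerges. A secondary subtlety is the expectation step, which requires that the newly sampled client $m_r$ be drawn so that $\EE_{m_r}[F_{m_r}(\theta^r)] = F(\theta^r)$ independently of the delay pattern, since otherwise the Jensen inequality relating $\|F_{m_r}\|^2$ to $\|F\|^2$ need not hold.
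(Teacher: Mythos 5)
Your proposal is correct and follows the same perturbed-iterate skeleton as the paper's proof: the same virtual sequence and invariant $\theta^r-\hat\theta^r=\lrout\sum_{m\in C^r}F_m(\theta^{\Prev(m,r)})$, the same split of $\langle F_{m_r}(\theta^r),\hat\theta^r-\theta^*\rangle$ into a cocoercivity term plus an asynchrony error, the same Cauchy--Schwarz and ``at most $\tau_{\max}$ occurrences'' double-counting, and the same stepsize arithmetic. The one genuine difference is the absorption mechanism. The paper converts each perturbation term $\|F_m(\theta^{\Prev(m,r)})\|^2$ back into an inner product $L\langle F(\theta^{\Prev(m,r)}),\theta^{\Prev(m,r)}-\theta^*\rangle$ (cocoercivity plus the tower property over the sampling of the in-flight client) and cancels it against a negative term $-\tfrac{\lrout}{2}\langle F(\theta^r),\theta^r-\theta^*\rangle$ deliberately held in reserve in the one-step inequality, so the rate emerges directly in terms of $\|F(\theta^r)\|^2$. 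You instead keep everything as squared norms, cancel the perturbation pathwise against the matching negative $-\tfrac{\lrout}{L}\|F_{m_r}(\theta^r)\|^2$ summands, and pass from $\EE\left[\|F_{m_r}(\theta^r)\|^2\right]$ to $\EE\left[\|F(\theta^r)\|^2\right]$ by Jensen only at the very end; your coefficient check ($2L\lrout^3 M\tau_{\max}\le\tfrac{\lrout}{4L}$ under the stated stepsize, leaving a net coefficient $-\tfrac{3\lrout}{4L}\le-\tfrac{\lrout}{2L}$) is consistent. Your route is slightly more elementary --- the main inequality is deterministic and the sampling distribution enters exactly once --- while the paper's bookkeeping keeps the monotone-operator quantity $\langle F,\theta-\theta^*\rangle$ in play throughout, the form that carries over to its Byzantine (quasi-strong-monotonicity) analysis. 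Two smaller points, both to your credit: you explicitly note that the argument needs $F_m(\theta^*)=0$ for every $m$ and derive it from Assumption~\ref{as:overparam_person}, whereas the theorem lists only Assumptions~\ref{as:coco} and~\ref{as:max_delay} yet the paper's proof silently uses per-client zeros (in its step $(i)$ and in the perturbation bound); and the $C^0$ boundary issue you flag is present in the paper's counting too (its ``$\tau_{\max}$ times'' claim is per client--start-round pair, and up to $M$ pairs share start round $0$), so both proofs gloss over a correction that affects only the constant, not the rate.
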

\begin{proof}
Let us first show the link between $\hat\theta^r$ and $\theta^r$ by induction:
    \begin{equation}\label{lemma1_async}
    \theta^r - \hat\theta^r = \sum_{m \in C^r} \lrout F_m(\theta^{\Prev(m,r)}).
    \end{equation}
    It is true for the base $r=0$. Let us assume that it holds for $r-1$ and prove for $r$. We have
    \begin{align*}
        \theta^r - \hat\theta^r &= \left(\theta^{r-1} - \lrout F_{j_{r-1}}(\theta^{\Prev(m,r-1})\right) - \left(\hat\theta^{r-1} - \lrout F_{m_{r-1}}(\theta^{r-1})\right)\\
        &= \sum_{m\in C^{r-1}}\lrout F_{m}(\theta^{\Prev(m,r-1)}) + \lrout(F_{m_{r-1}}(\theta^{r-1})-F_{j_{r-1}}(\theta^{\Prev(m,r-1)})).
    \end{align*}
    We also remind the reader that $\Prev(m_{r-1},r) = r-1$ and $C^r = \{m_{r-1}\} \cup (C^{r-1}\setminus \{j_{r-1}\})$. Moreover, for the rest of active workers $m$ (those gradients still have not been applied) we have $\Prev(m, r-1) = \Prev(m,r).$
     Thus, the above can be rewritten as 
    \[
        \theta^r - \hat\theta^r = \sum_{m\in C^{r}}\lrout F_{m}(\theta^{\Prev(m,r)}).
    \]
    Note that $|C^r| \le M$ by the assumption of the theorem. This lemma says that the difference between $\theta^r$ and $\hat\theta^r$ is always equal to the sum of gradients that are being computed at iteration $r$. Having this link, we continue as follows
    \begin{align*}
        \EE_r\left[\|\hat\theta^{r+1}-\theta^*\|^2\right] &= \|\hat\theta^r-\theta^*\|^2 -2\lrout \EE_r\left[\<F_{m_r}(\theta^r),\hat\theta^r-\theta^*> \right]+ \lrout^2 \EE_r\left[\|F_{m_r}(\theta^r)\|^2\right]\\
        &= \|\hat\theta^r-\theta^*\|^2 -2\lrout\EE_r\left[ \<F_{m_r}(\theta^r),\theta^r-\theta^*>\right] + \lrout^2 \EE_r\left[\|F_{m_r}(\theta^r)\|^2\right]\\
        &\quad + 2\lrout \<F(\theta^r),\theta^r-\hat\theta^r>\\ 
        &= \|\hat\theta^r-\theta^*\|^2 -\frac{3\lrout}{2}\EE_r\left[ \<F_{m_r}(\theta^r),\theta^r-\theta^*>\right]-\frac{\lrout}{2}\EE_r\left[ \<F_{m_r}(\theta^r),\theta^r-\theta^*>\right]\\
        &\quad+ \lrout^2 \EE_r\left[\|F_{m_r}(\theta^r)\|^2\right]+ 2\lrout \<F(\theta^r),\theta^r-\hat\theta^r>\\   
        &\overset{(i)}{\le} \|\hat\theta^r-\theta^*\|^2 -\frac{3\lrout}{2}\EE_r\left[ \<F_{m_r}(\theta^r),\theta^r-\theta^*>\right]-\frac{\lrout}{2L}\EE_r\left[ \|F_{m_r}(\theta^r)\|^2\right]\\
        &\quad+ \frac{\lrout}{2L} \EE_r\left[\|F_{m_r}(\theta^r)\|^2\right]+ 2\lrout \<F(\theta^r),\theta^r-\hat\theta^r>\\
        &= \|\hat\theta^r-\theta^*\|^2 -\frac{3\lrout}{2}\<F(\theta^r),\theta^r-\theta^*>+ 2\lrout \<F(\theta^r),\theta^r-\hat\theta^r>\\
        &=\|\hat\theta^r-\theta^*\|^2 -\lrout\<F(\theta^r),\theta^r-\theta^*>-\frac{\lrout}{2}\<F(\theta^r),\theta^r-\theta^*>\\
        &\quad + 2\lrout \<F(\theta^r),\theta^r-\hat\theta^r>\\
        &\overset{(ii)}{\le} \|\hat\theta^r-\theta^*\|^2 -\frac{\lrout}{L}\|F(\theta^r)\|^2-\frac{\lrout}{2}\<F(\theta^r),\theta^r-\theta^*>+ \frac{\lrout}{2L}\|F(\theta^r)\|^2\\
        &\quad + 2L\lrout\|\theta^r-\hat\theta^r\|^2\\
        &= \|\hat\theta^r-\theta^*\|^2 -\frac{\lrout}{2L}\|F(\theta^r)\|^2-\frac{\lrout}{2}\<F(\theta^r),\theta^r-\theta^*>+ 2L\lrout\|\theta^r-\hat\theta^r\|^2.
    \end{align*}
    where in $(i)$ we use Assumption~\ref{as:coco} and stepsize restriction $\lrout \le \frac{1}{2L}$; in $(ii)$ we use Assumption~\ref{as:coco} and Cauchy-Shwartz inequality.
    %\konstantin{Use $\lrout \le \frac{1}{2L}$ instead of $\lrout \le \frac{1}{L}$ and you'd get a stronger bound:
    %\begin{align*}
    %    \EE_r\left[\|\hat\theta^{r+1}-\theta^*\|^2\right] &= \|\hat\theta^r-\theta^*\|^2 -2\lrout \EE_r\left[\<F_{m_r}(\theta^r),\hat\theta^r-\theta^*> \right]+ \lrout^2 \EE_r\left[\|F_{m_r}(\theta^r)\|^2\right]\\
    %    &\le \|\hat\theta^r-\theta^*\|^2 - \frac{\lrout}{L}\|F(\theta^r)\|^2+2\lrout \<F(\theta^r),\theta^r-\hat\theta^r> - \frac{\lrout}{L}\EE_r\left[\|F_{m_r}(\theta^r)\|^2\right].
    %\end{align*}
    %This will be helpful to remove the data heterogeneity assumption.}
    Rearranging the terms we have
    \begin{align*}
        \frac{\lrout}{2L}\|F(\theta^r)\|^2
        &\le \EE_r\left[\|\hat\theta^{r+1}-\theta^*\|^2\right] - \|\hat\theta^{r}-\theta^*\|^2 -\frac{\lrout}{2}\<F(\theta^r),\theta^r-\theta^*>+ 2L\lrout\|\theta^r-\hat\theta^r\|^2.
    \end{align*}
    After averaging over iterations from $r=0$ to $R-1$ we get
    \begin{align*}
        \frac{\lrout}{2L}\frac{1}{R}\sum_{r=0}^{R-1}\E{\|F(\theta^{r})\|^2} &\le \frac{\|\hat\theta^0-\theta^*\|^2}{R} + \frac{2L\lrout}{R}\sum_{r=0}^{R-1}\E{\|\theta^r-\hat\theta^r\|^2} - \frac{\lrout}{2R}\sum_{r=0}^{R-1}\E{\<F(\theta^r),\theta^r-\theta^*>}.
    \end{align*}
    Now we need to upper bound the third term. Using~\eqref{lemma1_async} we have 
    \begin{align*}
        \sum_{r=0}^{R-1}\E{\|\theta^r-\hat\theta^r\|^2} &= \sum_{r=0}^{R-1}\lrout^2\E{\left\|\sum_{m\in C^r}F_m(\theta^{\Prev(m,r)})\right\|^2}\\
        &\le M\lrout^2\sum_{r=0}^{R-1}\sum_{m\in C^r}\E{\|F_m(\theta^{\Prev(m,r)})\|^2} \\
        &\hspace{-2mm}\overset{\text{As.}\ref{as:coco}}{\le} \lrout^2 ML\sum_{r=0}^{R-1}\sum_{m\in C^r}\E{\<F_m(\theta^{\Prev(r,m)}),\theta^{\Prev(m,r)}-\theta^*>}\\
        &=\lrout^2 ML\sum_{r=0}^{R-1}\sum_{m\in C^r}\E{\<F(\theta^{\Prev(r,m)}),\theta^{\Prev(m,r)}-\theta^*>}.
    \end{align*}
    %\konstantin{I don't think it's necessary to use bounded dissimilarity above. You could have kept it as $\|F_m(\theta^{\Prev(r,m)})\|$, it would cancel with the terms from previous bounds.

    %Alternatively, we could also use bound $\E{\|F_m(\theta^{\Prev(r,m)})\|^2}\le L\E{\<F_m(\theta^{\Prev(r,m)}), \theta^{\Prev(r,m)} - \theta^*>}= L\E{\<F(\theta^{\Prev(r,m)}), \theta^{\Prev(r,m)} - \theta^*>}$.
    %}
    
    The term $\E{\<F(\theta^{\Prev(m,r)}),\theta^{\Prev(m,r)}-\theta^*>}$ appears in the right hand side $\tau_{\max}$ times at most. Indeed, it appears for all iterations between $\Prev(m,r)$ and $r$ which is upper bounded by $\tau_{\max}$. Thus, we have
    \[
        \sum_{r=0}^{R-1}\E{\|\theta^r-\hat\theta^r\|^2} \le \lrout^2ML\tau_{\max}\sum_{r=0}^{R-1}\E{\<F(\theta^r),\theta^{r}-\theta^*>}.
    \]
    If $\lrout \le \frac{1}{2L\sqrt{2M\tau_{\max}}}$, then we have 
    \[
        \sum_{r=0}^{R-1}\E{\|\theta^r-\hat\theta^r\|^2} \le \frac{1}{8L}\sum_{r=0}^{R-1}\E{\<F(\theta^{r}),\theta^r-\theta^*>}.
    \]
    Thus, we derive 
    \begin{align*}
        \frac{\lrout}{2L}\frac{1}{R}\sum_{r=0}^{R-1}\E{\|F(\theta^{r})\|^2} &\le \frac{\|\hat\theta^0-\theta^*\|^2}{R} + \frac{2L\lrout}{R}\frac{1}{8L}\sum_{r=0}^{R-1}\E{\<F(\theta^{r}),\theta^r-\theta^*>}\\
        &\quad-\frac{\lrout}{2R}\sum_{r=0}^{R-1}\E{\<F(\theta^r),\theta^r-\theta^*>}\\
        &\le \frac{\|\hat\theta^0-\theta^*\|^2}{R}.
   \end{align*}
    Finally, we get
    \[
    \min\limits_{r<R}\E{\|F(\theta^{r})\|^2} \le \frac{2L\|\hat\theta^0-\theta^*\|^2}{\lrout R}.
    \]
\end{proof}

\begin{remark}
    We highlight the fact that if we use the stepsize $\lrout=\frac{1}{2L\sqrt{2M\tau_{\max}}},$ then the convergence is
    \[
    \min\limits_{r<R}\E{\|F(\theta^{r})\|^2} \le \frac{4L^2\sqrt{2M\tau_{\max}}\|\hat\theta^0-\theta^*\|^2}{R}.
    \]
    We observe a square root dependency on $\tau_{\max}$. The same result has been recently derived  in homogeneous setting \cite{koloskov2022asharper} for vanilla Asynchronous SGD.
\end{remark}

\section{Byzantine-robust version}\label{appendix:byzantine_robust_version}

\paragraph{Preliminaries.} We assume that among $M$ clients participating in the training, there is a subset of clients $\cB$ called \emph{Byzantines}, i.e., clients that can (intentionally or not) deviate from the prescribed algorithm and are omniscient (i.e., they know the updates of other clients and aggregation rule applied by the server). More precisely, we assume that $[M] = \cG \sqcup \cB$, where $\cG$ denotes the set of regular clients, $|\cG| \eqdef G$, $|\cB| \eqdef B \leq \delta M$, where $\delta < \nicefrac{1}{2}$. The goal is to solve an instance of \eqref{eq:nonlinear_equation} with
\begin{equation}
    F(\theta) = \frac{1}{G}\sum\limits_{m\in \cG} F_m(\theta), \label{eq:Byzantine_problem}
\end{equation}
where operators $\{F_m\}_{m\in\cG}$ are defined as $F_m(\theta)= \nabla_1 f_m(\theta, w_m^*(\theta))$.

Since the standard averaging is vulnerable to Byzantine attacks, we use robust aggregation rules in the sense of the following definition.

\begin{definition}[$(\delta,c)$-robust aggregator \cite{karimireddy2021learning,karimireddy2022byzantine,gorbunov2022variance}]\label{def:robust_aggr}
    Let random vectors $\{x_1,\ldots,x_M\}$ are such that there exists a subset $\cG \subseteq [M]$ such that $|\cG| = G \geq (1-\delta)n$ where $\delta < \nicefrac{1}{2}$ and for some $\sigma \geq 0$ the following inequality holds: $\tfrac{1}{G(G-1)}\sum_{m, n \in \cG} \EE[\|x_m - x_n\|^2] \leq \sigma^2$ where the expectation is taken w.r.t.\ the randomness of $\{x_m\}_{m\in \cG}$. Then, vector $\hx$ is called $(\delta, c)$-Robust Aggregator ($(\delta, c)-\texttt{RAgg}$) for some $c > 0$ and denoted as $\hx = \texttt{RAgg}(x_1,\ldots, x_M)$ if the following condition holds:
    \begin{equation}
        \EE\left[\|\hx - \overline{x}\|^2\right] \leq c\delta\sigma^2, \label{eq:RAgg}
    \end{equation}
    where $\overline{x} = \tfrac{1}{G}\sum_{m\in \cG} x_i$. If, in addition, the computation of $\hx$ is independent of $\sigma^2$, $\hx$ is called $(\delta, c)$-Agnostic Robust Aggregator ($(\delta, c)-\texttt{ARAgg}$) and denoted as $\hx = \texttt{ARAgg}(x_1,\ldots, x_M)$.
\end{definition}

This definition is tight in a certain sense (see the details in \cite{karimireddy2021learning}) and is not satisfied for such defenses as Krum \cite{blanchard2017machine}, geometric median \cite{pillutla2022robust}, and coordinate-wise median \cite{chen2017distributed} that are known to be insufficient to ensure Byzantine-robustness \cite{baruch2019little, xie2020fall}. For the examples of aggregators satisfying Definition~\ref{def:robust_aggr} we refer to \cite{gorbunov2022variance}.

We also make an additional assumption related to the over-parameterization.
\begin{assumption}\label{as:strong_growth}
    We assume that any regular client $m \in \cG$ can compute an unbiased estimate $g_m(\theta)$ of $F_m(\theta)$, i.e., $\EE[g_m(\theta)] = F_m(\theta)$, and for any $\theta \in \R^d$
    \begin{equation}
        \EE\left[\|g_m(\theta)\|^2\right] \leq \rho_{\text{in}}\|F_m(\theta)\|^2. \label{eq:inner_SGC_condition}
    \end{equation}
    In addition, we assume that for any $\theta \in \R^d$ we have
    \begin{equation}
        \frac{1}{G}\sum\limits_{m\in\cG} \|F_m(\theta)\|^2 - \|F(\theta)\|^2 \leq \ell_{\text{sim}}\langle F(\theta), \theta - \theta^* \rangle. \label{eq:outer_SGC_condition}
    \end{equation}
\end{assumption}

For example, inequality \eqref{eq:inner_SGC_condition} is satisfied with $\rho_{\text{in}} = 1$ when $g_m(\theta) = F_m(\theta)$, i.e., when regular clients compute $w_m(\theta^r)$ and $F_m(\theta^r)$ exactly at each step, and can be satisfied when the clients have over-parameterized data (locally). Inequality \eqref{eq:outer_SGC_condition} is satisfied with $\ell_{\text{sim}} \leq L$ whenever Assumption~\ref{as:coco} holds. However, $\ell_{\text{sim}}$ can be much smaller than $L$ if local operators $\{F_m\}_{m \in \cG}$ are similar. 

\begin{algorithm}[t]
\caption{Byzantine-Robust Fine-tuning Followed by Global Gradient (BR-FFGG)}
\label{alg:BR_ftfg}
\begin{algorithmic}[1]
\State \textbf{Input:} initialization $\theta^0\in \R^{d}$, stepsizes $\lrin, \lrout>0$, $(\delta, c)$-\texttt{ARAgg}
    \For{$r = 0,1,2,\dots$}
		\For{client $m\in \cG$}
			\State Compute and send $g_m(\theta^r)$ -- an unbiased estimate of $F_m(\theta^r)$ 
            \EndFor
            \For{client $m\in \cB$}
			\State Send $g_m(\theta^r) = *$ \Comment{Byzantine clients can send anything to the server} 
		\EndFor
        \State $\theta^{r+1} = \theta^r - \gamma_{\theta}\texttt{ARAgg}(g_1(\theta^{r}), \ldots, g_M(\theta^{r}))$
    \EndFor
\end{algorithmic}	
\end{algorithm}

Finally, we made an extra assumption on structured non-monotonicity of operators $\{F_{m}\}_{m\in \cG}$.

\begin{assumption}\label{as:quasi_strong_monotonicity}
    We assume that for all $m \in \cG$ operators $F_m$ are $\mu$-quasi strongly monotone, i.e., for all $\theta \in \R^d$ and $\theta^*$ such that $F_m(\theta^*)$ we have
    \begin{equation}
        \langle F_m(\theta), \theta - \theta^* \rangle \geq  \mu\|\theta - \theta^*\|^2. \label{eq:QSM}
    \end{equation}
\end{assumption}

Standard strong monotonicity, i.e., $\langle F_m(\theta_1) - F_m(\theta_2), \theta_1 - \theta_2 \rangle \geq \mu\|\theta_1 - \theta_2\|^2$ for any $\theta_1, \theta_2 \in \R^d$, implies condition from \eqref{eq:QSM} \cite{mertikopoulos2019learning, song2020optimistic, loizou2021stochastic} but the opposite implication is not always true. Moreover, as it is shown in \cite{loizou2021stochastic}, an operator can be non-monotone but quasi-strongly monotone.

\begin{theorem}\label{thm:Byz_convergence}
    Let Assumptions~\ref{as:coco}, \ref{as:overparam_person}, \ref{as:strong_growth}, \ref{as:quasi_strong_monotonicity} hold. Assume that
    \begin{align}
        \delta &\leq \frac{\mu}{2c\left(\left(\rho_{\text{in}} + \frac{1}{G-1}\right)\ell_{\text{sim}} + (\rho_{\text{in}} - 1)L\right)}, \label{eq:Byz_case_condition_on_delta}\\
        \gamma_\theta &\leq \frac{1}{4\left(\frac{(\rho_{\text{in}}-1)(L+\ell_{\text{sim}})}{G} + L + 2c\delta \left(\left(\rho_{\text{in}} + \frac{1}{G-1}\right)\ell_{\text{sim}} + (\rho_{\text{in}} - 1)L\right)\right)}. \label{eq:Byz_case_condition_on_gamma}
    \end{align}
    Then, for any $R \geq 0$ the iterates produced by Algorithm~\ref{alg:BR_ftfg} satisfy 
    \begin{equation}
        \EE\left[\|\theta^R - \theta^*\|^2\right] \leq \left(1 - \frac{\gamma_\theta\mu}{2}\right)^R\|\theta^0 - \theta^*\|^2. \label{eq:convergence_Byzantine_case}
    \end{equation}
\end{theorem}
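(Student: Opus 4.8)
The plan is to run a one-step Lyapunov analysis on $\|\theta^r-\theta^*\|^2$, show it contracts by the factor $1-\gamma_\theta\mu/2$ in every round, and then unroll the recursion with the tower rule to obtain \eqref{eq:convergence_Byzantine_case}. Write $\hg^r=\texttt{ARAgg}(g_1(\theta^r),\dots,g_M(\theta^r))$ for the aggregated update and $\og^r=\tfrac1G\sum_{m\in\cG}g_m(\theta^r)$ for the average over the \emph{regular} clients, so the update is $\theta^{r+1}=\theta^r-\gamma_\theta\hg^r$. Expanding the square and taking the expectation $\EE_r$ conditioned on $\theta^r$ gives
\[
\EE_r\|\theta^{r+1}-\theta^*\|^2=\|\theta^r-\theta^*\|^2-2\gamma_\theta\EE_r\<\hg^r,\theta^r-\theta^*>+\gamma_\theta^2\EE_r\|\hg^r\|^2.
\]
Since $\EE_r[\og^r]=F(\theta^r)$ by unbiasedness \eqref{eq:inner_SGC_condition} together with $F=\tfrac1G\sum_{m\in\cG}F_m$, the decomposition $\hg^r=\og^r+(\hg^r-\og^r)$ turns the inner product into the signal $\<F(\theta^r),\theta^r-\theta^*>$ plus an aggregation-bias term. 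The core of the proof is to control the two error quantities $\EE_r\|\og^r\|^2$ and $\EE_r\|\hg^r-\og^r\|^2$ \emph{entirely} in terms of $\<F(\theta^r),\theta^r-\theta^*>$, which is what makes cocoercivity indispensable.

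First I would bound the second moment of the regular-client average. Using independence and the inner strong-growth bound \eqref{eq:inner_SGC_condition} as $\EE_r\|g_m\|^2-\|F_m\|^2\le(\rho_{\text{in}}-1)\|F_m\|^2$, then the outer similarity bound \eqref{eq:outer_SGC_condition} to pass from $\tfrac1G\sum_m\|F_m(\theta^r)\|^2$ to $\|F(\theta^r)\|^2+\ell_{\text{sim}}\<F(\theta^r),\theta^r-\theta^*>$, and finally the averaged cocoercivity $\|F(\theta^r)\|^2\le L\<F(\theta^r),\theta^r-\theta^*>$ (which follows from Assumption~\ref{as:coco}, the fact that $F_m(\theta^*)=0$ under Assumption~\ref{as:overparam_person}, and Jensen), yields $\EE_r\|\og^r\|^2\le\bigl(L+\tfrac{(\rho_{\text{in}}-1)(L+\ell_{\text{sim}})}{G}\bigr)\<F(\theta^r),\theta^r-\theta^*>$. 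For the aggregation error I would invoke Definition~\ref{def:robust_aggr}: it suffices to exhibit a $\sigma_r^2$ dominating the expected pairwise dispersion $\tfrac{1}{G(G-1)}\sum_{m,n\in\cG}\EE_r\|g_m-g_n\|^2$. Expanding this sum into within-client variances (giving a $\tfrac1G\sum_m(\rho_{\text{in}}-1)\|F_m\|^2$ contribution) and a between-client dispersion $\tfrac{1}{G-1}\sum_m\|F_m-F\|^2$, then reusing \eqref{eq:inner_SGC_condition}, \eqref{eq:outer_SGC_condition} and cocoercivity, produces
\[
\EE_r\|\hg^r-\og^r\|^2\le c\delta\sigma_r^2\le 2c\delta\Bigl[\bigl(\rho_{\text{in}}+\tfrac{1}{G-1}\bigr)\ell_{\text{sim}}+(\rho_{\text{in}}-1)L\Bigr]\<F(\theta^r),\theta^r-\theta^*>,
\]
where the bracket is exactly the quantity $K$ appearing in \eqref{eq:Byz_case_condition_on_delta}; the emergence of the $\tfrac{1}{G-1}$ factor is a small but delicate point in the dispersion computation.

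With both errors written as multiples of $\<F(\theta^r),\theta^r-\theta^*>$, the stepsize choice \eqref{eq:Byz_case_condition_on_gamma}, i.e.\ $\gamma_\theta\le 1/(4D)$ with $D$ the bracketed quantity there, lets me absorb $\gamma_\theta^2\EE_r\|\hg^r\|^2\le 2\gamma_\theta^2 D\<F(\theta^r),\theta^r-\theta^*>\le\tfrac{\gamma_\theta}{2}\<F(\theta^r),\theta^r-\theta^*>$ after the split $\|\hg^r\|^2\le 2\|\og^r\|^2+2\|\hg^r-\og^r\|^2$. Collecting terms leaves
\[
\EE_r\|\theta^{r+1}-\theta^*\|^2\le\|\theta^r-\theta^*\|^2-\tfrac{3\gamma_\theta}{2}\<F(\theta^r),\theta^r-\theta^*>-2\gamma_\theta\EE_r\<\hg^r-\og^r,\theta^r-\theta^*>.
\]
It then remains to dominate the bias cross-term by the available negative budget: bounding it with Cauchy--Schwarz and Young, substituting $\EE_r\|\hg^r-\og^r\|^2\le 2c\delta K\<F(\theta^r),\theta^r-\theta^*>$, and invoking $\mu$-quasi-strong monotonicity \eqref{eq:QSM} to convert leftover $\<F(\theta^r),\theta^r-\theta^*>$ into $\mu\|\theta^r-\theta^*\|^2$, so that the condition \eqref{eq:Byz_case_condition_on_delta} on $\delta$ forces the net coefficient of $\|\theta^r-\theta^*\|^2$ down to $1-\gamma_\theta\mu/2$. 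Unrolling over $r=0,\dots,R-1$ and taking full expectation gives \eqref{eq:convergence_Byzantine_case}.

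The main obstacle will be this last step. Because $\texttt{ARAgg}$ is \emph{biased} (its conditional mean is not $F(\theta^r)$), the cross-term $\EE_r\<\hg^r-\og^r,\theta^r-\theta^*>$ does not vanish and must be neutralized without destroying the contraction. The delicate bookkeeping is to partition the $-2\gamma_\theta\<F(\theta^r),\theta^r-\theta^*>$ budget so that one portion supplies the $-\gamma_\theta\mu/2$ contraction through quasi-strong monotonicity while the remainder, via cocoercivity and Young, absorbs the bias, and to do this sharply enough that the admissible $\delta$ matches \eqref{eq:Byz_case_condition_on_delta} rather than a more conservative constant; a naive Cauchy--Schwarz estimate loses a constant factor here, so the balancing of the Young parameters is where the care is required. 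The secondary technical point, already flagged above, is the exact evaluation of the pairwise dispersion $\sigma_r^2$ (in particular the $\tfrac{1}{G-1}$ term) so that the coefficients $K$ and $D$ in \eqref{eq:Byz_case_condition_on_delta}--\eqref{eq:Byz_case_condition_on_gamma} come out precisely as stated.
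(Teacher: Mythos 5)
Your proposal follows essentially the same route as the paper's proof: the same quantities $\hg^r$ and $\og^r$, the same signal-plus-aggregation-bias decomposition of the update, the same three key estimates (the second moment of $\og^r$ via independence, inner growth, outer similarity and cocoercivity; the pairwise dispersion $\mathrm{PV}_r$ fed into the $(\delta,c)$-robust-aggregator definition; and a Young-inequality treatment of the biased cross term), and the same finish via quasi-strong monotonicity and unrolling. The only difference is bookkeeping order --- you absorb the quadratic term $\gamma_\theta^2\EE_r\|\hg^r\|^2$ using the stepsize condition before applying Young to the bias cross term, whereas the paper applies Young first --- which is immaterial to the argument.
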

\begin{proof}
    To simplify the derivation, we introduce new vectors: $\hg^r = \texttt{ARAgg}(g_1(\theta^{r}), \ldots, g_M(\theta^{r}))$ and $\og^r = \tfrac{1}{G}\sum_{m\in \cG}g_m(\theta^r)$. Then, $\theta^{r+1} = \theta^r - \gamma_{\theta}\hg^r$ and
    \begin{align*}
        \|\theta^{r+1} - \theta^*\|^2 &= \|\theta^r - \theta^*\|^2 - 2\gamma_\theta \langle \hg^r, \theta^r - \theta^* \rangle + \gamma_\theta^2 \|\hg^r\|^2\\
        &\leq \|\theta^r - \theta^*\|^2 - 2\gamma_\theta \langle \og^r, \theta^r - \theta^* \rangle - 2\gamma_\theta \langle \hg^r - \og^r, \theta^r - \theta^* \rangle \\
        &\quad + 2\gamma_\theta^2 \|\og^r\|^2 + 2\gamma_\theta^2 \|\hg^r - \og^r\|^2\\
        &\leq \left(1 + \frac{\gamma_\theta\mu}{2}\right)\|\theta^r - \theta^*\|^2 - 2\gamma_\theta \langle \og^r, \theta^r - \theta^* \rangle + 2\gamma_\theta^2 \|\og^r\|^2 \\
        &\quad + \gamma_\theta\left(2\gamma_\theta + \frac{1}{2\mu}\right) \|\hg^r - \og^r\|^2,
    \end{align*}
    where in the second step we use $\|a+b\|^2 \leq 2\|a\|^2 + 2\|b\|^2$ and, in the last step, we apply $\langle a, b \rangle \leq \frac{\alpha}{2}\|a\|^2 + \frac{1}{2\alpha}\|b\|^2$ that hold for any $a,b\in \R^d$ and $\alpha > 0$. Taking the expectation $\EE_r[\cdot]$ w.r.t.\ the randomness coming from the $r$-th step and using $\EE_r[\og^r] = F(\theta^r)$, we derive
    \begin{align}
        \EE_r\left[\|\theta^{r+1} - \theta^*\|^2\right] &\leq \left(1 + \frac{\gamma_\theta\mu}{2}\right)\|\theta^r - \theta^*\|^2 - 2\gamma_\theta \langle F(\theta^r), \theta^r - \theta^* \rangle + 2\gamma_\theta^2 \EE_r\left[\|\og^r\|^2\right] \notag \\
        &\quad + \gamma_\theta\left(2\gamma_\theta + \frac{1}{2\mu}\right) \EE_r\left[\|\hg^r - \og^r\|^2\right]. \label{eq:mndkjnfvbsacj}
    \end{align}
    Next, we use independence of $\{g_m(\theta^r)\}_{m\in \cG}$:
    \begin{align}
        \EE_r\left[\|\og^r\|^2\right] &= \EE_r\left[\|\og^r - F(\theta^r)\|^2\right] + \|F(\theta^r)\|^2 \notag\\
        &= \frac{1}{G^2}\sum\limits_{m \in \cG}\EE_r\left[\|g_m(\theta^r) - F_m(\theta^r)\|^2\right] + \|F(\theta^r)\|^2\notag\\
        &\overset{\eqref{eq:inner_SGC_condition}}{\leq} \frac{\rho_{\text{in}}-1}{G^2}\sum\limits_{m \in \cG}\|F_m(\theta^r)\|^2 + \|F(\theta^r)\|^2 \notag \\
        &= \frac{\rho_{\text{in}}-1}{G}\left(\frac{1}{G}\sum\limits_{m \in \cG}\|F_m(\theta^r)\|^2 - \|F(\theta^r)\|^2\right) + \left(1 + \frac{\rho_{\text{in}} - 1}{G}\right)\|F(\theta^r)\|^2 \notag\\
        &\overset{\eqref{eq:cocoercive}, \eqref{eq:outer_SGC_condition}}{\leq} \left(\frac{(\rho_{\text{in}}-1)(L+\ell_{\text{sim}})}{G} + L\right)\langle F(\theta^r), \theta^r - \theta^* \rangle. \label{eq:klscmkdcnsjdcjhsbjcd}
    \end{align}
    To upper-bound $\EE_r\left[\|\hg^r - \og^r\|^2\right]$, we need to estimate $\text{PV}_r \eqdef \frac{1}{G(G-1)}\sum_{m,n\in\cG}\EE_r\left[\|g_m(\theta^r) - g_n(\theta^r)\|^2\right]$:
    \begin{align}
        \text{PV}_r &= \frac{1}{G(G-1)}\sum_{\overset{m,n\in\cG}{m\neq n}}\EE_r\left[\|g_m(\theta^r)\|^2 + \|g_n(\theta^r)\|^2\right]  - \frac{2}{G(G-1)}\sum\limits_{\overset{m,n\in\cG}{m\neq n}}\langle F_m(\theta^r), F_n(\theta^r)\rangle \notag\\
        &= \frac{2}{G}\sum\limits_{m\in \cG} \EE_r\left[\|g_m(\theta^r)\|^2\right]  - \frac{2}{G-1}\sum\limits_{m\in\cG}\left\langle F_m(\theta^r), \frac{1}{G}\sum\limits_{\overset{n\in\cG}{n\neq m}}F_n(\theta^r)\right\rangle \notag\\
        &= \frac{2}{G}\sum\limits_{m\in \cG} \EE_r\left[\|g_m(\theta^r)\|^2\right]  + \frac{2}{G(G-1)}\sum\limits_{m\in \cG}\|F_m(\theta^r)\|^2  - \frac{2}{G-1}\sum\limits_{m\in\cG}\langle F_m(\theta^r), F(\theta^r) \rangle \notag\\
        &\overset{\eqref{eq:inner_SGC_condition}}{\leq} \frac{2((G-1)\rho_{\text{in}}+1)}{G(G-1)}\sum\limits_{m\in \cG}\|F_m(\theta^r)\|^2  - \frac{2G}{G-1}\|F(\theta^r)\|^2 \notag\\
        &= \frac{2((G-1)\rho_{\text{in}}+1)}{G-1}\left(\frac{1}{G}\sum\limits_{m\in \cG}\|F_m(\theta^r)\|^2 - \|F(\theta^r)\|^2\right) + 2(\rho_{\text{in}} - 1)\|F(\theta^r)\|^2 \notag\\
        &\overset{\eqref{eq:outer_SGC_condition},\eqref{eq:cocoercive}}{\leq} 2\left(\left(\rho_{\text{in}} + \frac{1}{G-1}\right)\ell_{\text{sim}} + (\rho_{\text{in}} - 1)L\right)\langle F(\theta^r), \theta^r - \theta^* \rangle. \label{eq:ncbshjdbchjdsbfvhj}
    \end{align}
    In view of Definition~\ref{def:robust_aggr}, this upper bound gives us 
    \begin{align}
       \EE_r\left[\|\hg^r - \og^r\|^2\right] &\leq c\delta  \text{PV}_r \overset{\eqref{eq:ncbshjdbchjdsbfvhj}}{\leq} 2c\delta\left(\left(\rho_{\text{in}} + \frac{1}{G-1}\right)\ell_{\text{sim}} + (\rho_{\text{in}} - 1)L\right)\langle F(\theta^r), \theta^r - \theta^* \rangle. \label{eq:ndvjkfnvjkdnfjvkdf}
    \end{align}
    Plugging \eqref{eq:klscmkdcnsjdcjhsbjcd} and \eqref{eq:ndvjkfnvjkdnfjvkdf} in \eqref{eq:mndkjnfvbsacj}, we get
    \begin{align}
        \EE_r\left[\|\theta^{r+1} - \theta^*\|^2\right] &\leq \left(1 + \frac{\gamma_\theta\mu}{2}\right)\|\theta^r - \theta^*\|^2 \notag\\
        &\quad - 2\gamma_{\theta}\left(1 - \gamma_\theta\left(\frac{(\rho_{\text{in}}-1)(L+\ell_{\text{sim}})}{G} + L\right)\right)\langle F(\theta^r), \theta^r - \theta^* \rangle\\
        &\quad + 2c\delta\gamma_\theta\left(2\gamma_\theta + \frac{1}{2\mu}\right)\left(\left(\rho_{\text{in}} + \frac{1}{G-1}\right)\ell_{\text{sim}} + (\rho_{\text{in}} - 1)L\right)\langle F(\theta^r), \theta^r - \theta^* \rangle \notag\\
        &\overset{\eqref{eq:Byz_case_condition_on_delta}}{\leq}  \left(1 + \frac{\gamma_\theta\mu}{2}\right)\|\theta^r - \theta^*\|^2\notag\\
        &\quad - 2\gamma_{\theta}\left(\frac{3}{4} - \gamma_\theta\left(\frac{(\rho_{\text{in}}-1)(L+\ell_{\text{sim}})}{G} + L\right)\right)\langle F(\theta^r), \theta^r - \theta^* \rangle \notag\\
        &\quad + 4c\delta\gamma_\theta^2\left(\left(\rho_{\text{in}} + \frac{1}{G-1}\right)\ell_{\text{sim}} + (\rho_{\text{in}} - 1)L\right)\langle F(\theta^r), \theta^r - \theta^* \rangle \notag\\
        &\overset{\eqref{eq:Byz_case_condition_on_gamma}}{\leq} \left(1 + \frac{\gamma_\theta\mu}{2}\right)\|\theta^r - \theta^*\|^2 - \gamma_\theta \langle F(\theta^r), \theta^r - \theta^* \rangle \notag \\
        &\overset{\eqref{eq:QSM}}{\leq} \left(1 - \frac{\gamma_\theta\mu}{2}\right)\|\theta^r - \theta^*\|^2. \notag
    \end{align}
    Taking the full expectation from the above inequality and unrolling the recurrence, we obtain the result.
\end{proof}

The derived result establishes linear convergence to the exact solution (asymptotically, in expectation) with the possible presence of Byzantine clients. For simplicity, let us consider the case when $\rho_{\text{in}} = 1$. As mentioned earlier, this case corresponds to the exact computation of $F_m(\theta^r)$ for all $m \in \cG$. Then, conditions \eqref{eq:Byz_case_condition_on_delta} and \eqref{eq:Byz_case_condition_on_gamma} reduce to
\begin{gather}
        \delta \leq \frac{\mu}{2c\left(\left(1 + \frac{1}{G-1}\right)\ell_{\text{sim}}\right)}, \quad
        \gamma_\theta \leq \frac{1}{4\left(L + 2c\delta \left(1 + \frac{1}{G-1}\right)\ell_{\text{sim}}\right)}. \notag
\end{gather}
In the worst case, $\ell_{\text{sim}} = L$ that can be much larger than $\mu$ and, thus, implies that $\delta$ should be very small for the derived result. In the context of minimization, similar pathological behavior is observed in \cite{karimireddy2022byzantine, gorbunov2022variance}. In particular, the existing SOTA theoretical results under the assumption $\frac{1}{G}\sum_{m\in \cG} \|\nabla f_m(\theta, w)\|^2 \leq B^2\|\nabla f(\theta, w)\|^2$ \cite{karimireddy2022byzantine, gorbunov2022variance} require $\delta \lesssim \nicefrac{1}{c B^2}$. However, when all functions $\{f_m\}_{m\in\cG}$ are $L$-smooth, have shared minimum, and $f$ is $\mu$-strongly convex, then, in the worst case, $B^2 = \nicefrac{L}{\mu}$. Up to numerical constants and the differences between definitions of $L$ and $\mu$ in our work and in \cite{karimireddy2022byzantine, gorbunov2022variance}, we get the same worst-case upper-bound for $\delta$.

However, when $\ell_{\text{sim}} \ll L$, our condition on $\delta$ can be very mild. For example, when the data on workers is similar to a certain extent, then $\ell_{\text{sim}}$ can be of the order of $\mu$ or even smaller. In this case, our condition on $\delta$ can become void, and the upper bound for $\delta$ will be determined by the type of aggregation rule (see the examples in \cite{karimireddy2022byzantine}).

\section{Experiments}\label{appendix:experiments}

%\subsection{Linear regression experiments}

%In this section we provide more detailed description of the setting used in Sections~\ref{exp:change_tau}-\ref{exp:comparison}. All the experiments of these sections are performed on the problem formulated in Example~\ref{ex:least_squares}, namely:
%\[
%\psi_m(\theta) = \frac{1}{2}\|\mH_m\theta-b_m\|^2, \quad f_m(\theta, w) = \psi_m(\theta) + \frac{1}{2}\|\mA_m\theta+\mB_m w - y_m\|^2,
%\]
%where $\mH_m, \mA_m \in\R^{n\times d_{\theta}}, \mB_m\in \R^{n\times d_w}$, and $n=10000$, $d_{\theta} = 100, d_w=50.$ All matrices are generated from uniform distribution on $[0,1]$, and then divided by the second dimension(i.e. $d_{\theta}$ or $d_w$). 

%We use Scipy \cite{SciPy} implementation of Conjugate Gradient method in Section~\ref{exp:change_tau} to solve local subproblem. For FFGG method all stepsizes are set according to Example~\ref{ex:least_squares}. For L2GD \cite{hanzely2020federated} and Scaffold \cite{karimireddy2019scaffold} methods we use theoretical stepsizes in corresponding papers. For Local GD we use stepsize $\frac{1}{L\tau}$ where $L$ is a cocoercivity constant and $\tau$ is the number of local steps.

\subsection{Expertimental details for real-world datasets}

As mentioned in the main part, we adhere to the experimental arrangement outlined in Pillutla et al. (2022) for consistency \cite{pillutla2022federated}. To provide comprehensive information, we present the setup below.

Our experiments are conducted using two datasets encompassing two modalities, specifically images, and text. These datasets feature a natural division of data that is non-i.i.d., mirroring the heterogeneity of data encountered in real-world Federated Learning scenarios. We provide a detailed account of the experimental setup and hyperparameters employed. We base our implementation on the publicly available code provided by Pillutla et al. (2022) \footnote{\url{https://github.com/facebookresearch/FL_partial_personalization}}.

\subsection{Datasets, tasks and models}\label{sec:a:expt:datasets}
We explore two tasks inspired by real-world applications of Federated Learning: StackOverflow for next-word prediction and EMNIST for character recognition.

As part of our approach to partial personalization, we consider three partitioning schemes:

\begin{itemize}
    \item \textit{Input layer personalization}: This architectural design focuses on customizing the input layer to learn personalized representations, while the remaining parts of the model are shared among all clients. Specifically, in the case of predicting the next word, we personalize the initial transformer layer instead of the embedding layer.

    \item \textit{Output layer personalization}: With this design, we aim to learn a shared representation while customizing the prediction layer. In the case of a transformer model, we adapt the final transformer layer instead of the output layer to achieve personalization.

    \item \textit{Adapter personalization}: In this scheme, each client utilizes a personalized low-rank adapter to fine-tune the global model.
\end{itemize}

These partitioning schemes serve as strategies for incorporating partial personalization into Federated Learning, allowing for different levels of customization while leveraging a shared model across clients.

\subsubsection{StackOverflow for next word prediction}
\paragraph{Dataset.}
The dataset used for our task is derived from Stack Overflow, a popular programming question-answer website. It consists of questions and corresponding answers. In the next word prediction task, the objective is to forecast the subsequent word based on a partial sequence of words within a question or answer. This particular task serves as a valuable open-source benchmark for evaluating next-word prediction capabilities in mobile keyboards. For our experiments, we utilize the StackOverflow dataset made available by TensorFlow Federated\footnote{\url{https://www.tensorflow.org/federated}}.

\paragraph{Client distributions.}
Each client in our study corresponds to an individual user on Stack Overflow, and the data available to each client comprises the questions and answers posted by that specific user. To ensure an adequate amount of data for analysis, we only include clients with a minimum of $100$ training sequences and $10$ testing sequences. Here, a sequence refers to either a question or an answer posted by the user. We further narrow down the dataset by utilizing a fixed subsample of $1000$ clients.

Consistent with the approach described in Reddi et al. \cite{reddi2020adaptive}, we limit the vocabulary to the top $10000$ most frequently occurring words in the dataset. Additionally, we apply padding and truncation techniques to standardize the length of each sequence within each client, setting it to $20$. Furthermore, we consider a maximum of $1000$ training sequences per client during our analysis.

\paragraph{Model.}
For our implementation, we employ a transformer model \cite{vaswani2017attention} that is similar in size to BERT Mini \cite{turc2019well}. The model consists of $4$ transformer blocks, and each self-attention layer is equipped with $4$ attention heads. The transformer hidden dimension is set to $256$, while the fully-connected hidden dimension is $1024$.

The model incorporates a causal language modeling head, which refers to a fully connected layer responsible for assigning scores to all possible vocabulary items, including special tokens. 

\paragraph{Loss function and evaluation metric.}
During the training phase, we utilize the causal language modeling objective. This means that for each partial sequence, we treat the task of predicting the next word as a multiclass classification problem and aim to minimize the cross-entropy loss.

For evaluation purposes, we employ the top-$1$ accuracy metric, which measures the accuracy of predicting the correct word from the proper $10000$-word vocabulary. This evaluation metric disregards special tokens such as padding, out-of-vocabulary terms, and beginning/end of sequence markers.

\subsubsection{GLDv2 for Visual Landmark Recognition}

\paragraph{Dataset.}
Google Landmarks Dataset v2 (GLDv2)~\cite{weyand2020google} is a large-scale image dataset. This dataset comprises pictures of well-known landmarks across the globe, all of which were captured and uploaded by contributors to Wikipedia. Although the image dimensions vary, the most prevalent size is $800$ by $600$ pixels.

The primary objective of the visual landmark recognition assignment is to pinpoint the landmark depicted in the image. This task mirrors real-life situations where individuals use their smartphones to take pictures of natural or architectural landmarks during their travels. We utilize the federated version of the GLDv2 dataset by \cite{hsu2020federated}, which includes 2028 landmarks and is provided by TensorFlow Federated.

\paragraph{Client distributions.}
Every client is associated with a specific Wikipedia user and includes all images contributed by that user. We only incorporate the 823 clients that have a minimum of 50 datapoints. We don't utilize the original test set from GLDv2 for evaluation, as it originates from distinct clients. Rather, we allocate 50\% of the data from each client to be used as a test set.

\paragraph{Model.}
We employ a ResNet-18~\cite{he2016deep} model, which has been pretrained on the ImageNet dataset~\cite{deng2009imagenet}. We use group normalization in place of batch normalization. All images are resized to dimensions of $224$ by $224$ pixels. Our training incorporates two data augmentations: a random cropping to a $256$ by $256$ size and a random horizontal flip. 

\paragraph{Loss function and evaluation metric.}
We use the cross-entropy loss. 
The model's effectiveness is evaluated based on its classification accuracy.

\begin{table}[!t]
 \caption{\small{
    Hyperparameters for each dataset/task.
    }}
\label{table:expt:hyperparam}
\begin{center}
\begin{adjustbox}{max width=\linewidth}
\small
\small
\begin{tabular}{ccccc}
\toprule
& \textbf{Hyperparameter} & \textbf{StackOverflow} &
\textbf{GLDv2} &
\textbf{EMNIST}  \\
\midrule

\multirow{9}{*}{Common} 
& Batch size & 64 & 
64 & 
32 \\
& Devices per round & 50 &
50 &
10 \\
& Local epochs & 1 &
1 &
1\\
& Server optimizer & FedAdam &
FedAdam &
FedAvg  \\
& Client optimizer & SGD &
SGD &
SGD   \\
& Global scheduler & {Linear} &
Linear &
Exponential \\
& Warm-up & $10\%$ of rounds &
$10\%$ of rounds &
N/A \\
& LR decay rounds &
N/A &
N/A \\
& Max. grad. norm. & $0.1$ &
N/A &
N/A \\
\midrule
\multirow{3}{*}{\begin{tabular}{l} Non-personalized training \\ (step 1. of the pipeline)\end{tabular}} 
& \# Rounds & 1000 &
2500 &
2000   \\
& Server learning rate & $5\times 10^{-4}$ &
$2 \times 10^{-4}$ &
1.0  \\
& Client learning rate & $1$ &
$10^{-2}$ &
$0.5$ \\
\midrule
\multirow{3}{*}{\begin{tabular}{l} Personalized training \\ (step 2. of the pipeline)\end{tabular}} 
& \# Rounds & 500 &
600 &
500 \\
& Server learning rate & $5\times 10^{-5}$ &
$2 \times 10^{-5}$ &
1.0 \\
& Client learning rate & $10^{-1}$ &
$10^{-3}$ &
$10^{-2}$  \\
\midrule
\multirow{3}{*}{\begin{tabular}{l} Local finetuning \\ (step 3. of the pipeline)\end{tabular}} 
& \#Epochs & 5 &
5 &
5 \\
& Optimizer & SGD &
SGD &
SGD  \\
& Client learning rate & $10^{-1}$ &
$10^{-3}$ &
$10^{-2}$  \\
\bottomrule
\end{tabular} \end{adjustbox}
\end{center}
\end{table}

\subsubsection{EMNIST for Character Recognition}
\paragraph{Dataset.}
The EMNIST dataset \cite{cohen2017emnist} serves as a character recognition dataset. The objective is to identify images containing handwritten digits or letters, with a total of 62 possible options encompassing lowercase and uppercase letters (a-z, A-Z) as well as digits (0-9).

The images within the dataset are grayscale and have dimensions of $28 \times 28$, resulting in a total of 784 pixels. For our experiments, we utilize the EMNIST dataset made available by TensorFlow Federated.

\paragraph{Client Distributions.}
In our study, each client represents an individual "writer," referring to the human subject who contributed by hand-writing the digit or letter during the data collection phase. We specifically consider clients that have a minimum of $100$ training points and $25$ testing points, resulting in a total of $1114$ eligible clients for analysis.

\paragraph{Model.}
To address the smaller image size ($28\times28\times1$) in our dataset, which differs from the $224\times224\times3$ size that the original ResNet was designed for, we utilize a ResNet-18 model \cite{he2016deep}. However, we make two modifications to accommodate this smaller size.

Firstly, we adjust the convolutional kernel size in the first convolution layer from the original $7\times7$ to $3\times3$. This modification allows the model to process the input images appropriately.

Secondly, we omit the first pooling layer that is present in the original ResNet architecture. By removing this layer, we ensure compatibility with our image size and maintain the effectiveness of the model for our specific task.

\paragraph{Loss function and evaluation metric.}
We use the cross-entropy loss. 
We evaluate the performance of the model using its classification accuracy.

\subsection{Experimental pipeline and baselines} \label{sec:a:expt:pipeline}
There are three components in the training pipeline for all experiments:
\begin{enumerate}
    \item \label{pipeline:expt:non-pers}
    \textbf{Non-personalized federated pre-training}: The first step involves training a global model without any personalization using FedAvg that we use to initialize \eqref{pr:partial_personalization_operator}.
    \item \label{pipeline:expt:pfl}
    \textbf{Partially personalized federated training}: This is the main training step that we describe in detail in Section~\ref{exp:neural_networks}.
    \item \label{pipeline:expt:pfl-ft} 
    \textbf{Final finetuning}: The last step involves only finetuning the personalized parameter $w_m^*(\theta)$ for each client. 
\end{enumerate}

\subsection{Hyperparameters and evaluation Details}\label{sec:a:expt:hyperparameters}

The hyperparameters we use are given in Table~\ref{table:expt:hyperparam}. 

\paragraph{Evaluation metric.}
Our main evaluation metric for both next-word prediction and image classification tasks is the weighted average of test accuracy across all clients. This weighted average takes into account the number of test examples from each client, allowing for a comprehensive assessment of model performance. This evaluation metric is equivalent to an unweighted accuracy achieved by aggregating all the data centrally.

\paragraph{Rounds.}
We utilize the concept of communication rounds, which refers to the number of iterations during which the shared parameters are securely aggregated, to track the progress of each algorithm. In the case of non-personalized training, we set the number of rounds to $1000$ for StackOverflow, $2000$ for EMNIST, and $2500$ rounds for GLDv2.

For personalized training, we initialize the model with the parameters obtained from the non-personalized training and continue training for an additional $500$ rounds for both StackOverflow and EMNIST datasets, and $600$ rounds for GLDv2.
 
\paragraph{Devices per round.}
We assume that all devices are accessible and selections are made in a uniformly random manner. Consistent with the approach described in Reddi et al. \cite{reddi2020adaptive}, we choose $50$ devices per round for StackOverflow/GLDv2
and $10$ devices per round for EMNIST. This selection process applies to both non-personalized and personalized training for all datasets.

\paragraph{Local updates and minibatch size.}
In both non-personalized and personalized federated training, each selected device executes one epoch of mini-batch stochastic gradient descent locally. Following this, during the final fine-tuning stage of personalized training, we perform five epochs of training.

For the StackOverflow and GLDv2 dataset, we utilize a mini-batch size of 64 across all settings. As for the EMNIST dataset, the mini-batch size is set to 32 for all configurations.
    
\paragraph{Server and client optimizer details.}
For the EMNIST dataset, we utilize the FedAvg algorithm, while for the StackOverflow dataset
and GLDv2, we employ FedAdam \cite{reddi2020adaptive}. Additionally, we incorporate a global scheduler that applies a schedule to the client learning rates across multiple rounds, while maintaining a constant learning rate for each client within a round.

There are two types of schedulers we use: a linear scheduler and an exponential scheduler (referred to as "stepLR" in PyTorch). The linear scheduler involves a linear warmup, if applicable, until reaching the maximum learning rate, followed by a linear decay to 0. On the other hand, the exponential scheduler reduces the client learning rate by half after a fixed number of rounds.

\end{document}